\newtheorem{thm}{Theorem}
\newtheorem{prop}{Proposition}
\newtheorem{lemma}{Lemma}
\newtheorem{remark}{Remark}
\DeclareMathOperator*{\argmax}{arg\,max}
\DeclareMathOperator*{\argmin}{arg\,min}
\newcommand{\R}{\mathbb{R}}
\newcommand{\E}{\mathbb{E}}
\newcommand{\I}{\bm{I}}
\newcommand{\Ph}{\bm{\Phi}}
\newcommand{\phsv}{\bm{\varphi}^*}
\newcommand{\phsvt}{\bm{\varphi}^{*\top}}
\newcommand{\nv}{\bm{0}}
\newcommand{\Tr}{\text{\normalfont Tr}}
\newcommand{\sgn}{\text{\normalfont sign}}
\newcommand{\diag}{\text{\normalfont diag}}
\newcommand{\epsv}{\bm{\varepsilon}}
\newcommand{\fv}{\bm{f}}
\newcommand{\xv}{\bm{x}}
\newcommand{\xsv}{\bm{x}^*}
\newcommand{\xsvt}{\bm{x}^{*\top}}
\newcommand{\xvi}{\bm{x_i}}
\newcommand{\xpv}{\bm{x'}}
\newcommand{\X}{\bm{X}}
\newcommand{\Xs}{\bm{X}_v^*}
\newcommand{\yv}{\bm{y}}
\newcommand{\yvr}{\bm{y}_R}
\newcommand{\K}{\bm{K}}
\newcommand{\Kt}{\bm{\tilde{K}}}
\newcommand{\Kntk}{\bm{K}^{\star\text{\normalfont NTK}}}
\newcommand{\Ss}{\bm{S}}
\newcommand{\Sss}{\bm{S}_v^*}
\newcommand{\Ssst}{\bm{S}_v^{*\top}}
\newcommand{\ssv}{\bm{s}^*}
\newcommand{\ssvt}{\bm{s}^{*\top}}
\newcommand{\thetahv}{\bm{\hat{\theta}}}
\newcommand{\thetah}{\hat{\theta}}
\newcommand{\thetav}{\bm{\theta}}
\newcommand{\fhv}{\bm{\hat{f}}}
\newcommand{\fhs}{\hat{f}^*}
\newcommand{\fhsv}{\bm{\hat{f}}_v^*}
\newcommand{\ns}{n^*_v}
\newcommand{\fhiv}{\bm{\hat{f}_i}}
\newcommand{\Lt}{\tilde{L}}
\newcommand{\yiv}{\bm{y_i}}
\newcommand{\Fh}{\bm{\hat{F}}}
\newcommand{\Fhi}{\bm{\hat{F}_i}}
\newcommand{\Y}{\bm{Y}}
\newcommand{\fh}{\hat{f}}
\newcommand{\vv}{\bm{v}}
\newcommand{\A}{\bm{A}}
\newcommand{\B}{\bm{B}}
\newcommand{\N}{\mathcal{N}}
\newcommand{\kvt}{\bm{\tilde{k}}}
\newcommand{\IFk}{\bm{I}_{\bm{\tilde{F}}_k}}
\newcommand{\ehv}{\bm{\hat{e}}}
\newcommand{\eh}{\hat{e}}
\newcommand{\snr}{\text{\normalfont SNR}}
\newcommand{\Sigh}{\bm{\hat{\Sigma}}}
\newcommand{\Sig}{\bm{\Sigma}}
\newcommand{\asymrisk}{\overline{R_{\bm{X}}}}
\newcommand{\ssmmrisk}{\overline{R_{\bm{X}}}(\overline{\lambda_T})}
\newcommand{\optrisk}{\overline{R_{\bm{X}}}(\overline{\lambda^*})}
\newcommand{\zerorisk}{\overline{R_{\bm{X}}}(0)}
\begin{document}

\twocolumn[

\aistatstitle{Is Supervised Learning Really That Different From Unsupervised?}
%\aistatstitle{Are Supervised and Unsupervised Learning Really That Different?}

\aistatsauthor{Oskar Allerbo \And Thomas B. Schön}

\aistatsaddress{KTH Royal Institute of Technology \And  Uppsala University } ]

\begin{abstract}
We demonstrate how supervised learning can be decomposed into a two-stage procedure, where (1) all model parameters are selected in an unsupervised manner, and (2) the outputs~$\yv$ are added to the model, \emph{without changing the parameter values}. This is achieved by a new model selection criterion that---in contrast to cross-validation---can be used also without access to~$\yv$. For linear ridge regression, we bound the asymptotic out-of-sample risk of our method in terms of the optimal asymptotic risk. We also demonstrate that versions of linear and kernel ridge regression, smoothing splines, k-nearest neighbors, random forests, and neural networks, trained without access to $\yv$, perform similarly to their standard $\yv$-based counterparts. Hence, our results suggest that the difference between supervised and unsupervised learning is less fundamental than it may appear.
\end{abstract}

%We demonstrate how supervised learning can be decomposed into a two-stage procedure, where (1) all model parameters are selected in an unsupervised manner, and (2) the outputs y are added to the model, without changing the parameter values. This is achieved by a new model selection criterion that - in contrast to cross-validation - can be used also without access to y. For linear ridge regression, we bound the asymptotic out-of-sample risk of our method in terms of the optimal asymptotic risk. We also demonstrate that versions of linear and kernel ridge regression, smoothing splines, k-nearest neighbors, random forests, and neural networks, trained without access to y, perform similarly to their standard y-based counterparts. Hence, our results suggest that the difference between supervised and unsupervised learning is less fundamental than it may appear.

%Keywords: Model selection, Linear smoothers, Supervised learning

%We show how supervised learning can be decomposed into a two-stage procedure, where all model parameters are first selected in an unsupervised manner, before adding y. This indicates a strong connection between supervised and unsupervised learning.

%TL;DR We show how supervised models can be trained without using the information in y, which indicates a strong connection between supervised and unsupervised learning.
%TL;DR We show how supervised models can be trained without using the information in the outputs y. This indicates a strong and potentially quite fundamental and previously unknown connection between supervised and unsupervised learning.

\begin{figure}[t]
\center
\includegraphics[width=\columnwidth]{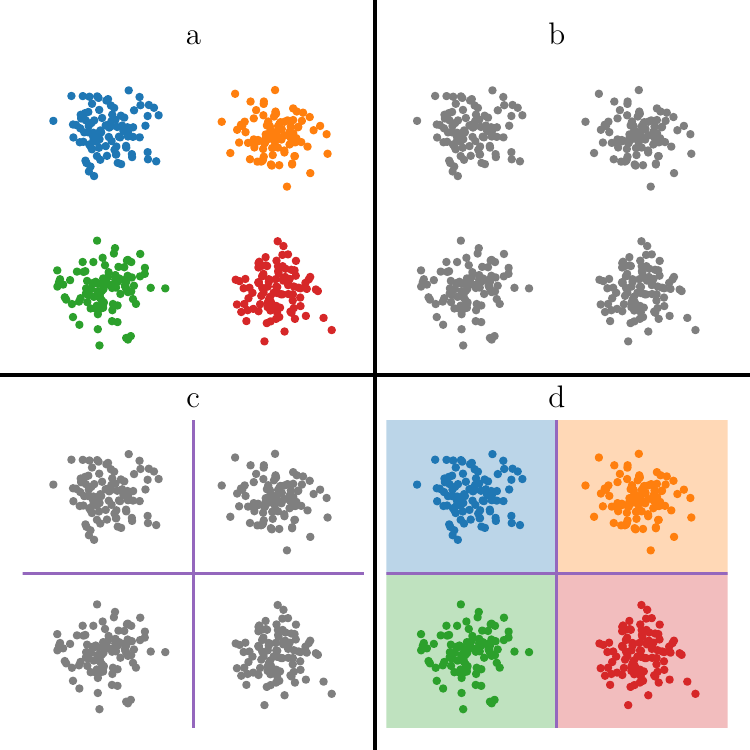}
\caption{Illustrating training without labels: After the label information ($a$) has been removed ($b$), an unsupervised algorithm is used to split the input space into four classes, separated by the purple cross, without using the label information ($c$). After fixing the class borders, the labels are revealed, enabling the model to label each class ($d$).}
\label{fig:cl_demo}
\vspace{-.3cm}
\end{figure}

\section{INTRODUCTION}
In contrast to supervised learning, unsupervised learning is performed without access to the outputs $\yv$, i.e.\ the labels (for classification) or the response (for regression). The success of unsupervised learning raises the question: Are the outputs really needed when training supervised models? Or is it possible to successfully train a supervised model without $\yv$, not including the labels/response until after all model parameters are selected and kept fixed? If this is indeed the case, it suggests that---under the hood---supervised learning is closely related to unsupervised learning.

In Figure \ref{fig:cl_demo}, we give an example where supervised classification can be replaced by a two-stage procedure, based on unsupervised learning. First, the model parameters are selected in an unsupervised manner, splitting the input data into four classes, with (yet) unknown labels. Then, the training labels are revealed, allowing the model to assign labels to the four classes, but without altering the borders between classes.

More generally, the key to training without $\yv$ is to express the model as a smoother, i.e.\ on the form $\fhs=\ssvt\yv$, where $\fhs=\fh(\xsv)$ denotes the prediction for covariate $\xsv$, and $\ssv$ denotes a smoother vector. This implies that each prediction is a weighted sum of the $n$ observations in $\yv$, where the weights are given by the smoother vector. 
If~$\ssv$ only depends on the covariates and not on $\yv$, the model is linear in $\yv$ (but \emph{not} necessarily in $\xv$). Such a model is referred to as a linear smoother. If $\ssv$ additionally depends on $\yv$, it is called a nonlinear smoother. For many models, including linear and kernel ridge regression, $\ssv$ follows immediately from the closed-form expression of the model, while for others, it is more complicated, but still possible. For instance, \citet{jeffares2024deep} recently showed how neural networks can be expressed as nonlinear smoothers based on the neural tangent kernel (NTK) framework. Since the smoother contains all the model parameters, whenever $\ssv$ is selected independently of~$\yv$, training the model is $\yv$-free. 

The possibility of training without $\yv$ has an interesting implication: It suggests that training a supervised model amounts to finding a good representation of $\xsv$, and that this can be done without the information in $\yv$. To obtain the predictions, $\yv$ must be included in the model; however, it is sufficient to do this \emph{after} the representation of $\xsv$ is constructed. It can even be hypothesized that the optimal model can always be written as a linear smoother, meaning that the optimal predictions are weighted sums of $\yv$. The weights depend on the training covariates $\X$ and $\xsv$. Hence, the weights depend on how the out-of-sample data relate to the training data, but not on $\yv$.

In this work, we introduce a model-independent way to train supervised models, including neural networks, as linear smoothers. We also demonstrate that the obtained models compare very well to the corresponding models trained with the information in $\yv$ in terms of performance. We thus illustrate both the close relation between supervised and unsupervised learning and the plausibility of the hypothesis above.

Our \textbf{main contribution} is to \textbf{demonstrate} that \textbf{standard supervised models} for classification and regression can be \textbf{successfully trained in an unsupervised manner}. We do this by
\begin{itemize}[leftmargin=6mm]
\item
introducing a \emph{model selection criterion} that is based on the out-of-sample model predictions, and thus works for any classification/regression model. We demonstrate that this criterion, in contrast to cross-validation, also \emph{works without access to $\yv$}.
\item
bounding the asymptotic risk of our method in terms of the optimal asymptotic risk for linear ridge regression.
\end{itemize}

The aim of this paper is to demonstrate the feasibility of training supervised models without access to~$\yv$, thereby highlighting the parallels between supervised and unsupervised learning. It is not intended to suggest that our method outperforms existing $\yv$-based approaches, nor that models should be trained without~$\yv$ in practical applications.

All proofs are deferred to Appendix~\ref{sec:proofs} and the code is available at \url{https://github.com/allerbo/training_without_y}.

\section{NOTATION}
We denote scalars with lowercase non-bold letters, vectors with lowercase bold letters, and matrices with uppercase bold letters. The in-sample (training) data are denoted as $\X\in\R^{n\times d}$ and $\yv\in \R^n$. The out-of-sample quantities are denoted with a star: $\xsv\in\R^d$ denotes any out-of-sample covariate, while $\Xs\in\R^{\ns \times d}$ denotes out-of-sample validation covariates. Predictions are denoted as $\fhv=\fhv(\X)\in\R^n$ (in-sample), $\fhs=\fhs(\xsv)\in\R$ (out-of-sample), and $\fhsv=\fhsv(\Xs)\in\R^{\ns}$ (validation), while the true function values are denoted as $\fv=\fv(\X)\in\R^n$ (in-sample) and $f^*=f^*(\xsv)\in\R$ (out-of-sample). $\Ph=\Ph(\X)\in\R^{n\times p}$ and $\phsv=\phsv(\xsv)\in\R^p$ denote p-dimensional feature expansions of the in- and out-of-sample data respectively, while $\Ss=\Ss(\X)\in\R^{n \times n}$, $\ssv=\ssv(\xsv,\X)\in\R^n$, and $\Sss=\Sss(\Xs,\X)\in\R^{\ns \times n}$ denote the in-sample, out-of-sample and validation smoother matrices (or vectors). $\I_d\in\R^{d\times d}$ denotes the identity matrix.

\section{RELATED WORK}
\textbf{Unsupervised Learning:}
Unsupervised learning comprises a broad class of methods that share the common goal of learning the structure of unlabeled data. Classical examples of unsupervised learning include \emph{clustering}, such as different flavors of k-means \citep{macqueen1967some,lloyd1982least}, spectral \citep{ng2001spectral,von2007tutorial}, density-based \citep{ester1996density,bhattacharjee2021survey}, and hierarchical \citep{murtagh2012algorithms,campello2013density} clustering; \emph{dimensionality reduction}, including different flavors of principal component analysis \citep{pearson1901liii, scholkopf1997kernel, zou2006sparse}, autoencoders \citep{kramer1991nonlinear,ng2011sparse, kingma2013auto, allerbo2021non} and graph-based methods \citep{maaten2008visualizing,mcinnes2018umap}; and different approaches to  (non-negative) \emph{matrix factorization} \citep{lee2000algorithms,mnih2007probabilistic,koren2009matrix}. Another example of unsupervised learning is (deep) \emph{generative models}, including variational autoencoders \citep{kingma2013auto}, generative adversarial networks \citep{goodfellow2014generative}, normalizing flows \citep{rezende2015variational}, and diffusion models \citep{sohl2015deep}, where the goal is to learn the distribution of the data, to enable sampling of realistic, synthetic data.

\textbf{Selfsupervised Learning:}
In self-supervised learning, rather than using externally provided training labels, the supervisory signal is constructed from the covariate data. This is often accomplished by augmenting or transforming the data to create new samples that are related by construction. Contrastive learning (see e.g. \citet{chen2020simple}, \citet{he2020momentum}, and \citet{wang2020understanding}) enforces data generated from the same instance to have similar latent representations, while instance discrimination (see e.g. \citet{wu2018unsupervised}, and \citet{ibrahim2024occam}) assigns pseudo-labels to the augmented data, where the data generated from the same original instance share labels. Using self-supervised methods for identifying the data-generating process has been studied by, e.g., \citet{zimmermann2021contrastive}, \citet{hyvarinen2019nonlinear}, and \citet{reizinger2024cross}, where the latter two use non-linear independent component analysis (see e.g.  \citet{hyvarinen2023nonlinear} for an overview). Our work differs from self-supervised learning in that we do not use any supervisory signal during training (not even one based on augmented data). Furthermore, we do not make any assumptions about a generating process that we try to identify.

\textbf{Model Selection without Validation Data:}
Hyperparameter selection is usually based on some sort of cross-validation, i.e.\ by training the model on parts of the data and validating it on other parts. There are, however, many methods for selecting hyperparameters without evaluating the performance on validation data, but, in contrast to our work, they still use the outputs~$\yv$. The most classic methods are probably Mallows's $C_p$ \citep{mallows1964choosing}, the Akaike information criterion (AIC) \citep{akaike1973information}, and the Bayesian information criterion (BIC) \citep{schwarz1978estimating}. They all estimate the in-sample prediction error, but since then, many alternatives have been proposed. These include approaches based on
the gradient of the loss function \citep{mahsereci2017early};
estimated marginal likelihood \citep{duvenaud2016early,lyle2020bayesian};
stability of predictions \citep{lange2002stability, yuan2025early};
the neural tangent kernel \citep{jacot2018neural,xu2021knas, chen2021neural, zhu2022generalization};
unlabeled test data \citep{garg2021ratt,deng2021labels,peng2024energy};
training speed \citep{ru2021speedy,zhang2024mote}; 
topological properties of the model \citep{corneanu2020computing,ballester2024predicting}; and
desired properties of the inferred function \citep{allerbo2022bandwidth}.

\textbf{Training with Random Labels:}
One way to make a model independent of $\yv$ is to replace $\yv$ with a random vector. This was investigated by \citet{zhang2016understanding}, who showed that overparameterized neural networks can achieve zero training error when trained with random labels. However, in contrast to our work, in terms of generalization, the resulting models were at the level of random guessing.
Since then, training on partly, or completely, random labels has received a lot of attention, where the main focus has been on partly random labels, see e.g.\
\citet{zhang2016understanding},
\citet{jiang2018mentornet},
\citet{zhang2018generalized},
\citet{han2019deep},
\citet{song2022learning},
\citet{wei2024vision},
\citet{nguyen2024noisy}, and
\citet{wang2024tackling}.
Training with completely random labels has successfully been used for
neural architecture search \citep{zhang2021neural};
estimating model complexity \citep{becker2024learned};
partitioning models across GPUs during training \citep{karadag2025partitioned};
and model pretraining \citep{pondenkandath2018leveraging,maennel2020neural,behrens2025knowledge}.

\textbf{Smoother Models:}
Linear smoothers have a long history and have been studied by e.g.\ 
\citet{watson1964smooth}, 
\citet{reinsch1967smoothing},
\citet{rosenblatt1971curve},
\citet{silverman1985some}, and
\citet{buja1989linear}.
Due to their more complicated form, nonlinear smoothers are harder to analyze theoretically but have been studied by e.g.\ \citet{mallows1980some}.
More recently, \citet{jeffares2024deep} showed that deep neural networks trained with the squared loss and infinitesimal learning rate can be formulated as smoothers, where the smoother matrix is a function of the neural tangent kernel, NTK. In some settings, the NTK is constant during training \citep{jacot2018neural,lee2019wide}, and thus independent of $\yv$, in which case the neural network is a linear smoother. However, in practice, the NTK changes during training \citep{liu2020linearity,fort2020deep}, making it dependent on $\yv$, and the neural network thus becomes a nonlinear smoother. This stands in contrast to our work, where the neural network smoother matrix is always independent of $\yv$. Due to the high computational cost of calculating the NTK, computationally efficient methods have been proposed by 
\cite{novak2022fast}, \citet{wei2022more}, \citet{wang2022deep}, and \citet{mohamadi2023fast}, where the last three rely on approximations of the kernel.

\textbf{In- and Out-of-Sample Performance:}
A classical measure of model capacity is the effective number of parameters, also known as the effective degrees of freedom, \citep{efron1986biased,efron2004estimation}. This is computed using only in-sample data, which limits its usefulness in some situations, see e.g.\ \citet{janson2015effective}. 
Generalizations of the effective number of parameters that also include out-of-sample data have been proposed by \citet{rosset2020fixed}, \citet{luan2021predictive}, \citet{curth2023u}, and \citet{patil2024revisiting}.
A related topic that has gained a lot of attention is when in- and out-of-sample data follow different distributions, see e.g.\ 
\citet{hendrycks2021many},
\citet{liu2021towards},
\citet{harun2024variables},
\citet{yang2024generalized}, and
\citet{cui2025online}.

\section{MODEL SELECTION WITHOUT USING THE OUTPUTS}
In this section, we first discuss why in‑sample model complexity is not necessarily a reliable basis for model selection. We then propose a model selection criterion based on the variance of the out‑of‑sample predictions, which we make independent of~$\yv$.

\subsection{Limitations of Model Selection Based on In-Sample Model Complexity}
Let us consider model selection by pre-defining the desired model complexity in terms of the effective number of parameters. For linear smoothers, where $\fhv=\Ss\yv$, the effective number of parameters is obtained as the trace of $\Ss$. The model complexity thus depends on $\X$ and the model parameters, but not on $\yv$ and $\xsv$.
Although often working relatively well, this method has some obvious limitations: First, it is not clear \emph{which complexity to choose} to obtain good generalization. Second, when there is more than one model parameter, there may exist several different models with the \emph{same complexity}, but with \emph{different performance}. This is illustrated in Figure \ref{fig:compl_demo}, where we have used kernel ridge regression with a Gaussian kernel to fit the data, which means that there are two parameters: regularization and kernel bandwidth. 
All three functions have the same in-sample complexity, and they all perfectly interpolate the training data. However, while the green function provides reasonable out-of-sample predictions, the red and yellow functions greatly over- or underfit between observations. Consequently, in-sample model complexity, even in a two-parameter model, is not sufficient to predict out‑of‑sample performance.

\subsection{Model Selection Based on Out-of-Sample Predictions: Controlling the Variance}
\label{sec:ssmm}

For the green function in Figure \ref{fig:compl_demo}, which generalizes well compared to the red and yellow ones, the out-of-sample predictions roughly follow the same distribution as the observations, i.e.\ $\fhs\stackrel{d}\approx y$ (where $\stackrel{d}\approx$ denotes approximately equal in distribution). This also makes sense: for a model to generalize well, the behavior of the predictions (including the out-of-sample predictions) must not deviate too much from that of the data. 

Given out-of-sample validation covariates $\Xs$, for which no $\yv$-data exists, we propose model selection by matching the empirical distributions of $\yv$ and $\fhsv$ through matching of moments. % Two distributions are equal if all their moments are equal. 
For centered data, $\frac1n\sum_{i=1}^ny_i=0$ and $\frac1{\ns}\sum_{i=1}^{\ns}\fhs_{v,i}\approx 0$, which means that the first sample moments already approximately match. We additionally require the second sample moments, and hence the sample variances, to match, disregarding higher-order moments for simplicity. 
For a smoother, model selection thus amounts to minimizing
\begin{equation}
\label{eq:ssmm}
\begin{aligned}
&\left|\frac1n\sum_{i=1}^ny_i^2-\frac1{\ns}\sum_{i=1}^{\ns}\fhs_{v,i}{^2}\right|\\
=&\left|\yv^\top\left(\frac1n\I_n-\frac1{\ns}\Ssst\Sss\right)\yv\right|,
\end{aligned}
\end{equation}
a method we refer to as matching of sample variances, MSV. So far, MSV still depends on $\yv$; we address this in Section \ref{sec:y-free}.

To evaluate MSV, we need to calculate the out-of-sample validation smoother matrix $\Sss$, for which we need $\Xs$. In principle, we would like to use all possible future $\xv$-values, which is, of course, unrealistic, and we need to settle for a sub-sample.
One way to create such a sample would be to split $\X$ into two parts and use one for training and the other for validation. However, this comes with at least two limitations: First, observations in $\yv$ corresponding to the validation covariates would then not be considered at all during training, which would be wasteful. Second, we would like the model to generalize well to all possible future values $\xv$, also those not previously seen. Both of these issues can be solved by sampling $\Xs$ from a generative model, trained on $\X$, something that allows us to create as large an $\Xs$ as we wish.
Throughout this paper, we generate $\Xs$ by sampling from a multivariate normal distribution, parameterized by the sample mean and covariance of $\X$. As we will see, even this extremely simple generative model works well in practice.
An alternative to sample $\Xs$ is to replace the second sample moment, $\frac1{\ns}\Ssst\Sss$, in Equation \ref{eq:ssmm} with the true second moment, $\E_{\xsv}[\ssv\ssvt]$. Whenever $\E_{\xsv}[\ssv\ssvt]$ can be obtained in closed form, which is not always the case, sampling of $\Xs$ is no longer needed.

\begin{figure}%[!b]
\center
\includegraphics[width=\columnwidth]{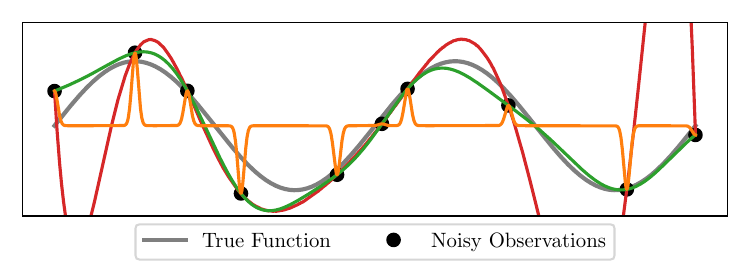}
\caption{Illustrating the limitations of in-sample model complexity. All three function estimates (green, orange, and red) have the same in-sample model complexity, in terms of the effective number of parameters, but behave very differently between the observations. Details are given in Appendix~\ref{sec:exp_dets}.}
\label{fig:compl_demo}
\end{figure}

\subsection{Making Model Selection \texorpdfstring{$\yv$}{y}-free}
\label{sec:y-free}
As can be seen from Equation~\ref{eq:ssmm}, model selection through MSV amounts to minimizing a closed-form expression of the form $|\yv^\top\A\yv|$, where $\A$ is a symmetric matrix that does not depend on $\yv$ unless $\Sss$ does. 
We can make this expression independent of $\yv$ by simply replacing $\yv$ with a random vector, $\yvr$. It can be shown (see Appendix \ref{sec:proofs}) that when the elements in $\yvr$ are uncorrelated, with zero mean and variance $\sigma_y^2$, then
\begin{equation}
\label{eq:t_semi_norm}
\left|\E_{y}\left(\yvr^\top\A\yvr\right)\right|
=\left|\Tr\left(\A\right)\right|\cdot\sigma_y^2
=:\left\|\A\right\|_T\cdot\sigma_y^2.
\end{equation}
We can thus obtain $\yv$-free model selection, by minimizing 
\begin{equation*}
\label{eq:ssmm_tr}
\left\|\frac1n\I_n-\frac1{\ns}\Ssst\Sss\right\|_T=\left|1-\frac1{\ns}\Tr\left(\Ssst\Sss\right)\right|,
\end{equation*}
or $\left|1-\Tr\left(\E(\ssv\ssvt)\right)\right|=\left|1-\E(\|\ssv\|_2^2)\right|$ if the expectation is available, something we refer to as MSV-Tr.\footnote{MSV-Tr is closely related to the generalized effective number of parameters introduced by \citet{curth2023u}: $\left|1-\frac1{\ns}\Tr\left(\Ssst\Sss\right)\right|=\left|1-n\cdot p_{\bm{\hat{S}}}^0\right|$, where $p_{\bm{\hat{S}}}^0$ denotes their generalization of the effective number of parameters.}
An alternative to Equation \ref{eq:t_semi_norm} for making MSV independent of $\yv$ is to note that
\begin{equation}
\label{eq:nf2_norm}
\begin{aligned}
\left|\yv^\top\A\yv\right|
&=\left\|\yv^\top\A\yv\right\|_2
\leq \left\|\A\right\|_2\cdot\left\|\yv\right\|^2_2\\
&\leq \left\|\A\right\|_F\cdot\left\|\yv\right\|^2_2
\leq \left\|\A\right\|_*\cdot\left\|\yv\right\|^2_2,
\end{aligned}
\end{equation}
where $\|\cdot\|_{2}$, $\|\cdot\|_{F}$, and $\|\cdot\|_{*}$ denote the spectral, Frobenius, and nuclear matrix norms, respectively. The expressions in Equations \ref{eq:t_semi_norm} and \ref{eq:nf2_norm} are very similar, but differ in one important aspect: In contrast to the three proper norms, unless $\A$ is positive semi-definite (PSD), $\|\A\|_T=|\Tr(\A)|$ is a seminorm, which means that it may be 0 for matrices other than the zero matrix. When $\A$ is PSD, its eigenvalues and singular values coincide, and $\|\A\|_T=\|\A\|_*$.

\section{WHEN DOES \texorpdfstring{$\yv$}{y}-FREE MODEL SELECTION WORK?}
In this section, we analyse the MSV-Tr model selection criterion in terms of the bias-variance decomposition of the conditional out-of-sample prediction risk. 
We also compare the asymptotic risk, as $n,d\to\infty$, for linear ridge regression with regularization selected by MSV-Tr to the theoretically optimal risk.
We finally show that, in contrast to MSV, $\yv$-free generalized cross-validation cannot be used for model selection. 

\subsection{Analysis of  \texorpdfstring{$\yv$}{y}-free MSV-Tr}
Assuming that the data is generated as $\yv=\fv+\bm{\varepsilon}$, where the elements in $\bm{\varepsilon}\in\R^n$ are i.i.d.\ with zero mean and variance $\sigma^2_\varepsilon$, the conditional out-of-sample risk is then given by
\begin{equation*}
\begin{aligned}
%R_{\X}(\lambda)
%&:=\E\left((\fhs(\lambda)-f^*)^2|\X\right)=B_{\X}(\lambda)+V_{\X}(\lambda),
R_{\X}
&:=\E\left((\fhs-f^*)^2|\X\right)=B_{\X}+V_{\X},
\end{aligned}
\end{equation*}
with bias and variance components $B_{\X}$ and $V_{\X}$.

According to Proposition \ref{thm:mar_pas1}, MSV-Tr selects the model so that the variance of the inferred out-of-sample functions equals the variance of the measurement noise, which is very reasonable. Since we must not use $\yv$ when selecting the model parameters, the bias cannot be estimated, and thus the risk must be assessed only through its variance component. Since, in general, both a too small variance (which generally leads to a too high bias) and a too large variance lead to a high risk, using the same variance as the noise is often a good compromise. One may argue that $\I_n/n$ in Equation \ref{eq:ssmm} should be multiplied by some $a>0$ to obtain $|a\cdot\sigma_\varepsilon^2-V_{\X}|$ in Equation \ref{eq:mp1}. The model could then be made more robust (or flexible) by tuning $a$. This would, however, require estimating the signal-to-noise ratio (SNR), for which we need access to $\yv$.
\begin{prop}
\label{thm:mar_pas1}
For a linear smoother $\fhs=\ssv(\thetahv)^\top\yv$, with parameters $\thetahv$,
\begin{equation}
\label{eq:mp1}
\begin{aligned}
\thetahv_T:=&\argmin_{\thetahv}\left|1-\E(\|\ssv(\thetahv)\|_2^2)\right|\\
=&\argmin_{\thetahv}|\sigma^2_\varepsilon-V_{\X}(\thetahv)|.
\end{aligned}
\end{equation}
%\thetahv_T:=&\argmin_{\thetahv}\left|1-\E\left(\ssv(\thetahv)^\top\ssv(\thetahv)\right)\right|\\
%=&\argmin_{\thetahv}|\sigma^2_\varepsilon-V_{\X}(\thetahv)|.
\end{prop}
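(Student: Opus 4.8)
The plan is to reduce the MSV-Tr objective to the variance component of $R_{\X}$ by showing that, for a linear smoother, $V_{\X}(\thetahv) = \sigma_\varepsilon^2\,\E(\|\ssv(\thetahv)\|_2^2)$; once this identity is established, the claimed equality of argmins follows immediately, since $\sigma_\varepsilon^2$ is a positive constant independent of $\thetahv$.

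First I would write the bias--variance decomposition of $R_{\X}$, conditioning on $\X$ (and, if $\xsv$ is treated as random, momentarily also on $\xsv$). Substituting $\yv = \fv + \epsv$ into the linear-smoother form gives $\fhs = \ssvt\fv + \ssvt\epsv$, and because $\ssv$ depends only on the covariates --- which is exactly the defining property of a linear smoother --- the term $\ssvt\epsv$ is the only randomness left under $\E_\epsv$. Hence $\E_\epsv(\fhs\mid\X,\xsv) = \ssvt\fv$, so the squared bias is $(\ssvt\fv - f^*)^2$ and, using $\Cov(\epsv) = \sigma_\varepsilon^2\I_n$, the variance is $\Var_\epsv(\ssvt\epsv\mid\X,\xsv) = \ssvt\Cov(\epsv)\ssv = \sigma_\varepsilon^2\,\|\ssv\|_2^2$.

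Second, I would take the remaining expectation over $\xsv$ --- the same expectation that defines $\E(\|\ssv\|_2^2)$ in the statement --- leaving the bias contribution as $B_{\X}$ and turning the variance contribution into $V_{\X}(\thetahv) = \sigma_\varepsilon^2\,\E(\|\ssv(\thetahv)\|_2^2)$. Dividing by $\sigma_\varepsilon^2$ gives $\bigl|1 - \E(\|\ssv(\thetahv)\|_2^2)\bigr| = \sigma_\varepsilon^{-2}\bigl|\sigma_\varepsilon^2 - V_{\X}(\thetahv)\bigr|$, and since the prefactor $\sigma_\varepsilon^{-2} > 0$ does not depend on $\thetahv$, the minimizer of the left-hand side coincides with that of $|\sigma_\varepsilon^2 - V_{\X}(\thetahv)|$, which is the claim.

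I do not expect a genuine obstacle here: this is the textbook linear-smoother bias--variance calculation, and the only points requiring care are bookkeeping --- carrying the conditioning on $\X$ through, invoking the $\yv$-independence of $\ssv$ to pull it outside $\Var_\epsv$, and matching the $\xsv$-expectation in the criterion to the one implicit in $V_{\X}$. If the paper's convention instead treats $\xsv$ as fixed, the same chain of equalities holds verbatim with $\E(\|\ssv\|_2^2)$ replaced by $\|\ssv\|_2^2$.
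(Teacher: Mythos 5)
Your proposal is correct and follows essentially the same route as the paper: both establish $V_{\X}(\thetahv)=\sigma_\varepsilon^2\,\E_{\xsv}(\|\ssv(\thetahv)\|_2^2)$ via the standard bias--variance calculation for a linear smoother with $\Cov(\epsv)=\sigma_\varepsilon^2\I_n$, and then conclude by noting that dividing the objective by the positive constant $\sigma_\varepsilon^2$ leaves the argmin unchanged. The only cosmetic difference is that you invoke $\Var_\epsv(\ssvt\epsv)=\ssvt\Cov(\epsv)\ssv$ directly, whereas the paper expands all the cross terms and cancels them explicitly.
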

%\begin{remark}
%Note that the proposition holds for any type of regularization, including explicit regularization such as ridge and lasso, but also implicit regularization, imposed by early stopping of an iterative optimization method.
%\end{remark}

Even if Proposition \ref{thm:mar_pas2} relates the selected model to the observation noise, an explicit value for $\thetahv_T$ is generally inaccessible. However, for linear ridge regression in the asymptotic setting, we can obtain a closed-form expression for the regularization $\lambda$, as is done in Theorem \ref{thm:mar_pas2}. We also show that the out-of-sample risk when selecting $\lambda$ based on MSV-Tr is bounded in terms of the theoretically optimal (inaccessible) risk. To distinguish the asymptotic quantities from the non-asymptotic, we denote them with a superscript bar.
\begin{thm}
\label{thm:mar_pas2}
For linear ridge regression, where $\ssv(\lambda):=\xsvt\left(\X^\top\X+n\lambda\I_d\right)^{-1}\X^\top$, if $n,d\to\infty$, such that $d/n\to\gamma\in(0,\infty)$, and the distribution of $\xv$ has zero mean, unit variance and a finite moment of order $4+\eta$ for some $\eta>0$, then
\begin{equation*}
\begin{aligned}
\overline{\lambda_T}:=&\argmin_{\lambda\geq 0}\left|\sigma^2_\varepsilon-\overline{V_{\X}}(\lambda)\right|\\
=&\begin{cases}
3\sqrt{\frac\gamma 2}-\gamma-1, & \gamma \in \left(\frac12,2\right)\\
0, & \gamma \in \left(0,\frac12\right]\cup\left[2,\infty\right),
\end{cases}
\end{aligned}
\end{equation*}
where $\overline{V_{\X}}$ denotes the variance component of the asymptotic conditional out-of-sample risk.

Additionally, for $\snr\in[1,80]$ and $\gamma\in(0,\infty)$,
\begin{equation}
\label{eq:mp3}
\asymrisk(\overline{\lambda_T})/\asymrisk(\overline{\lambda^*})<2.45,
\end{equation}
where $\snr:=\|\bm{\beta^*}\|_2^2/\sigma_{\varepsilon}^2$ (for $\bm{f}=\X\bm{\beta^*}$) denotes the signal-to-noise ratio and $\asymrisk(\lambda)$ denotes the asymptotic risk, which is minimized for $\overline{\lambda^*}=\gamma/\snr$.
\end{thm}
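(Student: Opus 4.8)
The plan is to reduce everything to explicit integrals against the Marchenko--Pastur (MP) law. Write $\hat\Sigma := \frac1n\X^\top\X$; under the stated $(4+\eta)$-moment condition its empirical spectral distribution converges to the MP law $F_\gamma$, and the relevant resolvent traces concentrate around their $F_\gamma$-integrals. Since $\E[\xsv\xsvt]=\I_d$, one has $\E_{\xsv}\|\ssv(\lambda)\|_2^2 = \frac1n\Tr\big[\hat\Sigma(\hat\Sigma+\lambda\I_d)^{-2}\big] \to \gamma\int\frac{t}{(t+\lambda)^2}\,dF_\gamma(t) =: h(\lambda)$, so that $\overline{V_{\X}}(\lambda)=\sigma_\varepsilon^2 h(\lambda)$, and a parallel computation of the bias (with $\fv=\X\bm{\beta^*}$, $\|\bm{\beta^*}\|_2^2=\snr\,\sigma_\varepsilon^2$) gives $\overline{B_{\X}}(\lambda)=\sigma_\varepsilon^2\,\snr\,\lambda^2\!\int(t+\lambda)^{-2}\,dF_\gamma(t)$. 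Introducing the Stieltjes transform $m(z)=\int(t-z)^{-1}dF_\gamma(t)$, using $h(\lambda)=\gamma\frac{d}{d\lambda}[\lambda\,m(-\lambda)]$ together with the MP self-consistent equation $\gamma z m(z)^2+(z+\gamma-1)m(z)+1=0$, and picking the branch with $m(-\lambda)>0$, one solves for $m(-\lambda)$ and obtains the closed forms
\[
\lambda\, m(-\lambda)=\frac{\Delta(\lambda)-\lambda-1+\gamma}{2\gamma},\qquad
h(\lambda)=\frac12\!\left(\frac{\lambda+\gamma+1}{\Delta(\lambda)}-1\right),
\]
where $\Delta(\lambda):=\sqrt{(\lambda+\gamma+1)^2-4\gamma}$.

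For the first claim, note that $\overline{\lambda_T}=\argmin_{\lambda\ge0}|1-h(\lambda)|$ (by $\overline{V_{\X}}=\sigma_\varepsilon^2 h$ and Proposition~\ref{thm:mar_pas1}), and that $h$ is strictly decreasing on $[0,\infty)$ because $\Delta''(\lambda)=-4\gamma/\Delta(\lambda)^3<0$, with $h(\lambda)\to0$ as $\lambda\to\infty$. Hence $\overline{\lambda_T}$ is the unique root of $h(\lambda)=1$ when $h(0)\ge1$ and equals $0$ otherwise. Solving $h(\lambda)=1$ gives $\lambda+\gamma+1=3\,\Delta(\lambda)$, hence $(\lambda+\gamma+1)^2=9\big[(\lambda+\gamma+1)^2-4\gamma\big]$, i.e.\ $\lambda+\gamma+1=3\sqrt{\gamma/2}$ and $\overline{\lambda_T}=3\sqrt{\gamma/2}-\gamma-1$; this is nonnegative exactly when $\frac{9\gamma}{2}\ge(\gamma+1)^2$, i.e.\ when $\gamma^2-\frac52\gamma+1\le0$, i.e.\ for $\gamma\in[\frac12,2]$. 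For the complementary range one checks $h(0)=\frac{\gamma}{1-\gamma}$ if $\gamma<1$ and $h(0)=\frac{1}{\gamma-1}$ if $\gamma>1$, each of which is $\le1$ precisely for $\gamma\in(0,\frac12]$ and $\gamma\in[2,\infty)$ respectively, so there $\overline{\lambda_T}=0$, as claimed.

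For the second claim, the same MP integrals yield the closed-form risk $\overline{R_{\X}}(\lambda)/\sigma_\varepsilon^2=\snr\,\lambda^2 m'(-\lambda)+h(\lambda)$ with $\lambda^2 m'(-\lambda)=\frac1{2\gamma}\big(\frac{(\gamma+1)\lambda+(\gamma-1)^2}{\Delta(\lambda)}-(1-\gamma)\big)$; minimizing over $\lambda$ (equivalently, recognizing ridge with this $\lambda$ as the Bayes estimator under the isotropic prior on $\bm{\beta^*}$) reproduces $\overline{\lambda^*}=\gamma/\snr$. Substituting $\lambda=\overline{\lambda^*}$ and $\lambda=\overline{\lambda_T}$ into the risk expression gives $\asymrisk(\overline{\lambda_T})$ and $\asymrisk(\overline{\lambda^*})$ as explicit elementary functions of $(\gamma,\snr)$; on $\gamma\notin[\frac12,2]$ one simply uses the ridgeless values $\overline{R_{\X}}(0)/\sigma_\varepsilon^2=\frac{\gamma}{1-\gamma}$ for $\gamma<1$ and $=\snr(1-\frac1\gamma)+\frac1{\gamma-1}$ for $\gamma>1$. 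It then remains to bound $\asymrisk(\overline{\lambda_T})/\asymrisk(\overline{\lambda^*})$ over $(\gamma,\snr)\in(0,\infty)\times[1,80]$.

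This last step is the main obstacle. The ratio tends to $1$ both as $\gamma\to0$ and as $\gamma\to\infty$, so the supremum is attained with $\gamma$ in a bounded interval. For fixed $\gamma$ the numerator $\asymrisk(\overline{\lambda_T})$ is affine in $\snr$ (the bias term is proportional to $\snr$, the variance term independent of it), while $\asymrisk(\overline{\lambda^*})$ is increasing in $\snr$ with $\frac{d}{d\snr}\asymrisk(\overline{\lambda^*})=\sigma_\varepsilon^2(\overline{\lambda^*})^2 m'(-\overline{\lambda^*})$ by the envelope theorem; a short calculation then shows that, for each fixed $\gamma$, the ratio as a function of $\snr$ attains its maximum on $[1,80]$ at an endpoint, reducing the problem to maximizing two explicit one-variable functions of $\gamma$, which is done by locating their critical points and bounding — the maximum coming out below $2.45$. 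I expect the binding configuration to sit near $\snr=80$ with $\gamma\in(\frac12,1)$; this is also why the restriction $\snr\le80$ cannot be dropped, since for $\gamma\in(\frac12,1)$ one has $\asymrisk(\overline{\lambda^*})\to\sigma_\varepsilon^2\frac{\gamma}{1-\gamma}<\infty$ while $\asymrisk(\overline{\lambda_T})$ grows linearly in $\snr$, so the ratio is unbounded over all $\snr>0$.
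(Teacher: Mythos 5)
Your treatment of the first claim is correct and essentially matches the paper's: both derive the closed form $\overline{V_{\X}}(\lambda)/\sigma_\varepsilon^2=\frac12\bigl(\frac{\lambda+\gamma+1}{\Delta(\lambda)}-1\bigr)$ from the Marchenko--Pastur Stieltjes transform and solve $\overline{V_{\X}}=\sigma_\varepsilon^2$ to get $3\sqrt{\gamma/2}-\gamma-1$ on $[\frac12,2]$. Your added observation that $h=\frac12(\Delta'-1)$ is strictly decreasing (so the root, when nonnegative, is the unique minimizer, and otherwise the minimizer is $0$ via $h(0)\le1$) is a small but genuine improvement in rigor over the paper, which only checks the sign of the candidate root.

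The second claim is where there is a genuine gap. The entire content of Equation \ref{eq:mp3} is the quantitative certificate $\asymrisk(\overline{\lambda_T})/\asymrisk(\overline{\lambda^*})<2.45$ over $(\gamma,\snr)\in(0,\infty)\times[1,80]$, and you do not actually establish it: the reduction to endpoints in $\snr$ is announced as ``a short calculation then shows'' without being performed (the numerator is affine in $\snr$ and the denominator increasing and, one can check, concave, but quasi-convexity of the ratio on $[1,80]$ still has to be verified, not asserted), and the final step ``locating their critical points and bounding --- the maximum coming out below $2.45$'' is precisely the claim to be proved, restated. The paper confronts this head-on: it first confines the maximizer to $\gamma\in[\frac12,2]$ via a monotonicity lemma (your ``ratio tends to $1$ at both ends'' argument only gives attainment on some bounded set, which is weaker but would suffice if you then controlled that set), and then certifies the bound by a grid search with mesh $10^{-5}$ combined with explicit Lipschitz bounds $752.2$ and $10.5$ on the partial derivatives in $\gamma$ and $\snr$, so that the grid maximum $2.445$ plus the worst-case inter-grid deviation $0.0038$ stays below $2.45$. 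Your analytic route (endpoint reduction in $\snr$, then one-variable calculus in $\gamma$) is plausible and would be more elegant if completed, but as written the decisive inequality rests on two unproven assertions. Your closing remark that the ratio is unbounded as $\snr\to\infty$ for $\gamma\in(\frac12,1)$ is correct and consistent with the paper's discussion of why the $\snr$ restriction is needed.
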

\begin{remark}
The assumption of zero mean and unit variance is quite weak, since this can always be obtained by properly rescaling and rotating the data.
\end{remark}

\begin{figure}[t]
\center
\includegraphics[width=\columnwidth]{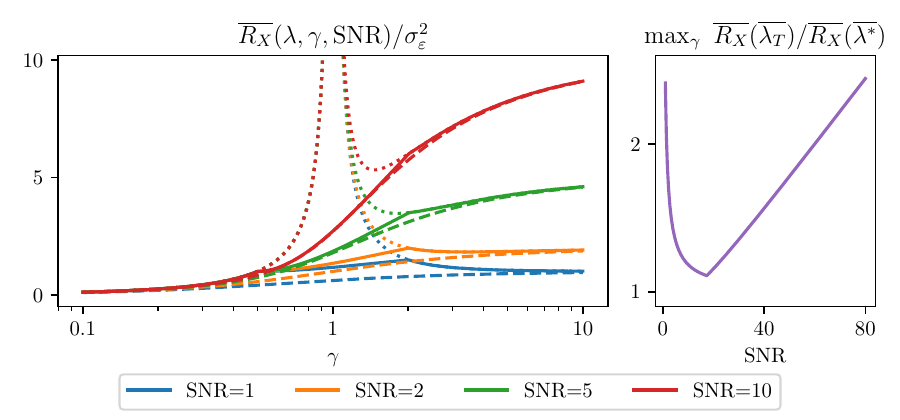}
\caption{Left: Asymptotic risks, $\asymrisk$, for different values of $\gamma=\lim_{n,d\to\infty}\frac dn$ and the signal-to-noise ratio, SNR. For $\ssmmrisk$ we use solid, for $\optrisk$ dashed, and for $\zerorisk$ dotted lines. 
For $\gamma\notin \left(\frac12,2\right)$, $\ssmmrisk$ coincides with $\zerorisk$, but for $\gamma\in \left(\frac12,2\right)$, where $\zerorisk$ tends to deviate substantially form $\optrisk$ (and even diverge for $\gamma=1$), $\ssmmrisk$ stays much closer to $\optrisk$. Right: $\ssmmrisk/\optrisk$ for SNR in $[1,80]$. In this interval, the quotient is always less than 2.45 (and, for some values of SNR, close to 1), which is expected from Theorem \ref{thm:mar_pas2}.}
\label{fig:mar_pas}
\vspace{-.3cm}
\end{figure}

In the left panel of Figure \ref{fig:mar_pas}, we plot $\ssmmrisk$ for different values of $\gamma$ and SNR together with $\optrisk$ and $\asymrisk(\lambda=0)$. $\ssmmrisk$ follows $\optrisk$ quite closely, avoiding the divergence at $\gamma=1$ that occurs for $\zerorisk$. In the right panel, we plot the maximum ratio of the MSV-Tr to the optimal asymptotic risk as a function of the SNR. For SNR $\in[1,80]$, the quotient is less than 2.45, which is in accordance with Equation \ref{eq:mp3}. It is, however, easy to verify that the quotient diverges both for SNR$\to 0$ and SNR$\to\infty$, which suggests that MSV-Tr does not work well for extreme SNRs, unless modified by introducing the $a$ discussed above.

\subsection{What About \texorpdfstring{$\yv$}{y}-free Cross Validation?}
Cross-validation has no closed-form in general, and can thus not be made independent of $\yv$ in the same way as MSV. However, leave-one-out cross-validation, LOOCV \citep{allen1974relationship}, and its close relative generalized cross-validation, GCV \citep{golub1979generalized}, have closed-form solutions, where the latter amounts to minimizing
\begin{equation*}
\frac1n\sum_{i=1}^n\left(\frac{y_i-\fh_i}{1-\text{Tr}(\Ss)/n}\right)^2
=n\cdot\frac{\yv^\top(\I_n-\Ss)^\top(\I_n-\Ss)\yv}{\left(\Tr(\I_n-\Ss)\right)^2},
\end{equation*}
which can be made independent of $\yv$ in the same ways as MSV.\footnote{Since LOOCV has a more complicated form than GCV, it is less amenable to theoretical analysis. However, given the similarities between the methods, it should behave very similarly to GCV. This is also our experience from empirical experiments.} 

However, in Theorem \ref{thm:opt_lbda}, we show that, in contrast to MSV, $\yv$-free, norm-based GCV cannot be used for selecting the regularization strength $\lambda$ for ridge regression \emph{regardless} of the feature expansion. The theorem thus includes, e.g., linear and kernel ridge regression.
\begin{thm}~\\
\label{thm:opt_lbda}
Let $\Ss_\lambda:=\Ph\Ph^+_\lambda$ and $\ssv_\lambda:=\phsvt\Ph^+_\lambda$, where 
$$\Ph^+_\lambda:=(\Ph^\top\Ph+\lambda\I_p)^{-1}\Ph^\top=\Ph^\top(\Ph\Ph^\top+\lambda\I_n)^{-1},$$
and let $\|\cdot\|$ denote the trace seminorm, the nuclear norm, or the Frobenius norm. Then
\begin{enumerate}[label=(\alph*)]
\item
for GCV,
$\argmin_{\lambda\geq 0}\frac{\left\|(\I_n-\Ss_\lambda)^\top(\I_n-\Ss_\lambda)\right\|}{\left(\Tr(\I_n-\Ss_\lambda)\right)^2}=\infty$.
\item
for in-sample MSV,\\
$\argmin_{\lambda\geq 0}\left\|\frac1n\I_n-\frac1n\Ss^\top_\lambda\Ss_\lambda\right\|=$\\ $\argmin_{\lambda\geq 0}\left\|\frac1n\I_n-\frac1n\Ss_\lambda\right\|=0$.
\item
for out-of-sample MSV, if $\E(\phsv\phsvt)=\I_p$, then\\
$\argmin_{\lambda\geq 0}\left\|\frac1n\I_n-\E_{\xv}(\ssv_\lambda\ssvt_\lambda)\right\|<\infty$.\\
If, in addition, $\|\Ph\Ph^\top\|_2<n$, then\\
$\argmin_{\lambda\geq 0}\left\|\frac1n\I_n-\E_{\xv}(\ssv_\lambda\ssvt_\lambda)\right\|>0$.
\end{enumerate}
\end{thm}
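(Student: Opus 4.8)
The plan is to diagonalize every matrix that enters through a singular value decomposition $\Ph=\bm{U}\bm{\Sigma}\bm{V}^\top$, with $\bm{U}\in\R^{n\times r}$, $r=\mathrm{rank}\,\Ph$, and positive singular values $\sigma_1,\dots,\sigma_r$, so that each minimization over $\lambda$ reduces to a one-variable problem in explicit scalar functions of $\sigma_1^2,\dots,\sigma_r^2$. One then has $\Ss_\lambda=\bm{U}\diag(\sigma_i^2/(\sigma_i^2+\lambda))\bm{U}^\top$, so $\I_n-\Ss_\lambda$ is positive semidefinite with eigenvalues $\nu_i(\lambda):=\lambda/(\sigma_i^2+\lambda)$ for $i\le r$ and $1$ with multiplicity $n-r$; likewise $\Ss_\lambda^\top\Ss_\lambda=\Ss_\lambda^2$ replaces each $\nu_i(\lambda)$ with $(\sigma_i^2/(\sigma_i^2+\lambda))^2$; and, since $\ssv_\lambda=(\Ph^+_\lambda)^\top\phsv\in\R^n$, the hypothesis $\E(\phsv\phsvt)=\I_p$ gives $\E_{\xv}(\ssv_\lambda\ssvt_\lambda)=(\Ph^+_\lambda)^\top\Ph^+_\lambda=\bm{U}\diag(\sigma_i^2/(\sigma_i^2+\lambda)^2)\bm{U}^\top$. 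The structural point I would highlight is that in (a) and (b) the matrix being normed is positive semidefinite, so all three norms collapse to the same coordinatewise-monotone functional of the eigenvalues, whereas in (c) the matrix $\tfrac1n\I_n-\E_{\xv}(\ssv_\lambda\ssvt_\lambda)$ is generically indefinite, which is exactly where the seminorm-versus-norm distinction matters and where the argument becomes delicate.

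For (a), the norm-based GCV objective becomes $\sum_i\nu_i(\lambda)^2/(\sum_i\nu_i(\lambda))^2$ for the trace seminorm and the nuclear norm, and $(\sum_i\nu_i(\lambda)^4)^{1/2}/(\sum_i\nu_i(\lambda))^2$ for the Frobenius norm, with $i$ ranging over all $n$ eigenvalues (the last $n-r$ constantly $1$). Cauchy--Schwarz, respectively the power-mean inequality, bounds these below by $1/n$, respectively $n^{-3/2}$, with equality only when all $\nu_i$ agree; since $\nu_i(\lambda)\uparrow1$ as $\lambda\to\infty$ while at every finite $\lambda$ the eigenvalues on the range of $\Ph$ are strictly below $1$, the bound is approached only in the limit and attained at no finite $\lambda$, so $\argmin_{\lambda\ge0}=\infty$ (the lone exception, $\Ph\Ph^\top$ a multiple of $\I_n$, makes the objective constant, hence $\infty$ is still a minimizer). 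For (b), both $\tfrac1n\I_n-\tfrac1n\Ss_\lambda$ and $\tfrac1n\I_n-\tfrac1n\Ss_\lambda^\top\Ss_\lambda$ are positive semidefinite, with eigenvalues $\tfrac1n\nu_i(\lambda)$ and $\tfrac1n\lambda(2\sigma_i^2+\lambda)/(\sigma_i^2+\lambda)^2$ (together with trailing $1/n$'s); an elementary derivative check --- e.g.\ $\tfrac{d}{d\lambda}\,\lambda(2\sigma^2+\lambda)/(\sigma^2+\lambda)^2=2\sigma^4/(\sigma^2+\lambda)^3\ge0$ --- shows every such eigenvalue is non-decreasing in $\lambda$, hence so is each of the three norms, hence both stated argmins equal $0$.

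For (c), write $w_i(\lambda):=\sigma_i^2/(\sigma_i^2+\lambda)^2$ for the eigenvalues of $\E_{\xv}(\ssv_\lambda\ssvt_\lambda)$ on the range of $\Ph$; each $w_i$ is strictly decreasing from $\sigma_i^{-2}$ at $\lambda=0$ to $0$, as is $\phi(\lambda):=\Tr\,\E_{\xv}(\ssv_\lambda\ssvt_\lambda)=\sum_{i\le r}w_i(\lambda)$. For finiteness of the minimizer: once $\lambda$ is large enough that $w_i(\lambda)<1/n$ for all $i$, the matrix $\tfrac1n\I_n-\E_{\xv}(\ssv_\lambda\ssvt_\lambda)$ is positive semidefinite with all eigenvalues in $[0,1/n]$ and at least one strictly below $1/n$, so every one of the three norms is strictly smaller than at $\lambda=\infty$ (where the matrix equals $\tfrac1n\I_n$), giving $\argmin_{\lambda\ge0}<\infty$. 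For positivity under $\|\Ph\Ph^\top\|_2=\max_i\sigma_i^2<n$: then $w_i(0)=\sigma_i^{-2}>1/n$ for all $i\le r$, so near $\lambda=0$ the eigenvalues of $\tfrac1n\I_n-\E_{\xv}(\ssv_\lambda\ssvt_\lambda)$ on the range of $\Ph$ are negative, and the nuclear- and Frobenius-norm objectives are there $\tfrac{n-r}{n}+\sum_{i\le r}(w_i(\lambda)-\tfrac1n)$ and $(\tfrac{n-r}{n^2}+\sum_{i\le r}(w_i(\lambda)-\tfrac1n)^2)^{1/2}$, both with strictly negative derivative at $\lambda=0$ since the $w_i$ strictly decrease, so the minimizer exceeds $0$. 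For the trace seminorm the objective is $|1-\phi(\lambda)|$, which decreases at $\lambda=0$ exactly when $\phi(0)=\sum_{i\le r}\sigma_i^{-2}>1$; this is implied by $\max_i\sigma_i^2<n$ as soon as $\Ph$ has full row rank, since then $\phi(0)\ge n/\max_i\sigma_i^2>1$, and this is the regime in which the trace-seminorm case of the statement holds.

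The main obstacle is part (c). Parts (a) and (b) are soft: positive semidefiniteness turns all three norms into monotone eigenvalue functionals, so Cauchy--Schwarz and a single derivative sign finish them. In (c) the target matrix changes sign, so the three norms genuinely differ, and, more seriously, confining the minimizer strictly between $0$ and $\infty$ demands a careful local analysis at \emph{both} ends --- near $\lambda=\infty$ all eigenvalues must have dropped below $1/n$, while near $\lambda=0$ one must control the sign pattern of the eigenvalues of $\tfrac1n\I_n-\E_{\xv}(\ssv_\lambda\ssvt_\lambda)$, which is precisely what $\|\Ph\Ph^\top\|_2<n$ buys, the trace seminorm additionally requiring $\sum_i\sigma_i^{-2}>1$.
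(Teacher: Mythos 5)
Your proof is correct and follows essentially the same route as the paper's: diagonalize via the SVD of $\Ph$, reduce each criterion to an explicit scalar function of $\lambda$ and the singular values, settle (a) by the equality case of the comparison between $\ell_1$ and $\ell_d$ norms of the eigenvalue vector, settle (b) by monotonicity of each eigenvalue in $\lambda$, and settle (c) by the sign of the derivative at $\lambda=0$ and for large $\lambda$. The one place you deviate is also the one place you improve on the paper: for the trace seminorm in part (c), the objective is $\left|1-\sum_{i\le r}s_i^2/(s_i^2+\lambda)^2\right|$ (your $\sigma_i$ being the paper's $s_i$), and its derivative at $\lambda=0$ is negative only when $\sum_{i\le r}s_i^{-2}>1$, which $\|\Ph\Ph^\top\|_2<n$ guarantees only when $\Ph$ has full row rank. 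The paper's proof asserts that the $s_i=0$ directions do not contribute, which is true of the prefactor $2s_i^2/(s_i^2+\lambda)^3$ but not of the sign factor $\sgn\bigl(\sum_i(1/n-s_i^2/(s_i^2+\lambda)^2)\bigr)$, to which each zero singular value contributes $+1/n$; for instance, with $n=10$, $p=1$, $s_1^2=5<n$, the trace equals $1-5/(5+\lambda)^2>0$ already at $\lambda=0$ and is increasing in $\lambda$, so the trace-seminorm argmin is $0$ and the stated conclusion fails. Your explicit extra hypothesis for that case is therefore not a gap in your argument but a needed repair of the statement; the nuclear- and Frobenius-norm cases of (c), and parts (a) and (b), go through exactly as you and the paper both argue.
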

\begin{remark}
Thanks to its close connection to ridge regression, it is not difficult to extend Theorem \ref{thm:opt_lbda} to gradient flow, i.e., gradient descent with infinitesimal learning rate, as is done in Appendix \ref{sec:opt_lbda2}. Through this extension, we also include, e.g., neural networks via the NTK framework.
\end{remark}
\begin{remark}
For the spectral norm, the analog of Theorem \ref{thm:opt_lbda} becomes slightly more complicated, but the conclusions are the same. See Appendix \ref{sec:opt_lbda2} for details.
\end{remark}

For the two in-sample methods, GCV and in-sample MSV, $\yv$-free model selection always selects either infinite (for GCV) or zero (for in-sample MSV) regularization, regardless of the data and the feature expansion, and thus cannot be used for parameter selection. (Note that a training time of 0 corresponds to an infinitely regularized model.)
For out-of-sample MSV, things become more interesting. Assuming that the features are isotropic, which, for linear features, can be obtained by properly rescaling and rotating the data, we see that the optimal regularization is always less than infinity, and larger than zero if $\|\Ph\Ph^\top\|_2< n$, which is consistent with Proposition~\ref{thm:mar_pas1}.

\section{\texorpdfstring{$\yv$}{y}-FREE TRAINING OF NEURAL NETWORKS, NEAREST NEIGHBORS, AND RANDOM FORESTS}
\label{sec:snn}
To apply our developments to neural networks, k-nearest neighbors (kNN), and random forests (RF), these models must first be formulated as linear smoothers. In this section, we show how to do that.
\subsection{Neural Networks}
\citet{jeffares2024deep} expressed neural networks as nonlinear smoothers, based on the NTK framework. Our formulation extends their work with two important distinctions: First, in addition to the squared loss, our formulation allows for using the cross-entropy loss, which we use for classification. Second, to obtain a linear smoother, we train the neural network on \emph{non-informative labels/response}, not adding the true $\yv$ until after all network parameters have been fixed. The details are given in Appendix \ref{sec:snn1}.

Calculating the smoother matrix requires calculating the NTK in every iteration when training the network, which can be expensive both in terms of computation and storage. The purpose of formulating a neural network as a linear smoother is, however, not to obtain more efficient training, but to show that it is possible to successfully train the network without using the information in~$\yv$. 
Another limitation of the smoother formulation is that, for neural networks, the smoother matrix cannot be obtained in closed form, but it is instead computed iteratively during training. Thus, the $\xsv$-data of all desired future predictions need to be included during training.

\subsection{k-Nearest Neighbors and Random Forests}
In the smoother formulations of k-nearest neighbors and decision trees, $\ssv(\xsv,\xvi)$ is non-zero only if $\xvi$ belongs to the neighborhood of $\xsv$ (for k-nearest neighbors), or is in the same leaf as $\xsv$ (for decision trees).
For random forests, the total smoother is an aggregation of the smoothers of the individual trees. The details are given in Appendix \ref{sec:knn_rf}.

\section{EXPERIMENTS}
\label{sec:exps}

In this section, we investigate $\yv$-free training for a smoothing spline, linear ridge regression (LRR), kernel ridge regression (KRR), neural network regression (NNR), neural network classification (NNC), k-nearest neighbor regression (kNNR), k-nearest neighbor classification (kNNC), random forest regression (RFR), and random forest classification (RFC) on one synthetic and six real-world data sets, which are presented in Table \ref{tab:datasets}.

The exact experimental setups are given in Appendix~\ref{sec:exp_dets}; here follows a summary: For KRR, we used the Gaussian kernel. For NNR and NNC, we used a one-hidden-layer feed-forward neural network, trained with gradient descent with momentum and the squared (for NNR) or cross-entropy (for NNC) loss. 

For the real data sets, we randomly sampled 500 training and 100 test observations. For MSV, we sampled 500 validation observations (only covariates, $\Xs$) from a multivariate normal distribution parameterized by the sample mean and covariance of the training data. This was repeated 10 times for different subsets of the data. 

To select regularization/stopping time/number of neighbors, we compare 10-fold $\yv$-based cross-validation to Frobenius norm-based MSV. For NNR and NNC on the real data, $\yv$-based training was instead obtained by using $\yv$ during training and setting aside 20\% of the training data for validating when to stop training (henceforth referred to as the standard method). To train the neural networks and random forests without $\yv$, we used a random $\yvr\in\R^n$, where, for NNR and RFR, $\yvr\sim\N(\nv,\I_n)$, and, for NNC and RFC, $(\yvr)_i\sim \text{Cat}(c)$, where $\N(\bm{\mu},\bm{\Sigma})$ denotes the normal distribution and Cat$(c)$ the categorical distribution for $c$ categories of equal probability. For the neural networks, we monitored the Frobenius norm-based MSV and GCV to decide when to stop training. For the random forests, we used the default parameters of the scikit-learn implementation.

\begin{figure}[t]
\center
\includegraphics[width=\columnwidth]{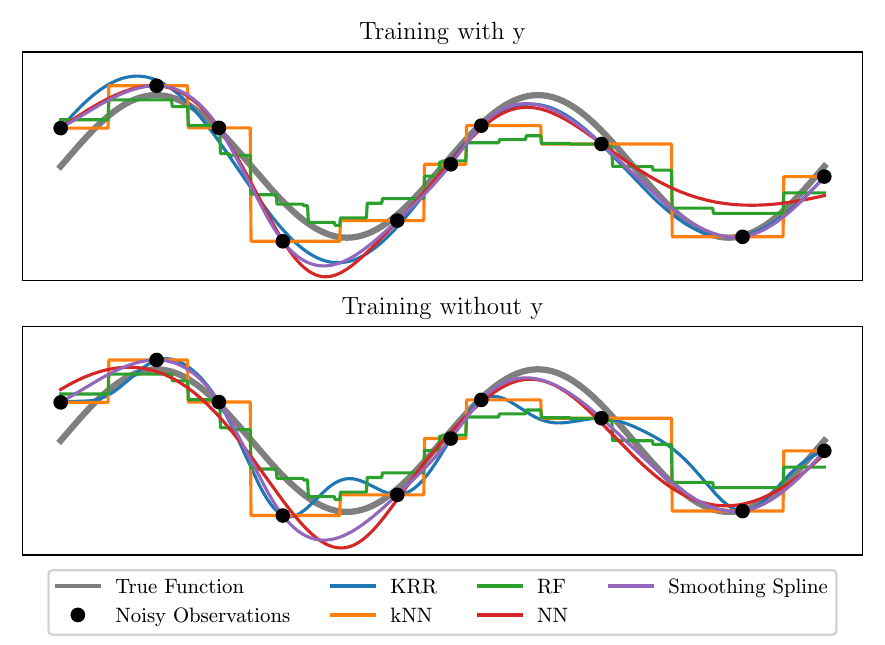}
\caption{The results of training different models with and without $\yv$ on synthetic data. The models trained without $\yv$ perform similarly to those trained with $\yv$. In the bottom panel, all five models are expressed in the form $\fhs=\ssvt\yv$, where $\ssv$ is constructed independently of $\yv$.}
\label{fig:wo_y_demo}
\vspace{-.3cm}
\end{figure}

\begin{table}
\caption{Real data sets, with number of features, $d$.}
\center
\begin{tabular}{@{}ll@{}}
\toprule
Data Set & $d$\\
\midrule
\makecell[l]{MNIST \citep{lecun1998gradient}} & 784\\%$35040\times 7$\\
\makecell[l]{CIFAR-10 \citep{krizhevsky2009learning}} & 3072\\%$35040\times 7$\\
\makecell[l]{Energy consumption in steel\\production \citep{ve2021efficient}} & 6\\% $35040\times 7$\\
\makecell[l]{Run time of CPUs \citep{delve1996comp}} & 21\\%$8192\times22$\\
\makecell[l]{Critical temperature\\ of superconductors \citep{hamidieh2018data}} & 81\\%$21263\times  82$\\
\makecell[l]{Power consumption of Tetouan\\City \citep{salam2018comparison}} & 7\\%$52416\times8$\\
\bottomrule
\end{tabular}
\label{tab:datasets}
\vspace{-.3cm}
\end{table}

The results are displayed in Figure \ref{fig:wo_y_demo} and Table \ref{tab:res_real}. In the table, for each method, we present the median (Q2) and first (Q1) and third (Q3) quartiles, over the 10 realizations, of the accuracy (for classification) or $R^2$ (for regression) \footnote{$R^2:=1-\|\yv-\fhv\|_2^2/\|\yv-\bar{y}\|_2^2\leq 1$, where $\bar{y}$ denotes the mean of $\yv$, measures the proportion of the variance in $\yv$ that is explained by the model, and is a normalized version of the mean squared error. $R^2=1$ corresponds to a perfect fit, while a model that predicts $\fh_i=\bar{y}$ for all $i$ results in $R^2=0$; thus a negative value of $R^2$ is possible, but it corresponds to a model that performs worse than always predicting the mean of the observed data.} on the test data. 
In all cases, $\yv$-free MSV performs much better than random guessing, most of the time at the level of the $\yv$-based methods, and sometimes even slightly better. $\yv$-free GCV performs at the level of random guessing, which corresponds to an accuracy of 0.1 for classification, since both MNIST and CIFAR-10 have ten different classes, and to an $R^2$ of 0 for regression. This is expected from Theorem \ref{thm:opt_lbda}, according to which, $\yv$-free GCV-based training uses infinite regularization (for neural networks in the form of zero training time).\footnote{Technically, Theorem \ref{thm:opt_lbda} does not apply to kNN. However, $k$ is a sort of regularization, since a larger $k$ results in a more stable model. For kNN, $\yv$-free GCV selects $k=n$, which is analog to $\lambda=\infty$.}\textsuperscript{,}\footnote{$\yv$-free LOOCV performs identically to $\yv$-free GCV on all methods and data, and is thus not reported separately.}

The larger gap between $\yv$-based and $\yv$-free training of random forests compared to the other models can be attributed to the importance of $\yv$ when selecting the splits in the decision trees.\footnote{We also investigated using a vector of only zeros, rather than $\yvr$, for $\yv$-free training. For neural networks, the results were virtually identical, but for RF the results of using $\nv$ were at the level of random guessing, which is not surprising: If all y values are identical, the decision tree consists of a single leaf containing all training data, and the regression model simply predicts the mean, while the classification model always predicts the same class.}
The limited performance of NNC on MNIST and CIFAR-10, compared to state-of-the-art, can be attributed to the fact that we only use 500 training observations, due to the high computational cost of calculating the smoother matrix for neural networks.

\begin{table*}[!ht]
%\vspace{-3cm}
\caption[hej]{Median and first and third quartile, over the 10 splits, of the accuracy (for classification) or $R^2$ (for regression) on the test data. The models trained with and without $\yv$ perform similarly. $\yv$-free GCV performs at the level of random guessing.}
\centering
\begin{minipage}[t]{0.49\textwidth}
\vspace{0pt}
\begin{tabular}{@{}l l l l@{}}
\toprule
Data & Model & Method & \makecell[l]{Accuracy/$R^2$\\ Q2 (Q1, Q3)} \\
\midrule
\multirow{8}{*}{MNIST}
 & \multirow{3}{*}{NNC}  & Standard & 0.82 (0.78, 0.85)  \\
 &                       & MSV     & 0.75 (0.74, 0.79)  \\
 &                       & GCV      & 0.11 (0.10, 0.12)  \\
\cline{2-4}
 & \multirow{3}{*}{kNNC} & Standard & 0.78 (0.76, 0.79) \\
 &                       & MSV     & 0.76 (0.73, 0.79) \\
 &                       & GCV      & 0.09 (0.08, 0.13) \\
\cline{2-4}
 & \multirow{2}{*}{RFC}  & Standard & 0.83 (0.79, 0.85) \\
 &                       & Random $\yv$ & 0.29 (0.25, 0.34) \\
\hline
\multirow{8}{*}{CIFAR-10}
 & \multirow{3}{*}{NNC}  & Standard & 0.32 (0.30, 0.34)  \\
 &                       & MSV     & 0.33 (0.29, 0.38)  \\
 &                       & GCV      & 0.09 (0.07, 0.11)  \\
\cline{2-4}
 & \multirow{3}{*}{kNNC} & Standard & 0.23 (0.19, 0.25) \\
 &                       & MSV     & 0.25 (0.18, 0.26) \\
 &                       & GCV      & 0.12 (0.10, 0.14) \\
\cline{2-4}
 & \multirow{2}{*}{RFC}  & Standard & 0.33 (0.31, 0.37) \\
 &                       & Random $\yv$ & 0.21 (0.16, 0.25) \\
\hline
\multirow{14}{*}{\makecell[l]{Steel\\Produc-\\tion}}
 & \multirow{3}{*}{LRR}  & CV   & 0.98 (0.98, 0.99)   \\
 &                       & MSV & 0.99 (0.98, 0.99)   \\
 &                       & GCV  & 0.00 (-0.02, 0.00)   \\
\cline{2-4}
 & \multirow{3}{*}{KRR}  & CV   & 1.00 (1.00, 1.00)   \\
 &                       & MSV & 0.95 (0.92, 0.96)   \\
 &                       & GCV  & 0.00 (-0.02, 0.00)   \\
\cline{2-4}
 & \multirow{3}{*}{NNR}  & Standard & 0.95 (0.95, 0.96)  \\
 &                       & MSV     & 0.98 (0.97, 0.98)  \\
 &                       & GCV      & 0.00 (-0.02, 0.00)   \\
\cline{2-4}
 & \multirow{3}{*}{kNNR} & CV   & 0.98 (0.98, 0.99)  \\
 &                       & MSV & 0.98 (0.98, 0.99)  \\
 &                       & GCV  & 0.00 (-0.03, 0.00)   \\
\cline{2-4}
 & \multirow{2}{*}{RFR}  & Standard     & 0.99 (0.99, 0.99)  \\
 &                       & Random $\yv$ & 0.82 (0.79, 0.85)  \\
\hline
\multirow{6}{*}{\makecell[l]{CPU\\Run\\Time}}
 & \multirow{3}{*}{LRR}  & CV   & 0.68 (0.63, 0.72)   \\
 &                       & MSV & 0.71 (0.67, 0.74)   \\
 &                       & GCV  & 0.00 (0.00, 0.00)   \\
\cline{2-4}
 & \multirow{3}{*}{KRR}  & CV   & 0.91 (0.89, 0.93)   \\
 &                       & MSV & 0.65 (0.51, 0.76)   \\
 &                       & GCV  & 0.00 (0.00, 0.00)   \\
\bottomrule
\end{tabular}
\end{minipage}\hfill
\begin{minipage}[t]{0.49\textwidth}
\vspace{0pt}
\begin{tabular}{@{}l l l l@{}}
\toprule
Data & Model & Method & \makecell[l]{Accuracy/$R^2$\\ Q2 (Q1, Q3)} \\
\midrule
\multirow{8}{*}{\makecell[l]{CPU\\Run\\Time}}
 & \multirow{3}{*}{NNR}  & Standard & 0.69 (0.63, 0.72)  \\
 &                       & MSV     & 0.71 (0.64, 0.80)  \\
 &                       & GCV      & 0.00 (0.00, 0.00)  \\
\cline{2-4}
 & \multirow{3}{*}{kNNR} & CV   & 0.85 (0.65, 0.93) \\
 &                       & MSV & 0.83 (0.68, 0.92) \\
 &                       & GCV  & 0.00 (0.00, 0.00) \\
\cline{2-4}
 & \multirow{2}{*}{RFR}  & Standard     & 0.96 (0.95, 0.98) \\
 &                       & Random $\yv$ & 0.34 (0.25, 0.42) \\
\hline
\multirow{14}{*}{\makecell[l]{Super-\\conduc-\\tors}}
 & \multirow{3}{*}{LRR}  & CV   & 0.71 (0.65, 0.73)   \\
 &                       & MSV & 0.71 (0.65, 0.74)   \\
 &                       & GCV  & 0.01 (0.01, 0.01)   \\
\cline{2-4}
 & \multirow{3}{*}{KRR}  & CV   & 0.76 (0.71, 0.81)   \\
 &                       & MSV & 0.73 (0.70, 0.76)   \\
 &                       & GCV  & 0.00 (0.00, 0.00)   \\
\cline{2-4}
 & \multirow{3}{*}{NNR}  & Standard & 0.67 (0.65, 0.70)  \\
 &                       & MSV     & 0.72 (0.68, 0.73)  \\
 &                       & GCV      & 0.00 (0.00, 0.00)  \\
\cline{2-4}
 & \multirow{3}{*}{kNNR} & CV   & 0.73 (0.71, 0.74)  \\
 &                       & MSV & 0.74 (0.72, 0.76)  \\
 &                       & GCV  & 0.00 (0.00, 0.00)  \\
\cline{2-4}
 & \multirow{2}{*}{RFR}  & Standard     & 0.80 (0.77, 0.83)  \\
 &                       & Random $\yv$ & 0.51 (0.47, 0.53)  \\
\hline
\multirow{14}{*}{\makecell[l]{Power\\Consump-\\tion}}
 & \multirow{3}{*}{LRR}  & CV   & 0.61 (0.57, 0.68)   \\
 &                       & MSV & 0.61 (0.57, 0.68)   \\
 &                       & GCV  & 0.00 (0.00, 0.00)   \\
\cline{2-4}
 & \multirow{3}{*}{KRR}  & CV   & 0.79 (0.77, 0.81)  \\
 &                       & MSV & 0.75 (0.74, 0.79)  \\
 &                       & GCV  & 0.00 (0.00, 0.00)  \\
\cline{2-4}
 & \multirow{3}{*}{NNR}  & Standard & 0.69 (0.67, 0.73)  \\
 &                       & MSV     & 0.68 (0.59, 0.71)  \\
 &                       & GCV      & 0.00 (0.00, 0.00)  \\
\cline{2-4}
 & \multirow{3}{*}{kNNR} & CV   & 0.73 (0.69, 0.76)    \\
 &                       & MSV & 0.74 (0.71, 0.76)    \\
 &                       & GCV  & 0.00 (-0.01, 0.00)   \\
\cline{2-4}
 & \multirow{2}{*}{RFR}  & Standard     & 0.77 (0.76, 0.80)    \\
 &                       & Random $\yv$ & 0.52 (0.49, 0.54)    \\
\bottomrule
\end{tabular}
\end{minipage}
\label{tab:res_real}
\end{table*}

\section{CONCLUSIONS}
We demonstrated how standard supervised models, including neural networks, can be successfully trained in an unsupervised manner. We did this by formulating each model as a smoother, i.e.\ on the form $\fhs=\ssvt\yv$, and constructing the smoother, $\ssv$, without using the information in $\yv$. To construct the smoother independently of $\yv$, we introduced a model selection criterion, MSV, that is based on the out-of-sample predictions of the model, but independent of $\yv$. For iteratively trained models, including neural networks, we replaced the true outputs, $\yv$, with either a random vector or the zero vector during training.

We showed how MSV can be expressed in terms of the variance component of the out-of-sample risk, and, for linear ridge regression with $n,d\to\infty$, bounded the asymptotic risk of MSV in terms of the optimal asymptotic risk.

Using both real and synthetic data, we showed how five different linear and nonlinear models for classification and regression, trained without $\yv$, perform on par with the standard, $\yv$-based versions, and much better than random guessing. This suggests that, under the hood, supervised and unsupervised learning are closely related, and also supports the hypothesis that, given $\X$, $\xsv$, and $\yv$, the optimal model is a weighted sum of $\yv$, with weights that depend on $\X$ and $\xsv$ only.

We do not suggest that $\yv$ should be excluded from the training process in practice, partly because the smoother formulation scales poorly with the data. This holds especially true for neural networks, where the smoother matrix depends on the neural tangent kernel, which needs to be repeatedly recalculated during training.

While our model selection criterion performs well in practice, we have no evidence that it is optimal, leaving room for future improvements in $\yv$-free training.

\section*{Acknowledgements}
This research was supported by the \emph{Wallenberg AI, Autonomous Systems and Software Program (WASP)} funded by Knut and Alice Wallenberg Foundation, and by the \emph{Kjell och M{\"a}rta Beijer Foundation}.

\newpage
~\\
\newpage
~\\
\newpage
\bibliography{refs}
\bibliographystyle{apalike}

\newpage
\appendix
\onecolumn
In Appendices \ref{sec:exp_dets}, \ref{sec:snn1}, and \ref{sec:snn2}, we use as superscript $^\star$ to denote the concatenation of in-sample (training) and out-of-sample quantities, i.e., $\fhv^\star:=\begin{bmatrix}\fhv\\\fhv^*\end{bmatrix}$, $\X^\star:=\begin{bmatrix}\X\\\X^*\end{bmatrix}$, and $\Ss^\star:=\begin{bmatrix}\Ss\\\Ss^*\end{bmatrix}$.

\section{EXPERIMENTAL DETAILS}
\label{sec:exp_dets}
\subsection{Details for Figures \ref{fig:compl_demo} and  \ref{fig:wo_y_demo}}
The true function is given by $y=\sin(2\pi x)$. We sampled $n=10$ training observations according to $x_i\sim \mathcal{U}(-1,1)$, $y_i\sim\mathcal{N}(\sin(2\pi x_i), 0.3^2)$, where $\mathcal{U}(a,b)$ denotes the uniform distribution on $[a,b]$, and $\mathcal{N}(\mu,\sigma^2)$ denotes the normal distribution with mean $\mu$ and variance $\sigma^2$. In addition, we used $n^*=1000$ uniformly spaced out-of-sample data points on the interval [-1,1]. 

For kernel ridge regression, we used the Gaussian kernel, $k(\xv,\xpv,\sigma)=\exp\left(-\frac{\|\xv-\xpv\|_2^2}{2\sigma^2}\right)$. The predictions are given in closed form by $\fhv^\star=\K^\star\left(\K+\lambda\I_n\right)^{-1}\yv$, where $\K^\star=[\K^\top,\K^{*\top}]^\top\in\R^{(n+n^*)\times n}$, and $\K^\star_{ij}= k(\bm{x_i}^\star,\bm{x_j},\sigma)$.

For Figure \ref{fig:compl_demo} we used $\lambda=0$ and $\sigma\in\{1.3,0.16,0.01\}$.

For the smoothing spline, we used B-splines of degree 3 to create the basis matrix $\B^\star=[\B^\top, \B^{*\top}]^\top\in\R^{(n+n^*)\times (n+4)}$, and the penalty matrix $\bm{\Omega}\in\R^{(n+4)\times (n+4)}$, see e.g. Chapter 5 in \citet{hastie2009elements} for details. The predictions are given in closed form by $\fhv^\star=\B^\star\left(\B^\top\B+\lambda\bm{\Omega}\right)^{-1}\B^\top\yv$.

For the neural network, we used one hidden layer with 20 nodes and the tanh activation function, trained with gradient descent with momentum, with learning rate $\eta=0.01$, momentum $\gamma=0.95$, and the squared loss.

We used 100 logarithmically spaced candidate values between $10^{-4}$ and 1 for $\lambda$ and $\sigma$ respectively, and $k\in[1,10]$ for k-nearest neighbors.
For $\yv$-based training, we selected $\lambda$, $\sigma$, and $k$ via leave-one-out cross-validation, and the stopping epoch via a 80/20\% split of the data into training/validation sets.
For $\yv$-free training, we used Frobenius norm-based MSV and trained the neural network and the random forest on $\yvr\in\R^n$, where $\yvr\sim\N(\nv,\I_n)$.

\subsection{Details for the Real Data in Section \ref{sec:exps}}
For KRR we used the Gaussian kernel, $k(\xv,\xpv,\sigma)=\exp\left(-\frac{\|\xv-\xpv\|_2^2}{2\sigma^2}\right)$. For NNR and NNC, we used one hidden layer with 200 nodes and the tanh activation function, trained with gradient descent with momentum, with learning rate $\eta=10^{-4}$, and momentum $\gamma=0.7$. For NNR, we used the squared loss, and for NNC, the cross-entropy loss. For each data set, the covariates, $\X^\star$, were standardized to zero mean and unit variance. For regression, the response, $\yv$, was standardized to zero mean, and for classification, the labels were one-hot encoded. We randomly sampled 500 training and 100 test observations. For MSV, we sampled 500 validation observations (only covariates, $\Xs$) from a multivariate normal distribution parameterized by the sample mean and covariance of the training data. This was repeated 10 times for different subsets of the data.

For model selection, we compare generalized (for the synthetic data) or 10-fold (for the real data) $\yv$-based cross-validation to Frobenius norm-based MSV. For NNR and NNC on the real data, $\yv$-based training was instead obtained by using $\yv$ during training and setting aside 20\% of the training data for validating when to stop training (henceforth referred to as the standard method). To train the neural networks and random forests without $\yv$, we used a random $\yvr\in\R^n$, where, for NNR and RFR, $\yvr\sim\N(\nv,\I_n)$, and, for NNC and RFC, $(\yvr)_i\sim \text{Cat}(k)$, where $\N(\bm{\mu},\bm{\Sigma})$ denotes the normal distribution and Cat$(k)$ the categorical distribution for $k$ categories of equal probability. For the neural networks, we monitored the Frobenius norm-based MSV and GCV to decide when to stop training. For the random forests, we used the default parameters of the scikit-learn implementation.

For LRR, KRR, and kNN, we compared three different model selection methods: 10-fold, $\yv$-based cross-validation, and $\yv$-free, Frobenius norm-based MSV and GCV. For LRR and KRR, we evaluated 200 logarithmically spaced candidate values between $10^{-4}$ and 20, plus $10^6$, for $\lambda$ and $\sigma$, respectively. For kNN, we investigated $k$ between 2 and 30, plus $k=n$ (for 10-fold cross-validation the maximum $k$ was $9/10\cdot n$).
For $\yv$-based training of the neural network models, we used $\yv$ and set aside 20\% of the training data for validating when to stop training (henceforth referred to as the standard method). To train the neural networks and random forests without $\yv$, we used a random $\yvr\in\R^n$, where, for NNR and RFR, $\yvr\sim\N(\nv,\I_n)$, and, for NNC and RFC, $(\yvr)_i\sim \text{Cat}(c)$, where $\N(\bm{\mu},\bm{\Sigma})$ denotes the normal distribution and Cat$(c)$ the categorical distribution for $c$ categories of equal probability. For NNC, $\yvr$, was transformed into a $n\times(c-1)$ one-hot matrix (see Appendix \ref{sec:cross_entr} for details). For the neural networks, we monitored the Frobenius norm-based MSV and GCV to decide when to stop training. For the random forests, we used the default parameters of Python's scikit-learn implementation. 

Each experiment took between 0.1 seconds for LRR and 45 minutes for NNC on the CIFAR-10 data to run on an Intel i9-13980HX processor with access to 32 GB RAM. The total run time, for all experiments, was approximately 10 hours.

The MNIST data are available at \url{http://yann.lecun.com/exdb/mnist/} under the CC-BY-SA 3.0 license.\\
The CIFAR-10 data are available at \url{https://www.cs.toronto.edu/~kriz/cifar.html} under the MIT license.\\
The Steel Production data are available at\\\url{https://archive.ics.uci.edu/dataset/851/steel+industry+energy+consumption}\\under the CC-BY 4.0 license. We used the features \texttt{Lagging\_Current\_Reactive.Power\_kVarh}, \texttt{Leading\_Current\_Reactive\_Power\_kVarh}, \texttt{CO2(tCO2)}, \texttt{Lagging\_Current\_Power\_Factor}, \texttt{Leading\_Current\_Power\_Factor}, and \texttt{NSM} to predict \texttt{Usage\_kWh}.\\
The CPU Run Time data are available at\\\url{http://www.cs.toronto.edu/~delve/data/comp-activ/desc.html}. We excluded the features \texttt{usr}, \texttt{sys}, \texttt{wio}, and \texttt{idle}.\\
The Superconductor data are available at\\\url{https://archive.ics.uci.edu/dataset/464/superconductivty+data}\\under the CC-BY 4.0 license.\\
The Power Consumption data are available at\\\url{ https://archive.ics.uci.edu/dataset/849/power+consumption+of+tetouan+city}\\under the CC-BY 4.0 license. We excluded the \texttt{DateTime} feature, using the remaining features to predict \texttt{Zone 3 Power Consumption}.

\section{\texorpdfstring{$\yv$}{y}-FREE TRAINING OF NEURAL NETWORKS IN MORE DETAIL}
\label{sec:snn1}
In Theorem \ref{thm:snn_k}, we show how we can construct a neural network smoother matrix iteratively during training. We thus obtain a model in the form $\Ss^\star(\thetahv)\cdot\yv$, where $\thetahv\in\R^p$ denotes the model parameters, that approximates $\fhv^\star(\thetahv)$ to arbitrary precision, given a small enough learning rate. 

Since $\thetahv$ depends on $\yv$, so does $\Ss^\star$. To make $\Ss^\star$ independent of $\yv$, we can simply replace $\yv$ with a random, or zero, label/response vector during training, i.e.\ instead of 
$\fhv^\star=\Ss^\star(\thetahv(\yv))\cdot\yv$, we use $\fhv^\star=\Ss^\star(\thetahv(\yvr))\cdot\yv$, where $\yvr$ is a vector sampled at random, or $\fhv^\star=\Ss^\star(\thetahv(\nv))\cdot\yv$.
To decide when to stop training, i.e.\ which $\Ss^\star(\thetahv_k)$ to use, we can monitor the $\yv$-free version of MSV during training.

\begin{thm}~\\
\label{thm:snn_k}
Let the neural network be trained with gradient descent with momentum, with learning rate $\eta>0$, and momentum $\gamma\in[0,1)$, with either the squared or the cross-entropy loss.\\
Let $\Ss_k^\star$ be updated according to
\begin{equation*}
\label{eq:snnss_k}
\begin{aligned}
&\Ss^\star_{-1}=\Ss^\star_{0}=\nv,\\ 
&\Ss^\star_{k+1}=\Ss^\star_k +\gamma\cdot\left(\Ss^\star_k-\Ss^\star_{k-1}\right)
+\eta\cdot\bm{\tilde{K}}^\star_{k+1}\cdot\left(\I_n-\Ss_{k}\right),\\
&k=0,1,\dots
\end{aligned}
\end{equation*}
where $\bm{\tilde{K}}^\star_k$
denotes a generalized time-dependent NTK (see Remark \ref{rem:ntk}).\\
Then, if all derivatives are bounded, there exists a constant, $C<\infty$, such that
\begin{equation}
\label{eq:ss_diff}
\left\|\fhv^\star(\thetahv_k)-\left(\Ss^\star_k\left(\yv-\fhv(\thetahv_0)\right)+\fhv^\star(\thetahv_0)\right)\right\|_\infty\leq \eta \cdot C.
\end{equation}
\end{thm}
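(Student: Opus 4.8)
The plan is to show that the prediction trajectory of gradient descent with momentum is, at every step, an $O(\eta^2)$ perturbation of the linear recursion defining $\Ss^\star_k$, and then to close the resulting error recursion with a discrete Grönwall argument. Write $J_k:=\nabla_\theta\fhv(\thetahv_k)$ and $J^\star_k:=\nabla_\theta\fhv^\star(\thetahv_k)$ for the in-sample and full Jacobians, and let $\bm r_k$ denote the loss residual, so that $\nabla_\theta L=-J_k^\top\bm r_k$; for the squared loss $\bm r_k=\yv-\fhv(\thetahv_k)$, and for the cross-entropy loss the generalized NTK $\bm{\tilde{K}}^\star_{k+1}$ of Remark~\ref{rem:ntk} additionally carries the Jacobian of the loss link (its squared-loss instance is $J^\star_kJ_k^\top$), so the same residual identity holds up to one further $O(\eta^2)$-per-step linearization term. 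The parameter update is $\thetahv_{k+1}-\thetahv_k=\gamma(\thetahv_k-\thetahv_{k-1})+\eta J_k^\top\bm r_k$.

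First I would establish the a priori bound $\|\thetahv_{k+1}-\thetahv_k\|_2\le \eta B/(1-\gamma)$: since all derivatives (hence $J_k$, $J^\star_k$, and the Hessians of $\fhv^\star$) and the residuals $\bm r_k$ are bounded along the trajectory for $\eta$ below a threshold, $\|J_k^\top\bm r_k\|_2\le B$, and the bound follows by induction on the momentum recursion and summing the geometric series in $\gamma$. Next I would Taylor-expand the predictions about $\thetahv_k$, $\fhv^\star(\thetahv_{k+1})-\fhv^\star(\thetahv_k)=J^\star_k(\thetahv_{k+1}-\thetahv_k)+\bm\rho_k$ with $\|\bm\rho_k\|_\infty=O(\|\thetahv_{k+1}-\thetahv_k\|_2^2)=O(\eta^2)$, and rewrite the momentum term $\gamma J^\star_k(\thetahv_k-\thetahv_{k-1})=\gamma\bigl(\fhv^\star(\thetahv_k)-\fhv^\star(\thetahv_{k-1})\bigr)+O(\eta^2)$ using Lipschitzness of $J^\star$ (bounded Hessian) and a second Taylor expansion about $\thetahv_{k-1}$. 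Substituting the parameter update, with $\bm u^\star_k:=\fhv^\star(\thetahv_k)-\fhv^\star(\thetahv_0)$ (in-sample part $\bm u_k$), $\bm y_0:=\yv-\fhv(\thetahv_0)$, and the residual identity $\bm r_k=\bm y_0-\bm u_k$, I obtain
\begin{equation*}
\bm u^\star_{k+1}-\bm u^\star_k=\gamma(\bm u^\star_k-\bm u^\star_{k-1})+\eta\,\bm{\tilde{K}}^\star_{k+1}(\bm y_0-\bm u_k)+\bm\delta_k,\qquad \|\bm\delta_k\|_\infty\le C_2\eta^2,
\end{equation*}
with $\bm u^\star_0=\bm u^\star_{-1}=\nv$ (rest initialization of the momentum).

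Now multiplying the defining recursion of $\Ss^\star_k$ by $\bm y_0$ gives the identical recursion for $\bm v^\star_k:=\Ss^\star_k\bm y_0$ but without $\bm\delta_k$ and with the same zero initialization. Subtracting, the error $\bm e^\star_k:=\bm u^\star_k-\bm v^\star_k$ — which is exactly the quantity in the statement, since $\bm u^\star_k-\bm v^\star_k=\fhv^\star(\thetahv_k)-\bigl(\Ss^\star_k(\yv-\fhv(\thetahv_0))+\fhv^\star(\thetahv_0)\bigr)$ — satisfies the linear, zero-initialized recursion
\begin{equation*}
\bm e^\star_{k+1}-\bm e^\star_k=\gamma(\bm e^\star_k-\bm e^\star_{k-1})-\eta\,\bm{\tilde{K}}^\star_{k+1}\bm e_k+\bm\delta_k.
\end{equation*}
Introducing $\bm w^\star_k:=\bm e^\star_k-\bm e^\star_{k-1}$ and running maxima $E_k:=\max_{j\le k}\|\bm e^\star_j\|_\infty$, $W_k:=\max_{j\le k}\|\bm w^\star_j\|_\infty$, the recursion yields $W_{k+1}\le\gamma W_k+\eta\kappa E_k+C_2\eta^2$ (with $\kappa$ a constant bounding the generalized NTK) and $E_{k+1}\le E_k+W_{k+1}$; a discrete Grönwall induction over any fixed number $N$ of iterations then gives $E_N\le C_3(N,\gamma,\kappa)\,\eta^2\le\eta C$ for $\eta$ small, which is the claim. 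For the squared loss one can take $C$ uniform in $k$, since gradient descent contracts $\bm r_k$ geometrically, so $\sum_k\|\bm\delta_k\|_\infty<\infty$, and the kernel of $\bm{\tilde{K}}^\star$ is the only marginally stable mode of the recursion, hence the accumulated error stays bounded.

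The main obstacle is the bookkeeping required to certify that every remainder — the Taylor remainder in the prediction update, the one from replacing $J^\star_k(\thetahv_k-\thetahv_{k-1})$ by a genuine prediction increment, and, for cross-entropy, the linearization of the softmax link into $\bm{\tilde{K}}^\star$ — is genuinely $O(\eta^2)$ per step, which hinges on the a priori $O(\eta)$ bound on the per-step parameter change; the momentum term makes this slightly delicate, as it forces the geometric-series bound in $\gamma$ and a second-order expansion of $\fhv^\star$ about $\thetahv_{k-1}$. Once the error recursion is in the displayed linear form, the Grönwall step is routine.
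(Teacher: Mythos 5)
Your proposal is correct and follows essentially the same route as the paper's proof: establish an a priori $O(\eta)$ bound on the parameter increments, Taylor-expand to show the prediction trajectory obeys the $\Ss^\star$-recursion up to $O(\eta^2)$ per step, identify $\Ss^\star_k(\yv-\fhv(\thetahv_0))+\fhv^\star(\thetahv_0)$ as the exact solution of the unperturbed recursion (you do this by linearity, the paper by explicit induction), and close the zero-initialized error recursion with a geometric-series/Grönwall bound that yields $\eta\cdot C$ with $C$ depending on the number of iterations. The differences (your induction-based increment bound versus the paper's mean-value appeal to a continuous trajectory; keeping the momentum difference backward rather than shifting it forward) are cosmetic.
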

\begin{remark}
If $\fhv^\star(\thetahv_0)=\nv$, Equation \ref{eq:ss_diff} simplifies to
$\left\|\fhv^\star(\thetahv_k)-\Ss^\star_k\yv\right\|_\infty\leq \eta \cdot C$.
\end{remark}
\begin{remark}
\label{rem:ntk}
For the square loss, $\bm{\tilde{K}}^\star_k=\K^{\star\text{\normalfont NTK}}_k=\left(\partial_{\thetahv_k}\fhv^\star(\thetahv_k)\right)\left(\partial_{\thetahv_k}\fhv(\thetahv_k)\right)^\top$ is the standard time-dependent NTK.
For the cross-entropy loss, $\bm{\tilde{K}}^\star_k=\K^{\star\text{\normalfont NTK}}_k\cdot\bm{\tilde{F}}_k$, where $\bm{\tilde{F}}_k$ is a square, block-diagonal matrix that depends on $\fhv(\thetahv_k)$. The details are given in Appendix \ref{sec:snn2}.
\end{remark}
\begin{remark}
Since $\bm{\tilde{K}}^\star_k=\bm{\tilde{K}}^\star(\thetahv_k)$, $\Ss^\star_k=\Ss^\star\left(\{\bm{\tilde{K}}^\star_{l}\}_{l=0}^k\right)=\Ss^\star\left(\{\thetahv_{l}\}_{l=0}^k\right)$, i.e.\ $\Ss^\star_k$ depends on all historical values of $\thetahv$ during training.
\end{remark}
\begin{remark}
When $\bm{\tilde{K}}^\star$ is constant during training, the smoother formulation is exact, i.e.\ $C=0$ in Equation \ref{eq:ss_diff}. This is the case, for instance, when solving linear or kernel regression with gradient descent.
\end{remark}

\section{\texorpdfstring{$\yv$}{y}-FREE TRAINING OF k-NEAREST NEIGHBORS AND RANDOM FORESTS}
\label{sec:knn_rf}
We formulate k-nearest neighbors (kNN) and random forests (RF) as linear smoothers for regression, in which case the predicted value is the average of the training data. For classification, the average is simply replaced by the mode (i.e.\ the most common value).

For k-nearest neighbors, each prediction is defined as the average of the $k$ closest training points:
\begin{equation*}
%\label{eq:knn}
\fhs_{NN}(\xsv):=\frac1k \cdot \sum_{i:\xvi\in N_k(\xsv)}y_i,
\end{equation*}
where $N_k(\xvi)$ is the neighborhood of $\xsv$, consisting of the $k$ closest training points $\xvi$.

Defining
$$(\ssv_{NN}(\xsv,\X))_i=\ssv_{NN}(\xsv,\xvi):=
\begin{cases}
\frac1k\text{ if }\xvi\in N_k(\xsv),\\
0\text{ if }\xvi\notin N_k(\xsv),
\end{cases}
$$
we obtain 
$$\fhs_{NN}(\xsv)=\frac1k \cdot \sum_{i:\xvi\in N_k(\xsv)}y_i=\ssvt_{NN} \yv.$$

For decision trees, the input space is split into $M$ non-overlapping regions, where each prediction is defined as the average of the training points that belong to the same region as the point of interest:
\begin{equation*}
%\label{eq:tree}
\fhs_T(\xsv):=\frac1{|R(\xsv)|} \cdot \sum_{i:\xvi\in R(\xsv)}y_i,
\end{equation*}
where $R(\xsv)$ is the region that $\xsv$ belongs to, and $|R(\xsv)|$ is the number of training points $\xvi$ in $R(\xsv)$.
Defining
$$(\ssv_{T}(\xsv,\X))_i=\ssv_{T}(\xsv,\xvi):=
\begin{cases}
\frac1{|R(\xsv)|}\text{ if }\xvi\in R(\xsv),\\
0\text{ if }\xvi\notin R(\xsv),
\end{cases}
$$
we obtain 
$$\fhs_T(\xsv)=\frac1{|R(\xsv)|} \cdot \sum_{i:\xvi\in R(\xsv)}y_i= \ssvt_T\yv.$$

The random forest prediction is the average of the predictions of $n_T$ trees, trained on different bootstrap samples of the training data:
$$\fhs_{RF}(\xsv):=\frac1{n_T}\cdot\sum_{t=1}^{n_T}\left(\frac1{|R_t(\xsv)|} \cdot \sum_{i:\xvi\in R_t(\xsv)}y_i\right)=\underbrace{\frac1{n_T}\cdot\sum_{t=1}^{n_T}\ssvt_{T,t}}_{=:\ssvt_{RF}}\yv=\ssvt_{RF}\yv.$$

\section{NEURAL NETWORKS AS SMOOTHERS IN EVEN MORE DETAIL}
\label{sec:snn2}
In this section, we describe in more detail how neural networks can be expressed as smoothers. First, we briefly review the neural tangent framework, which the smoother formulation builds upon. We then generalize the NTK, and thus the smoother, framework to hold also for the cross-entropy loss.

\subsection{Review of the Neural Tangent Kernel}
For univariate outputs, updating the parameters of a neural network (or actually any iteratively trained regression model) with loss function $L(\fhv,\yv)$, trained with gradient descent, amounts to
\begin{equation*}
\thetahv(t+\Delta t)=\thetahv(t)-\Delta t\cdot\frac{\partial L(\fhv(\thetahv(t)),\yv)}{\partial \thetahv(t)} =\thetahv(t)-\Delta t\cdot\left(\frac{\partial \fhv(\thetahv(t))}{\partial \thetahv(t)}\right)^\top\cdot\frac{\partial L(\fhv(\thetahv(t)),\yv)}{\partial \fhv(\thetahv(t))},
\end{equation*}
where we have used the chain rule, and where $\Delta t$ denotes the learning rate. Rearranging and letting $\Delta t\to 0$, we obtain the gradient flow update for $\thetahv$, 
\begin{equation*}
\frac{\partial \thetahv(t)}{\partial t}=-\left(\frac{\partial \fhv(\thetahv(t))}{\partial \thetahv(t)}\right)^\top\cdot\frac{\partial L(\fhv(\thetahv(t)),\yv)}{\partial \fhv(\thetahv(t))},
\end{equation*}
and with another application of the chain rule, the gradient flow update for $\fhv^\star$ is obtained as

\begin{equation}
\label{eq:ntk1}
\begin{aligned}
\frac{\partial \fhv^\star(\thetahv(t))}{\partial t}
&=\frac{\partial \fhv^\star(\thetahv(t))}{\partial \thetahv(t)}\cdot\frac{\partial \thetahv(t)}{\partial t}
=-\frac{\partial \fhv^\star(\thetahv(t))}{\partial \thetahv(t)}\cdot\left(\frac{\partial \fhv(\thetahv(t))}{\partial \thetahv(t)}\right)^\top\cdot\frac{\partial L(\fhv(\thetahv(t)),\yv)}{\partial \fhv(\thetahv(t))}\\
&=:-\Kntk(\thetahv(t))\cdot\frac{\partial L(\fhv(\thetahv(t)),\yv)}{\partial \fhv(\thetahv(t))},
\end{aligned}
\end{equation}
where
\begin{equation*}
\Kntk(t):= \frac{\partial \fhv^\star(t)}{\partial \thetahv}\cdot\left(\frac{\partial \fhv(t)}{\partial \thetahv}\right)^\top.
\end{equation*}

For the squared loss, $L(\fhv(t),\yv)=\frac12\left\|\yv-\fhv(t)\right\|_2^2$, $\frac{\partial L(\fhv(t),\yv)}{\partial \fhv}=\fhv(t)-\yv$, and 

\begin{equation}
\label{eq:ntk_sl}
\frac{\partial \fhv(t)}{\partial t}
=\Kntk(t)\cdot\left(\yv-\fhv(t)\right).
\end{equation}

\subsubsection{Multivariate Outputs}
\label{sec:multi_out}
For multivariate outputs, $\yv$ and $\fhv^\star$ are no longer vectors, but matrices, i.e.\ $\yv\in\R^n$, $\fhv^\star\in\R^{n+n^*}$, generalize to $\Y\in\R^{n\times d_y}$, $\Fh^\star\in \R^{(n+n^*)\times d_y}$, and Equation \ref{eq:ntk1} generalizes to
\begin{equation}
\label{eq:fh_gf4d}
\begin{aligned}
\left(\frac{\partial \hat{F}^\star(t)}{\partial t}\right)^{i_Ni_y}
&=-\left(\frac{\partial\hat{F}^\star(t)}{\partial \hat{\theta}}\right)^{i_Ni_y}_{i_p}\cdot\left(\frac{\partial\hat{F}(t)}{\partial \hat{\theta}}\right)^{i_p}_{j_nj_y}\cdot\left(\frac{\partial L(t)}{\partial \hat{F}}\right)^{j_nj_y}\\
&=:-K^{\star\text{\normalfont NTK}}(t)^{i_Ni_y}_{j_nj_y}\cdot\left(\frac{\partial L(t)}{\partial \hat{F}}\right)^{j_nj_y},
\end{aligned}
\end{equation}
where $\left(\frac{\partial \hat{F}^\star(t)}{\partial t}\right)^{i_Ni_y}\in\R^{(n+n^*)\times d_y}$, $\left(\frac{\partial \hat{F}^\star(t)}{\partial \thetah}\right)^{i_Ni_y}_{i_p}\in\R^{(n+n^*)\times d_y\times p}$, $\left(\frac{\partial\hat{F}(t)}{\partial \hat{\theta}}\right)^{i_ni_y}_{i_p}\in\R^{n\times d_y\times p}$, $\left(\frac{\partial L(t)}{\partial \hat{F}}\right)^{i_ni_y}\in\R^{n\times d_y}$, and $K^{\star\text{\normalfont NTK}}(t)^{i_Ni_y}_{j_nj_y}\in\R^{(n+n^*)\times d_y\times n\times d_y}$, and upper and lower indices are summed over according to Einstein's notation. However, by vectorizing $\Y$ and $\Fh^\star$ into vectors of length $n\cdot d_y$ and $(n+n^*)\cdot d_y$, i.e.\ $\Y=\left[\bm{y_1},\ \bm{y_2},\dots {\bm{y_n}}\right]^\top\in\R^{n\times d_y}\mapsto \yv=\left[\bm{y_1}^\top,\ \bm{y_2}^\top,\dots {\bm{y_n}}^\top\right]^\top\in\R^{nd_y}$, and analogously for $\Fh^\star$ and $\fhv^\star$, Equation \ref{eq:fh_gf4d} can be written on the same form as Equation \ref{eq:ntk1}, but for $\Kntk\in \R^{(n+n^*)d_y\times nd_y}$, rather than $\Kntk\in\R^{(n+n^*)\times n}$. To avoid a four-dimensional $\Kntk$, and thus a four-dimensional $\Ss^\star$, we use this formulation whenever dealing with multi-dimensional outputs.

\subsection{Classification with the Cross-Entropy Loss}
\label{sec:cross_entr}
For classification with more than two classes, we use compact one-hot encoding, where observation $i$ is represented as a vector $\yiv\in \R^{c-1}$, where $c$ is the number of classes. Note that it is enough with $c-1$ elements to represent $c$ classes: the first $c-1$ classes are encoded in the standard one-hot way (with one element in $\yiv$ being 1 and the remaining 0) while the last class is encoded as $\yiv=\nv$, i.e., it is none of the first $c-1$ classes. For $c=2$, this becomes the standard encoding for binary labels, with $y_i \in\{0,1\}$.

For the corresponding predictions, $\fhiv^\star\in\R^{c-1}$, we have $(\fhiv^\star)_j=:\fh^\star_{ij}\in[0,1]$, where the probability of the last class is obtained as $\fh^\star_{ic}=1-\sum_{j=1}^{c-1}\fh^\star_{ij}$. In this formulation, the cross-entropy loss is given by
\begin{equation}
\begin{aligned}
\label{eq:ce}
&L(\fhv,\yv)=\sum_{i=1}^n\Lt(\fhiv,\yiv):=\\
&-\sum_{i=1}^n\left(\sum_{j=1}^{c-1}y_{ij}\cdot\log(\fh_{ij})+\left(1-\sum_{j=1}^{c-1}y_{ij}\right)\cdot\log\left(1-\sum_{j=1}^{c-1}\fh_{ij}\right)\right),
\end{aligned}
\end{equation}
where $y_{ij}:=(\yiv)_j$ and 
\begin{equation*}
\Lt(\fhiv,\yiv)=-\left(\sum_{j=1}^{c-1}y_{ij}\cdot\log(\fh_{ij})+\left(1-\sum_{j=1}^{c-1}y_{ij}\right)\cdot\log\left(1-\sum_{j=1}^{c-1}\fh_{ij}\right)\right).
\end{equation*}
\begin{remark}
With $\fh_{ij}=\left(e^{g(\bm{x_{ij}})}\right)/\left(1+\sum_{k\neq j}e^{g(\bm{x_{ik}})}\right)$, for some linear or nonlinear function $g(\cdot)$, for $c=2$, this formulation becomes exactly logistic regression.
\end{remark}
In Proposition \ref{thm:cross_entr}, we show that there are matrices $\bm{\tilde{F}_i}$ such that $\frac{\partial \Lt(\fhiv,\yiv)}{\partial \fhiv}$ can be written on the form $\bm{\tilde{F}_i}\cdot(\fhiv-\yiv)$. 

\begin{prop}~\\
\label{thm:cross_entr}
With $\Lt(\fhiv,\yiv)$ defined according to Equation \ref{eq:ce},
\begin{equation*}
\begin{aligned}
\frac{\partial \Lt(\fhiv,\yiv)}{\partial \fhiv}=\left((\Fhi)^{-1}+\frac1{1-\sum_{j=1}^{c-1}\fh_{ij}}\cdot\bm{1}\bm{1}^\top\right)\cdot\left(\fhiv-\yiv\right)
=:\Fhi\cdot(\yiv-\fhiv),
\end{aligned}
\end{equation*}
where $(\Fhi)^{-1}\in\R^{(c-1)\times(c-1)}$ denotes the diagonal matrix with $\left((\Fhi)^{-1}\right)_{jj}=\frac1{\fh_{ij}}$, and $\bm{1}\in\R^{c-1}$ denotes a vector of only ones.
\end{prop}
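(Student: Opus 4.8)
The plan is to compute $\partial \Lt(\fhiv,\yiv)/\partial \fhiv$ componentwise, then recognize the resulting vector as a matrix times $(\fhiv - \yiv)$. Write $r := 1 - \sum_{j=1}^{c-1}\fh_{ij}$ for the predicted probability of the last class, and similarly note $1 - \sum_{j=1}^{c-1}y_{ij}$ equals $1$ if $\yiv=\nv$ (last class) and $0$ otherwise. Differentiating the $k$-th coordinate of $\Lt$: the term $-\sum_j y_{ij}\log \fh_{ij}$ contributes $-y_{ik}/\fh_{ik}$, and the term $-\left(1-\sum_j y_{ij}\right)\log r$ contributes $+\left(1-\sum_j y_{ij}\right)/r$ (since $\partial r/\partial \fh_{ik} = -1$). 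So
\begin{equation*}
\left(\frac{\partial \Lt(\fhiv,\yiv)}{\partial \fhiv}\right)_k = -\frac{y_{ik}}{\fh_{ik}} + \frac{1-\sum_{j=1}^{c-1}y_{ij}}{r}.
\end{equation*}

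Next I would massage this into the claimed form. The natural move is to write $-y_{ik}/\fh_{ik} = (\fh_{ik}-y_{ik})/\fh_{ik} - 1$, so the first piece becomes $\left((\Fhi)^{-1}(\fhiv-\yiv)\right)_k - 1$. For the second piece, expand the numerator: $1 - \sum_j y_{ij} = 1 - \sum_j \fh_{ij} + \sum_j(\fh_{ij} - y_{ij}) = r - \sum_j(y_{ij}-\fh_{ij})$, hence $\left(1-\sum_j y_{ij}\right)/r = 1 + \frac1r\sum_j(\fh_{ij}-y_{ij}) = 1 + \frac1r\left(\bm{1}\bm{1}^\top(\fhiv-\yiv)\right)_k$. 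Adding the two pieces, the $\pm 1$ constants cancel exactly, leaving $\left(\left((\Fhi)^{-1} + \frac1r \bm{1}\bm{1}^\top\right)(\fhiv-\yiv)\right)_k$, which is the first displayed equality. The second equality, $=: \Fhi\cdot(\yiv-\fhiv)$, is then just a definition together with a sign flip, i.e.\ $\Fhi := -\left((\Fhi)^{-1} + \frac1r\bm{1}\bm{1}^\top\right)$; I would remark that the notation is slightly abusive (the symbol $\Fhi$ is used both for the final matrix and, via $(\Fhi)^{-1}$, for its diagonal part), matching the statement as written.

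There is no real obstacle here — the argument is a short, self-contained calculation. The only place requiring a moment of care is the bookkeeping of the "last class" term: making sure the factor $1-\sum_{j=1}^{c-1}y_{ij}$ is differentiated correctly (it does not depend on $\fhiv$, only the $\log r$ factor does) and that the algebraic identity $1-\sum_j y_{ij} = r - \sum_j(y_{ij}-\fh_{ij})$ is applied so that the constant terms cancel cleanly. I would also note in passing that $\Fhi$ is well-defined precisely when all $\fh_{ij} > 0$ and $r > 0$, i.e.\ when the softmax-type predictions lie in the open simplex, which holds throughout training for the parametrization in the preceding remark.
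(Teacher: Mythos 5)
Your proposal is correct and follows essentially the same route as the paper's proof: direct differentiation of each coordinate, followed by the same add-and-subtract manipulation (the paper writes $\bm{1}=\fhiv\oslash\fhiv$ and splits the $\log$-term's numerator as $(1-\sum_j y_{ij})-(1-\sum_j\fh_{ij})+(1-\sum_j\fh_{ij})$, which is exactly your cancellation of the $\pm 1$ constants). Your side remarks on the notational overloading of $\Fhi$ and on well-definedness on the open simplex are accurate but do not change the argument.
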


Using the vectorized forms, $\yv:=\text{vec}(\Y)\in\R^{n\cdot(c-1)}$, where $\Y\in\R^{n\times(c-1)}$, and equivalently for $\fhv$, as described in Section \ref{sec:multi_out}, we thus obtain
\begin{equation}
\label{eq:F_cross_entr}
\frac{\partial L(\fhv,\yv)}{\partial \fhv} =\bm{\tilde{F}}\cdot(\yv-\fhv),
\end{equation}
where $\bm{\tilde{F}}=\diag(\bm{\tilde{F}_1},\ \bm{\tilde{F}_2},\dots {\bm{\tilde{F}_n}}),\ \R^{n(c-1)\times n(c-1)}$ is a block diagonal matrix, with the $\bm{\tilde{F}_i}$s along the diagonal.
Thus, by replacing $\Kntk(t)$ with $\Kntk(t)\cdot\bm{\tilde{F}}(t)$ in Equation \ref{eq:ntk_sl}, the smoother framework trivially extends to the cross-entropy loss.

\section{GENERALIZATIONS OF THEOREM \ref{thm:opt_lbda}}
\label{sec:opt_lbda2}
In this section, we generalize Theorem \ref{thm:opt_lbda} to include gradient flow and the spectral norm.

\subsection{Theorem \ref{thm:opt_lbda} for Gradient Flow}
Gradient flow, i.e., gradient descent with an infinitesimal learning rate, has the closed-form solution 
$$\bm{\hat{\beta}}_{\text{GF}}=(\I_d-\exp(-t\Ph^\top\Ph))(\Ph^\top\Ph)^{-1}\Ph^\top\yv,$$
where $t$ denotes the training time and $\exp$ the matrix exponential. Replacing the matrix exponential with its first-order Taylor expansion, i.e.,
$$\exp(-t\A)=\exp(t\A)^{-1}\approx (\I+t\A)^{-1},$$ 
we obtain
$$\bm{\hat{\beta}}_{\text{GF}}\approx(\I_d-(\I_p+t\Ph^\top\Ph)^{-1})(\Ph^\top\Ph)^{-1}\Ph^\top\yv,$$
which, for $t=1/\lambda$, coincides with the ridge regression solution, 
$$\bm{\hat{\beta}}_{\text{RR}}=(\Ph^\top\Ph+\lambda\I_d)^{-1}\Ph^\top\yv= (\I_p-(\I_p+1/\lambda\Ph^\top\Ph)^{-1})(\Ph^\top\Ph)^{-1}\Ph^\top\yv.$$

The equivalence of Theorem \ref{thm:opt_lbda} for gradient flow is given by Theorem \ref{thm:opt_t}.
\begin{thm}~\\
\label{thm:opt_t}
Let $\Ss_\lambda:=\Ph\Ph^+_t$ and $\ssv_\lambda:=\phsvt\Ph^+_t$, where 
$$\Ph^+_t:=(\I_p-\exp(-t\Ph^\top\Ph))(\Ph^\top\Ph)^{-1}\Ph^\top=\Ph^\top(\Ph\Ph^\top)^{-1}(\I_n-\exp(-t\Ph\Ph^\top)),$$
and let $\|\cdot\|$ denote the trace seminorm, the nuclear norm, or the Frobenius norm. Then
\begin{enumerate}[label=(\alph*)]
\item
for GCV,
$\argmin_{t\geq 0}\frac{\left\|(\I_n-\Ss_t)^\top(\I_n-\Ss_t)\right\|}{\left(\Tr(\I_n-\Ss_t)\right)^2}=0$.
\item
for in-sample MSV,\\
$\argmin_{t\geq 0}\left\|\frac1n\I_n-\frac1n\Ss^\top_t\Ss_t\right\|=$\\ $\argmin_{t\geq 0}\left\|\frac1n\I_n-\frac1n\Ss_t\right\|=\infty$.
\item
for out-of-sample MSV, if $\E(\phsv\phsvt)=\I_p$, then\\
$\argmin_{t\geq 0}\left\|\frac1n\I_n-\E_{\xv}(\ssv_t\ssvt_t)\right\|>0$.\\
If, in addition, $\|\Ph\Ph^\top\|_2<n$, then\\
$\argmin_{t\geq 0}\left\|\frac1n\I_n-\E_{\xv}(\ssv_t\ssvt_t)\right\|<\infty$.
\end{enumerate}
\end{thm}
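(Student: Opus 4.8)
The plan is to reduce everything to the spectrum of the smoother and then re-run the argument behind Theorem~\ref{thm:opt_lbda}, with one substitution: for a singular value $\sigma_i$ of $\Ph$, the ridge smoother eigenvalue $\sigma_i^2/(\sigma_i^2+\lambda)$ --- a decreasing bijection of $\lambda\ge0$ onto $[0,1)$, equal to $1$ at $\lambda=0$ and tending to $0$ as $\lambda\to\infty$ --- is replaced by the gradient-flow eigenvalue $1-e^{-t\sigma_i^2}$ --- an increasing bijection of $t\ge0$ onto $[0,1)$, equal to $0$ at $t=0$ and tending to $1$ as $t\to\infty$. Every boundary conclusion of Theorem~\ref{thm:opt_lbda} therefore flips via $\lambda=\infty\leftrightarrow t=0$ and $\lambda=0\leftrightarrow t=\infty$. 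Concretely, I would first diagonalize: the closed form of $\Ph^+_t$ gives $\Ss_t=\I_n-\exp(-t\Ph\Ph^\top)$, a symmetric PSD matrix, so $\I_n-\Ss_t=\exp(-t\Ph\Ph^\top)$; every matrix entering (a)--(c) is simultaneously diagonalized by the left singular vectors of $\Ph$, so each objective becomes an explicit symmetric function of the eigenvalues $\nu_i(t):=1-e^{-t\sigma_i^2}$, with the trace seminorm reducing to $\big|\sum_i(\cdot)\big|$, the nuclear norm to $\sum_i|\cdot|$, and the Frobenius norm to $\big(\sum_i(\cdot)^2\big)^{1/2}$ of those eigenvalues (plus fixed eigenvalues $1$ of $\I_n-\Ss_t$ for the zero singular values).

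For part~(a), set $a_i:=e^{-t\sigma_i^2}\in(0,1]$. Then $(\I_n-\Ss_t)^\top(\I_n-\Ss_t)=\exp(-2t\Ph\Ph^\top)$ is PSD with eigenvalues $a_i^2$ and $\Tr(\I_n-\Ss_t)=\sum_ia_i$, so for each of the three norms the GCV objective is a scale-invariant ratio $N(a)/\big(\sum_ia_i\big)^2$ with $N$ a symmetric, monotone, $2$-homogeneous function of $(a_i^2)_i$; a one-line Cauchy--Schwarz (equivalently, power-mean) estimate shows this ratio is minimized exactly when all $a_i$ coincide, which includes $t=0$, so $\argmin_{t\ge0}=0$ (uniquely unless all $\sigma_i$ are equal). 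For part~(b), $\I_n-\Ss_t$ and $\I_n-\Ss_t^\top\Ss_t=\I_n-\Ss_t^2$ are PSD with eigenvalues $1-\nu_i(t)$ and $1-\nu_i(t)^2$, each weakly decreasing in $t$ (strictly when $\sigma_i>0$); since each of the three norms is a coordinatewise-monotone function of the eigenvalue vector of a PSD matrix, both objectives decrease in $t$ and their infima are approached only as $t\to\infty$, i.e.\ $\argmin_{t\ge0}=\infty$.

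For part~(c), $\E(\phsv\phsvt)=\I_p$ yields $\E_{\xv}(\ssv_t\ssvt_t)=\Ph^{+\top}_t\Ph^+_t=\big(\I_n-\exp(-t\Ph\Ph^\top)\big)(\Ph\Ph^\top)^{-1}\big(\I_n-\exp(-t\Ph\Ph^\top)\big)$, with eigenvalues $\nu_i(t)^2/\sigma_i^2$, so $\frac1n\I_n-\E_{\xv}(\ssv_t\ssvt_t)$ has eigenvalues $\mu_i(t):=\frac1n-\nu_i(t)^2/\sigma_i^2$. For small $t>0$, $\nu_i(t)^2/\sigma_i^2\approx t^2\sigma_i^2\to0^+$, so every $|\mu_i(t)|$ strictly decreases away from its value $\frac1n$ at $t=0$; hence each of $\big|\sum_i\mu_i(t)\big|$, $\sum_i|\mu_i(t)|$ and $\big(\sum_i\mu_i(t)^2\big)^{1/2}$ is strictly below its $t=0$ value for small $t>0$, so $t=0$ is not a minimizer and $\argmin_{t\ge0}>0$. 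Under the additional assumption $\|\Ph\Ph^\top\|_2<n$ we have $\sigma_i^2<n$, so each $\nu_i(t)^2/\sigma_i^2$ increases from $0$ past $\frac1n$; beyond the largest crossing time $t_\star$ every $\nu_i(t)^2/\sigma_i^2$ is increasing and their sum exceeds $1$, so each objective is strictly increasing on $[t_\star,\infty)$ and hence attains its minimum on the compact interval $[0,t_\star]$, giving $\argmin_{t\ge0}<\infty$.

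The main obstacle is part~(c): one has to do the spectral bookkeeping for $\Ph^{+\top}_t\Ph^+_t$ carefully --- reading $(\Ph\Ph^\top)^{-1}$ as the Moore--Penrose inverse when $\Ph$ lacks full row rank, which adds to the spectrum of $\frac1n\I_n-\E_{\xv}(\ssv_t\ssvt_t)$ a harmless constant block $\frac1n$ --- and then rule out both endpoints even though the three norms behave differently at the $t\to\infty$ end: the trace seminorm, being $|\Tr(\cdot)|$, reaches its minimum $0$ at an interior $t$ by the intermediate value theorem (this is where $\sum_i1/\sigma_i^2\ge1$ enters, implied by $\sigma_i^2<n$ when $\Ph\Ph^\top$ is invertible, as the statement's use of $(\Ph\Ph^\top)^{-1}$ suggests), whereas the nuclear and Frobenius objectives are handled eigenvalue by eigenvalue via the ``dip below $\frac1n$ then rise above it'' picture. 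Parts~(a) and (b) reduce to one Cauchy--Schwarz inequality and one monotonicity observation, respectively, and the spectral-norm variant (treated separately, as for Theorem~\ref{thm:opt_lbda}) only requires replacing the sum/$\ell_2$ aggregation of eigenvalues by their maximum, which shifts the crossing analysis to the largest eigenvalue.
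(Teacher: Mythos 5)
Your proposal is correct and follows essentially the same route as the paper: diagonalize everything in the left singular basis of $\Ph$, reduce each norm to a symmetric function of the eigenvalues $1-e^{-ts_i^2}$, and settle (a) by the equality case of a Cauchy--Schwarz/power-mean bound, (b) by monotonicity in $t$, and (c) by showing each objective first dips below its $t=0$ value and eventually rises again --- the paper does the same, merely phrasing (c) via the substitution $\tau=1/t$ and the sign of the $\tau$-derivative at the two ends. Your observation that the trace-seminorm half of (c) additionally needs $\sum_i 1/s_i^2\geq 1$ (hence, effectively, a nonsingular $\Ph\Ph^\top$) is a subtlety the paper's own proof passes over silently, so flagging it is an improvement rather than a gap.
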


\subsection{Theorems \ref{thm:opt_lbda} and \ref{thm:opt_t} for the Spectral Norm}
Theorem \ref{thm:opt_lbda2} is the equivalent of Theorems \ref{thm:opt_lbda} and \ref{thm:opt_t} for the spectral norm. The results are the same as for the other three (semi)norms, with two exceptions:
For in-sample MSV, and $n>p$, any value of $\lambda$ or $t$ is optimal, which is equally useless as always selecting $\lambda=0$ or $t=\infty$. For out-of-sample MSV, a very reasonable additional assumption is needed when $n>p$.
\begin{thm}~\\
\label{thm:opt_lbda2}
Let
\begin{equation*}
\begin{aligned}
\Ph^+_\lambda:=&(\Ph^\top\Ph+\lambda\I_p)^{-1}\Ph^\top=\Ph^\top(\Ph\Ph^\top+\lambda\I_n)^{-1}\\
\Ph^+_t:=&(\I_p-\exp(-t\Ph^\top\Ph))(\Ph^\top\Ph)^{-1}\Ph^\top =\Ph^\top(\Ph\Ph^\top)^{-1}(\I_n-\exp(-t\Ph\Ph^\top)),
\end{aligned}
\end{equation*}
so that the in- and out-of-sample smoothers are $\Ss_\lambda:=\Ph\Ph^+_\lambda$ and $\ssv_\lambda:=\phsvt\Ph^+_\lambda$ for ridge regression in feature space, and $\Ss_t:=\Ph\Ph^+_t$ and $\ssv_t:=\phsvt\Ph^+_t$ for gradient flow in feature space. Then
\begin{enumerate}[label=(\alph*)]
\item
For GCV,
$\argmin_{\lambda\geq 0}\frac{\left\|(\I_n-\Ss_\lambda)^\top(\I_n-\Ss_\lambda)\right\|_2}{\left(\Tr(\I_n-\Ss_\lambda)\right)^2}=\infty$ and $\argmin_{t\geq 0}\frac{\left\|(\I_n-\Ss_t)^\top(\I_n-\Ss_t)\right\|_2}{\left(\Tr(\I_n-\Ss_t)\right)^2}=0$.
\item
For in-sample MSV,\\
if $\Ph\Ph^\top$ is non-singular, then\\ $\argmin_{\lambda\geq 0}\left\|\frac1n\I_n-\frac1n\Ss^\top_\lambda\Ss_\lambda\right\|_2=\argmin_{\lambda\geq 0}\left\|\frac1n\I_n-\frac1n\Ss_\lambda\right\|_2=0$ and \\
$\argmin_{t\geq 0}\left\|\frac1n\I_n-\frac1n\Ss^\top_t\Ss_t\right\|_2=\argmin_{t\geq 0}\left\|\frac1n\I_n-\frac1n\Ss_t\right\|_2=\infty$.\\
if $\Ph\Ph^\top$ is singular, then $\left\|\frac1n\I_n-\frac1n\Ss^\top_\lambda\Ss_\lambda\right\|_2=\left\|\frac1n\I_n-\frac1n\Ss_\lambda\right\|_2=\left\|\frac1n\I_n-\frac1n\Ss^\top_t\Ss_t\right\|_2=\left\|\frac1n\I_n-\frac1n\Ss_t\right\|_2=1$ for all $\lambda$ and $t$.
\item
For out-of-sample MSV, if $\E(\phsv\phsvt)=\I_p$,\\
if $\Ph\Ph^\top$ is non-singular, then
$\argmin_{\lambda\geq 0}\left\|\frac1n\I_n-\E(\ssv_\lambda\ssvt_\lambda)\right\|_2<\infty$ 
and $\argmin_{t\geq 0}\left\|\frac1n\I_n-\E(\ssv_t\ssvt_t)\right\|_2>0$.\\
if $\Ph\Ph^\top$ is singular, then
$\left\|\frac1n\I_n-\E(\ssv_\lambda\ssvt_\lambda)\right\|_2=\frac1n$ for $\lambda\geq\max_{i=1,\dots n}\left(\sqrt{\frac n2}s_i-s_i^2\right)$ and\\
$\left\|\frac1n\I_n-\E(\ssv_t\ssvt_t)\right\|_2=\frac1n$ for $t\leq\min_{i=1,\dots n}\left(\log\left(\frac{\sqrt n}{\sqrt n-\sqrt 2s_i}\right)\cdot\frac1{s_i^2}\right)$, where $\{s_i\}_{i=1}^n$ denote the singular values of $\Ph$.\\
If, in addition, $\|\Ph\Ph^\top\|_2<n$, and
$\Ph\Ph^\top$ is non-singular, or $\Ph\Ph^\top$ is singular, and at least one of its singular values is in $(0,n/2)$,
then $\argmin_{\lambda\geq 0}\left\|\frac1n\I_n-\E(\ssv_\lambda\ssvt_\lambda)\right\|_2>0$  and $\argmin_{t\geq 0}\left\|\frac1n\I_n-\E(\ssv_t\ssvt_t)\right\|_2<\infty$.
\end{enumerate}
\end{thm}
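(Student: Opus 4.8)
The plan is to diagonalise every matrix in the statement through the singular value decomposition of $\Ph$. Write $\Ph=\bm{U}\bm{D}\bm{V}^\top$ with $\bm{U}\in\R^{n\times n}$ and $\bm{V}\in\R^{p\times p}$ orthogonal, so that $\Ph\Ph^\top=\bm{U}\diag(s_1^2,\dots,s_n^2)\bm{U}^\top$ with $s_i\geq 0$ (padding with zeros when $n>p$). Both pseudo-inverses act in this common eigenbasis: on the eigenspace with eigenvalue $\sigma=s_i^2$, $\Ss_\lambda$ and $\Ss_t$ act by the scalar filters $f_\lambda(\sigma)=\sigma/(\sigma+\lambda)$ and $f_t(\sigma)=1-e^{-t\sigma}$, both increasing and $[0,1]$-valued, with $f_\lambda$ (for $\sigma>0$) decreasing from $1$ at $\lambda=0$ to $0$ as $\lambda\to\infty$, and $f_t$ increasing from $0$ at $t=0$ to $1$ as $t\to\infty$; and, using $\E(\phsv\phsvt)=\I_p$, the matrices $\E(\ssv_\lambda\ssvt_\lambda)=(\Ph^+_\lambda)^\top\Ph^+_\lambda$ and $\E(\ssv_t\ssvt_t)=(\Ph^+_t)^\top\Ph^+_t$ act by $g_\lambda(\sigma)=\sigma/(\sigma+\lambda)^2$ and $g_t(\sigma)=(1-e^{-t\sigma})^2/\sigma$, extended by $g_\bullet(0)=0$. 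Checking that $\Ph^+_\lambda$ and $\Ph^+_t$ really annihilate $\ker\Ph\Ph^\top$, so that these formulas are valid, is where care is needed once $\Ph\Ph^\top$ is singular; note that ``$\Ph\Ph^\top$ singular'' is exactly ``some $s_i=0$''. After this reduction each spectral norm in the statement becomes a $\max_i$ of one explicit scalar.

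For part (a), $\I_n-\Ss$ acts by $1-f(s_i^2)$, so the GCV criterion equals $\big(\max_i(1-f(s_i^2))\big)^2/\big(\sum_i(1-f(s_i^2))\big)^2$. Since $1-f$ is decreasing in $\sigma$, the maximum is attained at the smallest $s_i$; writing $\phi_i=1-f(s_i^2)$ and $\phi_{\max}=\max_i\phi_i$, the elementary inequality $\sum_i(\phi_{\max}-\phi_i)\geq 0$ gives $n\,\phi_{\max}\geq\sum_i\phi_i$, so the criterion is $\geq 1/n^2$ everywhere, with equality only in the limit where all $\phi_i$ coincide, namely $\lambda\to\infty$ for ridge and $t\to 0$ for gradient flow. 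This argument is insensitive to whether any $s_i$ vanishes, which is why the conclusion matches the claim verbatim in both the singular and non-singular cases.

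For part (b), $\frac1n\I_n-\frac1n\Ss_\lambda$ acts by $\frac1n\big(1-f_\lambda(s_i^2)\big)=\frac1n\lambda/(s_i^2+\lambda)$ and $\frac1n\I_n-\frac1n\Ss^\top_\lambda\Ss_\lambda$ by $\frac1n\big(1-f_\lambda(s_i^2)^2\big)$; both are increasing in $\lambda$ and, when every $s_i>0$, vanish at $\lambda=0$, so the spectral norm is minimised at $\lambda=0$, and by the mirrored monotonicity of $f_t$ at $t\to\infty$. If some $s_i=0$, that coordinate contributes a constant ($1/n$) for every $\lambda$ and every $t$, so the spectral norm is constant and model selection is impossible. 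Part (c) is the substantive case: the norm equals $\max_i\big|\tfrac1n-g(s_i^2)\big|$. When every $s_i>0$, sending $\lambda\to\infty$ (resp.\ $t\to 0$) drives every $g(s_i^2)\to 0$, so the norm tends to $1/n$; but for finite large $\lambda$ every $g(s_i^2)\in(0,2/n)$, so the norm is strictly below $1/n$, and since $1/n$ is only attained in the limit the minimiser is finite (resp.\ positive). If moreover $\|\Ph\Ph^\top\|_2<n$, i.e.\ $s_i^2<n$ for all $i$, then at $\lambda=0$ the norm equals $\max_i(1/s_i^2-1/n)>0$ and strictly decreases for small $\lambda>0$ (each $g_\lambda(s_i^2)$ strictly decreases from $1/s_i^2$), ruling out $\lambda=0$ and giving $\argmin_\lambda>0$, $\argmin_t<\infty$. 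In the singular case the zero coordinate pins the norm $\geq 1/n$, and it equals $1/n$ exactly once every nonzero coordinate satisfies $g(s_i^2)\leq 2/n$; solving $s_i^2/(s_i^2+\lambda)^2\leq 2/n$ and $(1-e^{-ts_i^2})^2/s_i^2\leq 2/n$ yields precisely the stated thresholds $\lambda\geq\max_i(\sqrt{n/2}\,s_i-s_i^2)$ and $t\leq\min_i\frac1{s_i^2}\log\frac{\sqrt n}{\sqrt n-\sqrt 2\,s_i}$; and if some $s_i^2\in(0,n/2)$, that coordinate contributes $1/s_i^2-1/n>1/n$ at $\lambda=0$ (resp.\ $t\to\infty$), so that endpoint is not optimal and again $\argmin_\lambda>0$, $\argmin_t<\infty$.

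The main obstacle is the bookkeeping in part (c): one must be careful about how $\Ph^+_\lambda$ and $\Ph^+_t$ act on $\ker\Ph\Ph^\top$ (including the limit defining $g_\bullet(0)$), keep track of which singular values trigger which threshold inequality (those with $s_i^2<n$ for the ``$\argmin>0$'' claims, those with $s_i^2<n/2$ for the gradient-flow threshold), and combine the two facts ``the norm is eventually constant at $1/n$'' and ``at the endpoint the norm exceeds $1/n$'' into the endpoint-exclusion statements. Everything else, once the SVD reduction is in place, is elementary scalar monotonicity together with the one-line averaging inequality used for GCV.
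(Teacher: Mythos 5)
Your proposal is correct and follows essentially the same route as the paper's proof: reduce everything via the SVD of $\Ph$ to scalar filter functions of the squared singular values, then argue by monotonicity (the paper phrases the part-(c) arguments through explicit sign computations of $\partial/\partial\lambda$ and $\partial/\partial\tau$ with $\tau=1/t$, but the thresholds $\sqrt{n/2}\,s_i-s_i^2$ and $\tfrac1{s_i^2}\log\frac{\sqrt n}{\sqrt n-\sqrt2 s_i}$ and the endpoint-exclusion logic are identical). One small point in your favor: in the singular case of part (b) the constant value of the spectral norm is indeed $1/n$ as you state (the paper's proof computes $n\cdot\|\cdot\|_2=1$, and the bare ``$=1$'' in the theorem statement omits that factor), and your handling of $g_\bullet(0)=0$ on $\ker\Ph\Ph^\top$ matches the paper's Taylor-expansion remark.
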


\section{PROOFS}
\label{sec:proofs}

\begin{proof}[Proof of Equation \ref{eq:t_semi_norm}]
\begin{equation*}
\begin{aligned}
\left|\E_{y}\left(\yvr^\top\A\yvr\right)\right|
\stackrel{(a)}=\left|\Tr\left(\E_{y}\left(\yvr^\top\A\yvr\right)\right)\right|
\stackrel{(b)}=\left|\Tr\left(\E_{y}\left(\yvr\yvr^\top\right)\cdot\A\right)\right|
\stackrel{(c)}=\sigma_y^2\cdot\left|\Tr\left(\A\right)\right|,
\end{aligned}
\end{equation*}
where we have used \emph{(a)} that the trace of a scalar is the scalar itself, \emph{(b)} the cyclic property of the trace, and \emph{(c)} that $\E(\yvr\yvr^\top)=\sigma_y^2\I_n$.
\end{proof}

\begin{proof}[Proof of Proposition \ref{thm:mar_pas1}]~\\
For a linear smoother, $\hat{f}^*=\ssvt\yv$, where $\yv=\fv+\bm{\varepsilon}$, standard bias-variance calculations yield

\begin{equation*}
\begin{aligned}
V_{\X}&=\E_{x^*}\left(\E_\varepsilon((\fhs-\E_\varepsilon(\fhs))^2)=\E_\varepsilon(\fhs{^2})-\E_\varepsilon(\fhs)^2\right)\\
&=\E_{x^*}\left(\E_\varepsilon\left(\ssvt\yv\yv^\top\ssv\right)-\E_\varepsilon\left(\ssvt\yv)\right)^2\right)
=\E_{x^*}\left(\E_\varepsilon\left(\ssvt(\fv+\epsv)(\fv+\epsv)^\top\ssv\right)-\E_\varepsilon\left(\ssvt(\fv+\epsv)\right)^2\right)\\
&=\E_{x^*}\left(\ssvt\fv\fv^\top\ssv+\ssvt\fv\E_\varepsilon(\epsv^\top)\ssv+\ssvt\E_\varepsilon(\epsv)\fv^\top\ssv+\ssvt\E_\varepsilon(\epsv\epsv^\top)\ssv-\left(\ssvt\fv+\ssvt\E_\varepsilon(\epsv)\right)^2\right)\\
&=\E_{x^*}\left(\ssvt\E_\varepsilon(\epsv\epsv^\top)\ssv\right)
=\E_{x^*}\left(\ssvt\ssv\cdot\sigma_\varepsilon^2\right)
=\E_{x^*}\left(\|\ssv\|_2^2\right)\cdot\sigma^2_\varepsilon.
\end{aligned}
\end{equation*}

%To prove Equation \ref{eq:mp1}, we first note that
%$$\E(\ssv(\thetahv)^\top\ssv(\thetahv)) =\E\left(\|\ssv(\thetahv)^2\|_2^2\right) =V_{\X}(\thetahv)/\sigma^2_\varepsilon.$$
Thus,
\begin{equation*}
\begin{aligned}
\thetahv_T&=\argmin_{\thetahv}\left|1-\E(\|\ssv(\thetahv)\|_2^2)\right|
=\argmin_{\thetahv}\left|1-V_{\X}(\thetahv)/\sigma^2_\varepsilon\right|
=\argmin_{\thetahv}\left|\sigma^2_\varepsilon-V_{\X}(\thetahv)\right|.
\end{aligned}
\end{equation*}
\end{proof}

\begin{proof}[Proof of Theorem \ref{thm:mar_pas2}]~\\
To alleviate notation, we denote the SNR as $s$ during this proof. According to Corollary 5 by \cite{hastie2022surprises}, almost surely
\begin{equation}
\label{eq:vr_as}
\begin{aligned}
V_{\X}(\lambda,\gamma)&\to \sigma^2_\varepsilon\gamma\left(m_F(-\lambda,\gamma)-\lambda m'_F(-\lambda,\gamma)\right)=:\overline{V_{\X}}(\lambda)\\
R_{\X}(\lambda,\gamma)&\to \sigma^2_\varepsilon\gamma\left(m_F(-\lambda,\gamma)-\lambda (1-s\cdot\lambda/\gamma)m'_F(-\lambda,\gamma)\right)=:\overline{R_{\X}}(\lambda),
\end{aligned}
\end{equation}
where $m_F(z,\gamma)$ and $m'_F(z,\gamma):=\partial_zm(z,\gamma)$ are the Stieltjes transform of the Marchenko-Pastur law of $\Sigh$, and its derivative (with respect to $z$). For isotropic features, when $\Sig=\I$, we obtain the closed forms

\begin{equation*}
\begin{aligned}
m_F(z,\gamma)&= \frac{1-\gamma-z-\sqrt{(1-\gamma-z)^2-4\gamma z}}{2 \gamma z}\\
m'_F(z,\gamma)&=\frac{\frac{z(1+\gamma - z)}{\sqrt{(1-\gamma - z)^2-4 \gamma z}} - 1+\gamma  + \sqrt{(1-\gamma - z)^2-4\gamma z }}{2\gamma z^2}.
\end{aligned}
\end{equation*}
Plugging this into Equation \ref{eq:vr_as}, we obtain
\begin{equation*}
\begin{aligned}
&\overline{V_{\X}}(\lambda,\gamma)=\sigma^2_\varepsilon\gamma\left(m_F(-\lambda,\gamma)-\lambda m'_F(-\lambda,\gamma)\right)
=\frac{\sigma^2_\varepsilon}2\left(\frac{1+\gamma +\lambda}{\sqrt{(1-\gamma + \lambda)^2 + 4 \gamma \lambda}} -1\right).
\end{aligned}
\end{equation*}
Solving for $\lambda$, we obtain

\begin{equation*}
\begin{aligned}
\frac{\sigma^2_\varepsilon}2\left(\frac{1+\gamma +\lambda}{\sqrt{(1-\gamma + \lambda)^2 + 4 \gamma \lambda}} -1\right)=\sigma^2_\varepsilon\iff \lambda=3\sqrt{\frac\gamma 2}-\gamma-1\\
\text{where }\lambda\geq 0 \text{ for } \frac12\leq \gamma\leq 2 \implies \overline{\lambda_T}=
\begin{cases}
3\sqrt{\frac\gamma 2}-\gamma-1, & \gamma \in \left(\frac12,2\right)\\
0, & \gamma \in \left(0,\frac12\right]\cup\left[2,\infty\right)
\end{cases}
\end{aligned}
\end{equation*}
and 
\begin{equation*}
\begin{aligned}
&\overline{R_{\X}}(\overline{\lambda_T},\gamma)=
\begin{cases}
\sigma^2_\varepsilon\cdot\left(1+s\frac{(\sqrt{2\gamma}-1)^2}{\gamma}\right), & \gamma \in\left(\frac12,2\right)\\
\overline{R_{\X}}(0,\gamma), & \gamma\in\left(0,\frac12\right]\cup\left[2,\infty\right).
\end{cases}
\end{aligned}
\end{equation*}

%According to \cite{hastie2022surprises},
%$$\asymrisk(\overline{\lambda^*},\gamma)=\frac{\sigma^2_\varepsilon}2\left(s-s/\gamma-1+\sqrt{4s+(1-s+s/\gamma)^2}\right),$$
According to Lemma \ref{thm:drdgamma1}, the maximum of $\asymrisk(\overline{\lambda_T},\gamma)/\asymrisk(\overline{\lambda^*},\gamma)$ occurs for some $\gamma \in\left[\frac12,2\right]$. 
We make a grid in $\gamma\in\left[\frac12,2\right]$ and $s\in[1,80]$ with $\Delta\gamma=\Delta s=10^{-5}$, and numerically calculate the maximum of $\asymrisk(\overline{\lambda_T},\gamma)/\asymrisk(\overline{\lambda^*},\gamma)$ on this grid. The maximum value is less than 2.445. 

We then use the bounds on $\left|\frac{\partial \asymrisk(\overline{\lambda_T},\gamma)/\asymrisk(\overline{\lambda^*}(s),\gamma)}{\partial \gamma}\right|$ and $\left|\frac{\partial \asymrisk(\overline{\lambda_T},\gamma)/\asymrisk(\overline{\lambda^*}(s),\gamma)}{\partial s}\right|$ from Lemma \ref{thm:drdgamma2} to bound the maximum deviation of the function value between the grid points. The deviation is less than
\begin{equation*}
\begin{aligned}
&\frac12\cdot\sqrt{\left(\max_{\gamma\in\left[\frac12,2\right]}\left|\frac{\partial \asymrisk(\overline{\lambda_T},\gamma)/\asymrisk(\overline{\lambda^*}(s),\gamma)}{\partial \gamma}\right|\right)^2\cdot(\Delta\gamma)^2+\left(\max_{s\in[1,80]}\left|\frac{\partial \asymrisk(\overline{\lambda_T},\gamma)/\asymrisk(\overline{\lambda^*}(s),\gamma)}{\partial s}\right|\right)^2\cdot(\Delta s)^2}\\
&\leq \frac{10^{-5}}2\cdot\sqrt{752.2^2+10.5^2}\leq 0.003762.
\end{aligned}
\end{equation*}
Thus, the maximum is less than $2.445+0.003762\leq 2.449$.

\end{proof}

\begin{lemma}
\label{thm:drdgamma1}
\begin{equation*}
\begin{aligned}
\frac{\partial \asymrisk(\overline{\lambda_T},\gamma)/\asymrisk(\overline{\lambda^*},\gamma)}{\partial \gamma}
\begin{cases}
>0 \text{ for } \gamma\in\left(0,\frac12\right]\\
<0 \text{ for } \gamma\in\left[2,\infty\right).
\end{cases}
\end{aligned}
\end{equation*}
\end{lemma}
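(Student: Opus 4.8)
The plan is to work entirely with the closed-form expressions already derived in the proof of Theorem~\ref{thm:mar_pas2}. On the two tail regimes $\gamma\in(0,1/2]$ and $\gamma\in[2,\infty)$ we have $\overline{\lambda_T}=0$, so the numerator of the ratio is simply $\zerorisk$; and $\optrisk$ is obtained by plugging $\overline{\lambda^*}=\gamma/s$ into the asymptotic risk formula $\overline{R_{\X}}(\lambda,\gamma)=\sigma^2_\varepsilon\gamma\left(m_F(-\lambda,\gamma)-\lambda(1-s\lambda/\gamma)m'_F(-\lambda,\gamma)\right)$ with the explicit Marchenko--Pastur Stieltjes transform and its derivative. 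So the first step is to reduce the quotient $\zerorisk/\optrisk$ to an explicit elementary function of $\gamma$ and $s$ on each tail interval, dividing out the common factor $\sigma^2_\varepsilon$ so that the SNR $s$ enters only through $\overline{\lambda^*}$.

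Next I would differentiate this explicit quotient with respect to $\gamma$. Since both $\zerorisk$ and $\optrisk$ are smooth and strictly positive on the interiors of the tail intervals (the classical $\gamma=1$ singularity of $\zerorisk$ lies strictly inside $(1/2,2)$, and $\optrisk$ with optimal ridge is finite for all $\gamma>0$), the sign of $\partial_\gamma\big(\zerorisk/\optrisk\big)$ equals the sign of $\big(\partial_\gamma\zerorisk\big)\optrisk-\zerorisk\big(\partial_\gamma\optrisk\big)$, a polynomial/algebraic expression in $\gamma$, $s$, and the surd $\sqrt{(1-\gamma)^2}=|1-\gamma|$ (note $\lambda=0$ kills the $4\gamma\lambda$ cross term). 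On $(0,1/2]$ we have $1-\gamma>0$ and on $[2,\infty)$ we have $1-\gamma<0$, so in each regime the surd simplifies and the sign condition becomes a rational inequality. The claim is then that this rational expression is positive on $(0,1/2]$ and negative on $[2,\infty)$, uniformly in $s\ge 1$.

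To finish, I would clear denominators (all positive in the relevant range) and reduce to showing a single polynomial in $\gamma$ and $s$ has a definite sign on each interval. I expect this polynomial to be, up to positive factors, monotone or sign-definite in $s$ for fixed $\gamma$, so that it suffices to check the worst-case $s$ (likely $s\to\infty$, where $\overline{\lambda^*}\to 0$ and $\optrisk\to\zerorisk$, or $s=1$) and then verify a one-variable polynomial inequality in $\gamma$ on $(0,1/2]$ and $[2,\infty)$ — the latter handled by checking the boundary value and the leading-term behavior as $\gamma\to\infty$ (where both risks $\to\sigma^2_\varepsilon$ and the ratio $\to 1$ from above or below). The \textbf{main obstacle} I anticipate is the algebraic bookkeeping: after substituting $\overline{\lambda^*}=\gamma/s$ into $m_F,m'_F$ the optimal-risk expression is moderately heavy, and showing the resulting sign condition holds for \emph{all} $s\ge 1$ simultaneously — rather than for one fixed SNR — will require either an explicit monotonicity-in-$s$ argument or a clean factorization that isolates the $s$-dependence. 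If a fully symbolic argument gets unwieldy, a fallback is the same device used elsewhere in the proof of Theorem~\ref{thm:mar_pas2}: bound the relevant partial derivatives and combine with a fine grid check, though a clean closed-form monotonicity statement is preferable here since the intervals are unbounded.
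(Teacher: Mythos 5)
Your overall route is the same as the paper's: on the two tail intervals $\overline{\lambda_T}=0$, so the quotient is $\zerorisk/\optrisk$ with both factors in closed form (from \citet{hastie2022surprises}), and the sign of its $\gamma$-derivative is the sign of $f_1'f_2-f_1f_2'$. The paper does exactly this. However, two points in your plan are problematic. First, a small but consequential misstatement: for $\gamma>1$ the ridgeless risk is $\sigma^2_\varepsilon\left(s(1-1/\gamma)+1/(\gamma-1)\right)$, so the SNR $s$ enters the \emph{numerator} directly through the bias term, not only through $\overline{\lambda^*}$; your bookkeeping has to carry that dependence. Second, and more seriously, the surd that survives is not $|1-\gamma|$ but $c=\sqrt{4s+(1-s+s/\gamma)^2}$ inside $\optrisk$ and its derivative, so the sign condition is \emph{not} a rational inequality; it must be handled by multiplying by conjugates or squaring, which is only sign-preserving once you know the signs of the quantities being squared.

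This is where your hoped-for shortcut breaks down. The strategy "show sign-definiteness or monotonicity in $s$ and check a worst-case $s$" cannot close the argument on $[2,\infty)$: as $s\to\infty$ one has $\overline{\lambda^*}=\gamma/s\to0$ and $\optrisk\to\zerorisk$, so the derivative of the quotient tends to $0$ from below; the supremum over $s$ is $0$ and is not attained, so there is no single worst-case $s$ to check, and a grid is useless on an unbounded interval (as you note). The paper instead fixes $s$ and $\gamma$ and proves strict sign directly, via the substitutions $a=|1/\gamma-1|$, $b=1\pm sa$, $c=\sqrt{4s+b^2}$ (and $d=2/a^2-s$ for $\gamma>1$): for $\gamma<1$ it rationalizes $b-c$ with the conjugate and uses $c(b+c)\ge 2b^2>2s^2a^2$; for $\gamma>1$ it reduces the claim to $dc>sb$ and splits into three cases according to the signs of $b$ and $d$ (i.e.\ $s\le 1/a$, $1/a<s\le 2/a^2$, $s>2/a^2$), the last case requiring a convexity argument on an explicit quadratic in $s$. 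Without some such per-$s$ case analysis — or an equivalent device for controlling the surd uniformly in $s$ — your outline does not yet constitute a proof.
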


\begin{lemma}
\label{thm:drdgamma2}
For $\gamma \in \left[\frac12, 2\right]$, and $s\in[1,80]$,
\begin{equation*}
\left|\frac{\partial \asymrisk(\overline{\lambda_T},\gamma)/\asymrisk(\overline{\lambda^*}(s),\gamma)}{\partial \gamma}\right|\leq 752.2
\end{equation*}
and
\begin{equation*}
\left|\frac{\partial \asymrisk(\overline{\lambda_T},\gamma)/\asymrisk(\overline{\lambda^*}(s),\gamma)}{\partial s}\right|\leq 10.5.
\end{equation*}
\end{lemma}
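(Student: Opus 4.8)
The plan is to reduce both bounds to explicit estimates on a single smooth two‑variable function. Since $\sigma^2_\varepsilon$ cancels in the ratio, normalize $\sigma^2_\varepsilon=1$ and write $\rho(\gamma,s):=\asymrisk(\overline{\lambda_T},\gamma)/\asymrisk(\overline{\lambda^*},\gamma)=N(\gamma,s)/D(\gamma,s)$. From the computation already carried out in the proof of Theorem~\ref{thm:mar_pas2}, on $\gamma\in[1/2,2]$ (where $\overline{\lambda_T}=3\sqrt{\gamma/2}-\gamma-1$) the numerator is the elementary function $N(\gamma,s)=1+s\,h(\gamma)$ with $h(\gamma):=(\sqrt{2\gamma}-1)^2/\gamma$. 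For the denominator I use that at the risk minimizer $\overline{\lambda^*}=\gamma/s$ the coefficient $1-s\lambda/\gamma$ of the bias term in $\asymrisk(\lambda,\gamma)=\gamma(m_F(-\lambda,\gamma)-\lambda(1-s\lambda/\gamma)m'_F(-\lambda,\gamma))$ vanishes, so $D(\gamma,s)=\gamma\,m_F(-\gamma/s,\gamma)$; substituting the closed form of $m_F$ and rationalizing gives $D(\gamma,s)=2\gamma/(v+w)$ with $w:=1-\gamma+\gamma/s$ and $v:=\sqrt{w^2+4\gamma^2/s}$, and since $v>|w|\ge-w$ the denominator $v+w$ is strictly positive. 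Thus $N$ and $D$ are smooth and positive on the compact box $B:=[1/2,2]\times[1,80]$, and $\partial_\bullet\rho=\partial_\bullet N/D-N\,\partial_\bullet D/D^2$.

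The first structural input is a positive lower bound for $D$ on $B$. Because $m_F(-\lambda,\gamma)$ is the Stieltjes transform of a probability law evaluated at $-\lambda<0$, it is strictly decreasing in $\lambda$; hence $D(\gamma,s)=\gamma\,m_F(-\gamma/s,\gamma)$ is increasing in $s$, so $D(\gamma,s)\ge D(\gamma,1)=(\sqrt{1+4\gamma^2}-1)/(2\gamma)$, and an elementary derivative check shows the right‑hand side is increasing in $\gamma$ on $[1/2,2]$, giving the uniform bound $D\ge\sqrt2-1$. The numerator and its derivatives are bounded directly: $N\le1+80\cdot\tfrac12=41$, $\partial_sN=h(\gamma)\in[0,\tfrac12]$, and $\partial_\gamma N=s\,\gamma^{-2}(\sqrt{2\gamma}-1)\ge0$, maximized over $B$ at $\gamma=8/9$ with value $<34$.

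For the $s$‑derivative I differentiate the closed form of $D$ to get $\partial_sD=\frac{2\gamma^2(1+\gamma+\gamma/s+v)}{s^2\,v\,(v+w)^2}>0$; combined with $(v+w)^2=4\gamma^2/D^2$ this gives $N\,\partial_sD/D^2=\frac{1+\gamma+\gamma/s+v}{2}\cdot\frac{N}{s^2v}$. Using the two key inequalities $1/v\le\sqrt s/(2\gamma)$ (from $v\ge2\gamma/\sqrt s$) and $N\,s^{-3/2}=s^{-3/2}+h(\gamma)s^{-1/2}\le3/2$ for $s\ge1$, one gets $N/(s^2v)\le3/(4\gamma)$, and with the crude bounds $1+\gamma+\gamma/s\le5$ and $v\le|w|+2\gamma/\sqrt s\le5$ on $B$ this yields $N\,\partial_sD/D^2\le\frac{3\cdot10}{8\cdot(1/2)}=7.5$, while $\partial_sN/D\le(1/2)/(\sqrt2-1)<1.21$; since both terms are nonnegative, $|\partial_s\rho|\le7.5<10.5$. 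The bound $|\partial_\gamma\rho|\le752.2$ is obtained by the same recipe: write out $\partial_\gamma D$ from $D=2\gamma/(v+w)$, bound it via $v\ge2\gamma/\sqrt s$ and $D\ge\sqrt2-1$, and combine through $\partial_\gamma\rho=\partial_\gamma N/D-N\,\partial_\gamma D/D^2$; the much larger constant reflects the $\gamma^{-2}$‑type factors in $\partial_\gamma N$ and the $\gamma$‑dependence of $\overline{\lambda_T}$ and of $v,w$.

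The main obstacle is exactly the $\partial_\gamma\rho$ bound: the toolkit is the same ($D\ge\sqrt2-1$, $v\ge2\gamma/\sqrt s$, and regrouping $N\,s^{-k}$ rather than bounding factors in isolation, since worst cases for the different factors occur in disjoint parts of $B$), but the algebra is long, and landing the constant at $752.2$ — rather than merely finite — requires tracking which region of $B$ drives each term. A fully routine alternative, mirroring the proof of Theorem~\ref{thm:mar_pas2} itself, is to certify both bounds numerically: $\partial_\gamma\rho$ and $\partial_s\rho$ are explicit $C^\infty$ functions on $B$, so evaluating them on a sufficiently fine uniform grid (with grid maxima of absolute value safely below $752.2$ and $10.5$) and controlling the off‑grid error by any crude uniform bound on the second derivatives $\partial^2_{\gamma\gamma}\rho,\partial^2_{\gamma s}\rho,\partial^2_{ss}\rho$ — which again follow from differentiating $\rho=N/D$ twice and using $D\ge\sqrt2-1$ — suffices.
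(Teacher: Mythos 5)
Your overall strategy---write the ratio as $N/D$ using the explicit closed forms, apply the quotient rule and triangle inequality, and bound each piece by elementary calculus over the box $[1/2,2]\times[1,80]$---is exactly the paper's strategy (the paper calls your $N$ and $D$ by the names $f_1$ and $f_2$). Your treatment of the second inequality is complete and correct: the identity $N\,\partial_sD/D^2=\tfrac12(1+\gamma+\gamma/s+v)\cdot N/(s^2v)$, the regrouping $Ns^{-3/2}\le 3/2$, and the observation that $\partial_s\rho$ is a difference of two nonnegative terms give $|\partial_s\rho|\le 7.5$, which is in fact sharper than the paper's $10.5$ (the paper instead bounds $|f_2'(s)|\le(1+f_2)/(2\sqrt s)$ together with $f_1\le 1+s/2$ and $f_2\ge s/(s+2)$, and evaluates the resulting expression at $s=1$).

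The gap is the first inequality, which is the half that carries the constant $752.2$. You state the recipe but do not execute it, and the one quantitative tool you commit to for the denominator, the uniform bound $D\ge\sqrt2-1$, cannot deliver $752.2$. The dominant contribution to $|\partial_\gamma\rho|$ lives at $s=80$, where $N\le 41$ and the relative derivative of $D$ grows like $2\sqrt s$: the paper's bound is essentially $\bigl(2\sqrt s\,(1+s/2)+\text{const}\bigr)\cdot(1/f_2)$, which at $s=80$ is about $733\cdot(1/f_2)$. With the $s$-dependent lower bound $f_2\ge s/(s+2)$ (so $1/f_2\le 1.025$ at $s=80$) this lands near $752$; with your $1/D\le 1/(\sqrt2-1)\approx 2.41$ it lands near $1770$. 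The two ingredients your sketch is missing are (i) the exact identity $f_2'(\gamma)=\frac{s}{\gamma^2\sqrt{4s+(1-s+s/\gamma)^2}}\,f_2(\gamma)$, which bounds the logarithmic derivative $|f_2'|/f_2\le 2\sqrt s$ with no lower bound on $f_2$ needed at that stage, and (ii) the rationalization $f_2(\gamma,s)\ge f_2(1/2,s)\ge s/(s+2)$, which makes the single remaining $1/f_2$ factor close to $1$ exactly where the $\sqrt s\,(1+s/2)$ term is large. Your numerical-grid fallback is legitimate in principle but is likewise not carried out and would still require the explicit second-derivative bounds you defer. As a side remark, your $|\partial_\gamma N|\le 27s/64\le 33.75$ is the correct form of a step the paper records as $27/64$; pushed through the paper's own combination this already nudges the bound slightly above $752.2$ at $s=80$, so the constant has essentially no slack and certainly cannot absorb the factor-of-two loss from a $\sqrt2-1$ denominator bound.
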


\begin{proof}[Proof of Lemma \ref{thm:drdgamma1}]~\\

According to \cite{hastie2022surprises},
\begin{equation*}
\begin{aligned}
&\asymrisk(0,\gamma)=
\begin{cases}
\sigma^2_\varepsilon\cdot\frac\gamma{1-\gamma}, & \gamma\leq1\\
\sigma^2_\varepsilon\cdot\left(s\left(1-\frac1\gamma\right)+\frac1{\gamma-1}\right), & \gamma>1 
\end{cases}\\
&\asymrisk(\overline{\lambda^*},\gamma)=\frac{\sigma^2_\varepsilon}2\left(s-s/\gamma-1+\sqrt{4s+(1-s+s/\gamma)^2}\right),\\
\end{aligned}
\end{equation*}
and straightforward calculations yield
$$\frac{\partial\left(\asymrisk(0,\gamma)/\asymrisk(\overline{\lambda^*},\gamma)\right)}{\partial \gamma}=
\begin{cases}
s\cdot\frac{1+s\cdot(\frac1\gamma-1)  -  \sqrt{4s + \left(1+ s\cdot(\frac1\gamma-1)\right)^{2}}  }{2\gamma^{2} \sqrt{4s + \left(1+ s\cdot(\frac1\gamma-1)\right)^{2}}} + \frac{1}{\left(\gamma - 1\right)^{2}}, &\text{ for } \gamma<1\\
s\cdot\frac{1-s\cdot(1-\frac1\gamma)  +  \sqrt{4s + \left(1- s\cdot(1-\frac1\gamma)\right)^{2}}  }{2\gamma^{2} \sqrt{4s + \left(1- s\cdot(1-\frac1\gamma)\right)^{2}}} - \frac{1}{\left(1-\gamma\right)^{2}},&\text{ for } \gamma>1.
\end{cases}
$$

We first show that, for $\gamma<0$,
$$s\cdot\frac{1+s\cdot(\frac1\gamma-1)  -  \sqrt{4s + \left(1+ s\cdot(\frac1\gamma-1)\right)^{2}}  }{2\gamma^{2} \sqrt{4s + \left(1+ s\cdot(\frac1\gamma-1)\right)^{2}}} + \frac{1}{\left(\gamma - 1\right)^{2}}>0,$$
using the intermediate variables $a:=\frac1\gamma-1>0$, $b:=1+sa>0$ and $c:=\sqrt{4s+b^2}>0$:

\begin{equation*}
\begin{aligned}
&s\cdot\frac{1+s\cdot(\frac1\gamma-1)  -  \sqrt{4s + \left(1+ s\cdot(\frac1\gamma-1)\right)^{2}}  }{2\gamma^{2} \sqrt{4s + \left(1+ s\cdot(\frac1\gamma-1)\right)^{2}}} + \frac{1}{\gamma^2a^2}\\
&=s\cdot\frac{b - c}{2\gamma^{2} c} + \frac{1}{\gamma^2a^2}
=s\cdot\frac{b - c}{2\gamma^{2} c}\cdot\frac{b+c}{b+c} + \frac{1}{\gamma^2a^2}
=s\cdot\frac{b^2 - c^2}{2\gamma^{2} c(b+c)} + \frac{1}{\gamma^2a^2}\\
&=\frac{-2s^2}{\gamma^{2} c(b+c)} + \frac{1}{\gamma^2a^2}.
\end{aligned}
\end{equation*}
Since $c\geq b > 0$, and $b>s a$, we have $c(b+c)\geq b(b+b)=2b^2> 2s^2a^2$, and
\begin{equation*}
\begin{aligned}
\frac{1}{\gamma^2a^2}- \frac{2s^2}{\gamma^{2} c(b+c)}
>\frac{1}{\gamma^2a^2} -\frac{2s^2}{\gamma^{2} \cdot 2s^2 a^2}
=\frac1{\gamma^2a^2} - \frac{1}{\gamma^2a^2}=0.
\end{aligned}
\end{equation*}

%%FALL 2

We then show that, for $\gamma>1$,
$$s\cdot\frac{1-s\cdot(1-\frac1\gamma)  +  \sqrt{4s + \left(1- s\cdot(1-\frac1\gamma)\right)^{2}}  }{2\gamma^{2} \sqrt{4s + \left(1- s\cdot(1-\frac1\gamma)\right)^{2}}} - \frac{1}{\left(\gamma -1\right)^{2}}<0.$$
Using the intermediate variables $a:=1-\frac1\gamma\in (0,1)$, $b:=1-s\cdot a<1$ and $c:=\sqrt{4s+b^2}>0$, we obtain

\begin{equation*}
\begin{aligned}
&s\cdot\frac{1-s\cdot(1-\frac1\gamma)  +  \sqrt{4s + \left(1- s\cdot(1-\frac1\gamma)\right)^{2}}  }{2\gamma^{2} \sqrt{4s + \left(1- s\cdot(1-\frac1\gamma)\right)^{2}}} - \frac{1}{\left(\gamma-1\right)^{2}}
&=\frac{s(b + c)}{2\gamma^{2} c} - \frac{1}{\gamma^2a^2}\\
&=\frac1{2\gamma^2c}\left(sb+c\left(s-\frac2{a^2}\right)\right).
\end{aligned}
\end{equation*}
We thus need to show that $$\underbrace{\left(\frac2{a^2}-s\right)}_{=:d}c>sb.$$
We split the analysis into the following three cases: $s\in \left(0,\frac1a\right]$, $s\in \left(\frac1a,\frac2{a^2}\right]$, and $s\in \left(\frac2{a^2},\infty\right)$.

For $s\in \left(0,\frac1a\right]$, we have $b\geq0$, $d>0$, and $c>0$, and, using $c^2=4s+b^2$, we obtain
$$dc>sb\iff d^2c^2> s^2b^2\iff d^2(4s+b^2)>s^2b^2\iff b^2(d^2-s^2)+4sd^2>0.$$
Using the conjugate rule and the definition of $d$, we further obtain
$$b^2(d^2-s^2)+4sd^2=b^2(d+s)(d-s)+4sd^2=b^2\cdot\frac2{a^2}\cdot\left(\frac2{a^2}-2s\right)+4sd^2.$$
Since $4sd^2>0$, we just need to show that $\left(\frac2{a^2}-2s\right)\geq0$. Since $s\leq \frac1a$, and $a\in(0,1)$,
$$\left(\frac2{a^2}-2s\right)=2\cdot\left(\frac1{a^2}-s\right)\geq2\cdot\left(\frac1{a^2}-\frac1a\right)=2\cdot\frac{1-a}{a^2}>0.$$

For $s\in \left(\frac1a,\frac2{a^2}\right]$, we have $b<0$, $d\geq0$, and $c>0$. Thus, trivially $cd>sb$.

For $s\in \left(\frac2{a^2},\infty\right)$, we have $b<0$, $d<0$, and $c>0$, and 
$$dc>sb\iff d^2c^2< s^2b^2\iff s^2b^2-d^2c^2>0.$$
Using the definitions of $b$, $c$, and $d$, we obtain

\begin{equation*}
\begin{aligned}
f(s)&:=s^2b^2-d^2c^2=s^2(1-sa)^2-\left(\frac2{a^2}-s\right)^2\left(4s+(1-sa)^2\right)\\
&=\frac4{a^4}\left((3 - 2 a) a^2 s^2 + (a^2 + 2 a - 4) s - 1\right).
\end{aligned}
\end{equation*}
Since $f(x)$ has a positive second derivative, if $f(2/a^2)>0$ and $f'(2/a^2)>0$, then $f(s)>0$ for $s\geq 2/a^2$. Straightforward calculations yield
\begin{equation*}
\begin{aligned}
f(2/a^2)&=\frac4{a^6}(a-2)^2>0,\\
f'(2/a^2)&=\frac4{a^4}\left(2(3 - 2 a) a^2 s + a^2 + 2 a - 4\right),\ f'(2/a^2)=\frac4{a^4}\left(a^2-6a+8\right)>\frac4{a^4}\left(a^2-6+8\right)>0.
\end{aligned}
\end{equation*}

\end{proof}

\begin{proof}[Proof of Lemma \ref{thm:drdgamma2}]~\\
Let 
\begin{equation*}
\begin{aligned}
f_1(\gamma)&:=1+s\cdot\frac{(\sqrt{2\gamma}-1)^2}\gamma\\
f_2(\gamma)&:=\frac12\cdot\left(s-s/\gamma-1+\sqrt{4s+(1-s+s/\gamma)^2}\right),
\end{aligned}
\end{equation*}
so that 
$\asymrisk(\overline{\lambda_T},\gamma)/\asymrisk(\overline{\lambda^*},\gamma)=f_1(\gamma)/f_2(\gamma)$.

According to the quotient rule and the triangle inequality,

\begin{equation*}
\begin{aligned}
\left|\left(\frac{f_1(\gamma)}{f_2(\gamma)}\right)'\right|
&=\left|\frac{f'_1(\gamma)\cdot f_2(\gamma)-f_1(\gamma)\cdot f'_2(\gamma)}{f_2^2(\gamma)}\right|
\leq\left|\frac{f'_1(\gamma)\cdot f_2(\gamma)}{f_2^2(\gamma)}\right|+\left|\frac{f_1(\gamma)\cdot f'_2(\gamma)}{f_2^2(\gamma)}\right|\\
&=|f'_1(\gamma)|\cdot\frac1{f_2(\gamma)}+\frac{|f'_2(\gamma)|}{f_2(\gamma)}\cdot\frac{f_1(\gamma)}{f_2(\gamma)},
\end{aligned}
\end{equation*}
where we have used that $f_1,f_2\geq 0$.
According to straightforward calculations,
\begin{equation*}
\begin{aligned}
f'_1(\gamma)&=s\cdot\frac{\sqrt{2\gamma}-1}{\gamma^2}\\
f'_2(\gamma)&=\frac{s}{2\gamma^2}\cdot\left(1-\frac{1-s+s/\gamma}{\sqrt{4s+(1-s+s/\gamma)^2}}\right)
=\frac{s}{\gamma^2\sqrt{4s+(1-s+s/\gamma)^2}}\cdot f_2(\gamma).
\end{aligned}
\end{equation*}
Let us first bound $|f'_1(\gamma)|$. For $\gamma \in \left[\frac12,2\right]$, $|f'_1(\gamma)|=f'_1(\gamma)$ and 
\begin{equation*}
\begin{aligned}
0=f''_1(\gamma)=s\cdot\frac{4-3\sqrt{2\gamma}}{2\gamma^3}\iff \gamma\in\left\{\frac89,\infty\right\}
\end{aligned}
\end{equation*}
Since $f'_1(8/9)=27/64=0.421875>f'_1(2)=1/4>f'_1(1/2)=0$, we have $|f'_1(\gamma)|=f'_1(\gamma)\leq 27/64$.

Let us now bound $|f'_2(\gamma)|/f_2(\gamma)=s/\left(\gamma^2\sqrt{4s+(1-s+s/\gamma)^2}\right)$.

$$\gamma^2\geq 1/4 \text{ and } \sqrt{4s+(1-s+s/\gamma)^2}\geq\sqrt{4s}=2\sqrt{s}.$$
Thus
$$\frac{|f'_2(\gamma)|}{f_2(\gamma)}=\frac{s}{\left(\gamma^2\sqrt{4s+(1-s+s/\gamma)^2}\right)}\leq \frac{s}{1/4\cdot2\sqrt{s}}=2\sqrt{s}.$$

Thus, so far we have
\begin{equation*}
\begin{aligned}
\left|\left(\frac{f_1(\gamma)}{f_2(\gamma)}\right)'\right|
&\leq\frac{\frac{27}{64}+2\sqrt{s}\cdot f_1(\gamma)}{f_2(\gamma)}.
\end{aligned}
\end{equation*}
Since $f'_1(\gamma)>0$ for $s>0$ and $\gamma>1/2$, $f_1(\gamma)$ obtains its maximum for $\gamma=2$, i.e., $f_1(\gamma)\leq 1+s/2$.
Since $f'_2(\gamma)>0$ for $s>0$ and $\gamma>0$, $f_2(\gamma)$ obtains its minimum for $\gamma=1/2$, i.e., 

\begin{equation*}
\begin{aligned}
f_2(\gamma)&\geq \frac{\sqrt{s^2+6s+1}-s-1}{2}
=\frac{\left(\sqrt{s^2+6s+1}-s-1\right)\cdot\left(\sqrt{s^2+6s+1}+s+1\right)}{2\cdot\left(\sqrt{s^2+6s+1}+s+1\right)}
=\frac{s^2+6s+1-(s+1)^2}{2\cdot\left(\sqrt{s^2+6s+1}+s+1\right)}\\
&\geq\frac{s^2+6s+1-(s+1)^2}{2\cdot\left(\sqrt{s^2+6s+9}+s+1\right)}
=\frac{4s}{2\cdot\left(\sqrt{(s+3)^2}+s+1\right)}
=\frac{s}{s+2}.
\end{aligned}
\end{equation*}

Putting it together, we obtain
%$$ \left|\left(\frac{f_1(\gamma)}{f_2(\gamma)}\right)'\right|
%\leq2\cdot\frac{\frac{27}{64}+2\sqrt{s}\cdot (1+s/2)}{\left(\sqrt{s^2+6s+1}-s-1\right)}
%=\frac{\frac{27}{32}+2\sqrt{s}\cdot (2+s)}{\left(\sqrt{s^2+6s+1}-s-1\right)}$$

$$ \left|\left(\frac{f_1(\gamma)}{f_2(\gamma)}\right)'\right|
\leq\left(\frac{27}{64}+2\sqrt{s}\cdot (1+s/2)\right)\cdot\frac{s+2}{s}
=\left(\frac{27}{64}+\sqrt{s}\cdot (s+2)\right)\cdot\frac{s+2}{s}.$$
Since $\left(\frac{27}{64}+\sqrt{s}\cdot (s+2)\right)\cdot\frac{s+2}{s}>0$ for $s>0.8757$, it obtains it maximum at $s=80$, where it is $752.198$.

Now, we instead let 

\begin{equation*}
\begin{aligned}
f_1(s)&:=1+s\cdot\frac{(\sqrt{2\gamma}-1)^2}\gamma\\
f_2(s)&:=\frac12\cdot\left(s-s/\gamma-1+\sqrt{4s+(1-s+s/\gamma)^2}\right),
\end{aligned}
\end{equation*}
so that 
$\asymrisk(\overline{\lambda_T},\gamma)/\asymrisk(\overline{\lambda^*},\gamma)=f_1(s)/f_2(s)$.

According to the quotient rule and the triangle inequality,

\begin{equation*}
\begin{aligned}
\left|\left(\frac{f_1(s)}{f_2(s)}\right)'\right|
&=\left|\frac{f'_1(s)\cdot f_2(s)-f_1(s)\cdot f'_2(s)}{f_2^2(s)}\right|
\leq\left|\frac{f'_1(s)\cdot f_2(s)}{f_2^2(s)}\right|+\left|\frac{f_1(s)\cdot f'_2(s)}{f_2^2(s)}\right|\\
&=|f'_1(s)|\cdot\frac1{f_2(s)}+|f'_2(s)|\cdot\frac{f_1(s)}{f^2_2(s)},
\end{aligned}
\end{equation*}
where we have again used that $f_1,f_2\geq 0$.
According to straightforward calculations,
\begin{equation*}
\begin{aligned}
f'_1(s)&=\frac{(\sqrt{2\gamma}-1)^2}{\gamma}\\
f'_2(s)&=\frac{1-(1/\gamma-1)\cdot f_2(s)}{\sqrt{4s+(1-s+s/s)^2}}.
\end{aligned}
\end{equation*}
Let us first bound $|f'_1(s)|$. For $\gamma \in \left[\frac12,2\right]$, $|f'_1(s)|=f'_1(s)\geq 0$, so the maximum is obtained for $\gamma=2$. Thus
$$|f'_1(s)|\leq\frac{(\sqrt{2\cdot 2}-1)^2}{2}=\frac12.$$

Let us now bound $|f'_2(\gamma)|=\left(1-(1/\gamma-1)\cdot f_2(s)\right)/\sqrt{4s+(1-s+s/\gamma)^2}$. For $\gamma\in \left[\frac12, 2\right]$,
$$1/\gamma-1\in\left[-\frac12,1\right] \text{ and } \sqrt{4s+(1-s+s/\gamma)^2}\geq\sqrt{4s}=2\sqrt{s}.$$
Thus
$$|f'_2(s)|=\left|\frac{1-(1/\gamma-1)\cdot f_2(s)}{\sqrt{4s+(1-s+s/s)^2}}\right|\leq \frac{1+f_2(s)}{2\sqrt{s}}.$$

From above, we know that $f_1(s)\leq 1+s/2$, and $f_2(s)\geq s/(s+2)$.

Putting it together, we obtain

\begin{equation*}
\begin{aligned}
\left|\left(\frac{f_1(s)}{f_2(s)}\right)'\right|
&\leq|f'_1(s)|\cdot\frac1{f_2(s)}+|f'_2(s)|\cdot\frac{f_1(s)}{f^2_2(s)}
\leq\frac12\cdot\frac1{f_2(s)}+\frac{1+f_2(s)}{2\sqrt{s}}\cdot\frac{1+s/2}{f^2_2(s)}\\
&=\frac12\cdot\frac{s+2}{s}+\frac{1+\frac{s+2}{s}}{4\sqrt{s}}\cdot(2+s)\cdot\frac{(s+2)^2}{s^2}
=\frac{s+2}{2s}\cdot\left(1+\frac{(s+1)(s+2)}{s\sqrt{s}}\right),
\end{aligned}
\end{equation*}
which, for $s\in[1,80]$, obtains it maximum for $s=1$, where its value is $10.5$.

\end{proof}

\begin{proof}[Proof of Theorems \ref{thm:opt_lbda}, \ref{thm:opt_t}, and \ref{thm:opt_lbda2}]~\\
Let $\{s^2_i\}_{i=1}^n$ denote the singular values of $\Ph\Ph^\top$, where, if $n>p$, $s_i=0$ for $i>p$, and where
$\{s_i\}_{i=1}^{\min(n,p)}$ are the singular values of $\Ph$.

We first note that when a matrix $\A$ is positive semi-definite, its eigenvalues and singular values coincide, and thus $\|\A\|_T=|\Tr(\A)|=\|\A\|_*$. This is the case for parts $(a)$ and $(b)$.

We will first prove all ridge regression expressions, i.e.\ expressions depending on $\lambda$. The proofs for gradient flow, which depend on $t$, are very similar, and done afterwards.\\

\textbf{Ridge Regression ($\lambda$)}\\
\emph{Part (a)}\\
For part \emph{(a)}, let $a_i:=1-\frac{s_i^2}{s_i^2+\lambda}\geq 0$. Then, the singular values of 
$(\I_n-\Ss_\lambda)^\top(\I_n-\Ss_\lambda)$ and $\I_n-\Ss_\lambda$ are $\left\{\left(1-\frac{s_i^2}{s_i^2+\lambda}\right)^2\right\}_{i=1}^n=\left\{a_i^2\right\}_{i=1}^n$, and $\left\{1-\frac{s_i^2}{s_i^2+\lambda}\right\}_{i=1}^n=\left\{a_i\right\}_{i=1}^n$ respectively. Since the $a_i\geq 0$ for all $i$, $\I_n-\Ss_\lambda$ is positive semi-definite, which means that its singular and eigenvalues coincide and thus that its trace equals the sum of its singular values. Consequently

\begin{equation}
\label{eq:gcv_mses}
\begin{aligned}
\frac{\left\|(\I_n-\Ss_\lambda)^\top(\I_n-\Ss_\lambda)\right\|}{\left(\Tr(\I_n-\Ss_\lambda)\right)^2}
&=\begin{cases}
\frac{\sum_{i=1}^na_i^2}{\left(\sum_{i=1}^na_i\right)^2}\text{ for the nuclear (and trace semi) norm.}\\
\sqrt{\frac{\sum_{i=1}^na_i^4}{\left(\sum_{i=1}^na_i\right)^4}}\text{ for the Frobenius norm.}\\
\frac{\max_{i=1,\dots n}a_i^2}{\left(\sum_{i=1}^na_i\right)^2}\text{ for the spectral norm.}\\
\end{cases}
\end{aligned}
\end{equation}
For the nuclear and Frobenius norms, we can use that for any vector, $\vv\in \R^n$, for $d>1$,
\begin{equation*}
\|\vv\|_d=\left(\sum_{i=1}^n|v_i|^d\right)^{1/d}\geq n^{1-1/d}\cdot\|\vv\|_1=n^{1-1/d}\cdot\sum_{i=1}^n|v_i|,
\end{equation*}
with equality iff $v_i=1$ for all $i$. Thus the two corresponding expressions in Equation \ref{eq:gcv_mses} are minimized when $a_i=1$ for all $i$, i.e.\ for $\lambda=\infty$, in which case their values are

\begin{equation*}
\begin{aligned}
\lim_{\lambda\to \infty}\left(\frac{\left\|(\I_n-\Ss_\lambda)^\top(\I_n-\Ss_\lambda)\right\|_*}{\left(\Tr(\I_n-\Ss_\lambda)\right)^2}\right)=\frac1n\\
\lim_{\lambda\to \infty}\left(\frac{\left\|(\I_n-\Ss_\lambda)^\top(\I_n-\Ss_\lambda)\right\|_F}{\left(\Tr(\I_n-\Ss_\lambda)\right)^2}\right)=\frac1{n\sqrt{n}}.
\end{aligned}
\end{equation*}
For the spectral norm, let $a_m:=\max_{i=1,\dots n}a_i$. Then the corresponding expression in Equation \ref{eq:gcv_mses} becomes $a_m^2/\left(\sum_{i=1}^na_i\right)^2=1/\left(\sum_{i=1}^n\frac{a_i}{a_m}\right)^2$. Here, $\frac{a_i}{a_m}\leq 1$ is maximized when $a_i=a_m$ for all $i$, i.e. when $\lambda=\infty$ and $a_i=1$ for all $i$.
In this case,
\begin{equation*}
\lim_{\lambda\to \infty}\left(\frac{\left\|(\I_n-\Ss_\lambda)^\top(\I_n-\Ss_\lambda)\right\|_2}{\left(\Tr(\I_n-\Ss_\lambda)\right)^2}\right)=\frac1{n^2}.
\end{equation*}

~\\

\emph{Part (b)}\\
For part \emph{(b)},
\begin{equation*}
\begin{aligned}
&n\cdot\left\|\frac1n\I_n-\frac1n\Ss^\top_\lambda\Ss_\lambda\right\|\\
=&n\cdot\left\|\frac1n\I_n-\frac1n\Ph(\Ph^\top\Ph+\lambda\I_p)^{-1}\Ph^\top\Ph(\Ph^\top\Ph+\lambda\I_p)^{-1}\Ph^\top\right\|\\
=&\begin{cases}
\sum_{i=1}^n\left|1-\frac{s_i^4}{(s_i^2+\lambda)^2}\right|\text{ for the nuclear (and trace semi) norm.}\\
\sum_{i=1}^n\left(1-\frac{s_i^4}{(s_i^2+\lambda)^2}\right)^2\text{ for the squared Frobenius norm.}\\
\max_{i=1,\dots n}\left|1-\frac{s_i^4}{(s_i^2+\lambda)^2}\right|\text{ for the spectral norm.}\\
\end{cases}
\end{aligned}
\end{equation*}
If $s_i>0$, $\frac{s_i^4}{(s_i^2+\lambda)^2}\in[0,1]$ for $\lambda\geq 0$, and equals 1 only for $\lambda=0$. Hence, $\left|1-\frac{s_i^4}{(s_i^2+\lambda)^2}\right|$ (and thus $\left(1-\frac{s_i^4}{(s_i^2+\lambda)^2}\right)^2$) is minimized for $\lambda=0$, in which case it is 0.
If $s_i=0$, $\left|1-\frac{s_i^4}{(s_i^2+\lambda)^2}\right|= 1$ regardless of $\lambda$. 
For the nuclear and Frobenius norms, since each term of the sum obtains its minimum for $\lambda=0$, so does the sum.
For the spectral norm, the minimum is obtained either for $\lambda=0$, if $\argmax_{s_i}\left|1-\frac{s_i^4}{(s_i^2+\lambda)^2}\right|>0$, or for any value of $\lambda$ if $\argmax_{s_i}\left|1-\frac{s_i^4}{(s_i^2+\lambda)^2}\right|=0$.

Analogously,
\begin{equation*}
\begin{aligned}
n\cdot\left\|\frac1n\I_n-\frac1n\Ss_\lambda\right\|
&=n\cdot\left\|\frac1n\I_n-\frac1n\Ph(\Ph^\top\Ph+\lambda\I_p)^{-1}\Ph^\top\right\|\\
&=\begin{cases}
\sum_{i=1}^n\left|1-\frac{s_i^2}{s_i^2+\lambda}\right|\text{ for the nuclear (and trace semi) norm.}\\
\sum_{i=1}^n\left(1-\frac{s_i^2}{s_i^2+\lambda}\right)^2\text{ for the squared Frobenius norm.}\\
\max_{i=1,\dots n}\left|1-\frac{s_i^2}{s_i^2+\lambda}\right|\text{ for the spectral norm.}\\
\end{cases}
\end{aligned}
\end{equation*}

where, for $\lambda\geq 0$, $\frac{s_i^2}{s_i^2+\lambda}\in[0,1]$ equals 1 only for $\lambda=0$, if $s_i>0$,
and $\frac{s_i^2}{s_i^2+\lambda}=0$ regardless of $\lambda$ if $s_i=0$. 
Again, for the nuclear and Frobenius norms, each term of the sums obtains its minimum for $\lambda=0$, and so do the sums.
For the spectral norm, the minimum is obtained either for $\lambda=0$, if $\argmax_{s_i}\left|1-\frac{s_i^2}{s_i^2+\lambda}\right|>0$, or for any value of $\lambda$ if $\argmax_{s_i}\left|1-\frac{s_i^2}{s_i^2+\lambda}\right|=0$.

~\\

\emph{Part (c)}\\
For part \emph{(c)}, we want to show that the derivative of $\left\|\frac1n\I-\E(\ssv_\lambda\ssvt_\lambda)\right\|$ with respect to $\lambda$ is strictly negative for $\lambda=0$, strictly positive for $M_1<\lambda<M_2$ and non-negative for $\lambda\geq M_2$, in which case the minimum of the norm is obtained for $\lambda\in(0,M_1)$.

Since
\begin{equation*}
\frac1n\I_n-\E(\ssv_\lambda\ssvt_\lambda)=\frac1n\I_n-(\Ph\Ph^\top+\lambda\I_n)^{-1}\Ph\underbrace{\E(\phsv\phsvt)}_{=\I_p}\Ph^\top(\Ph\Ph^\top+\lambda\I_n)^{-1},
\end{equation*}

\begin{equation}
\label{eq:hs_norm} 
\begin{aligned}
&\left\|\frac1n\I_n-\E(\ssv_\lambda\ssvt_\lambda)\right\|_T=\left|\sum_{i=1}^n\left(\frac1n-\frac{s_i^2}{(s_i^2+\lambda)^2}\right)\right|\\
&\left\|\frac1n\I_n-\E(\ssv_\lambda\ssvt_\lambda)\right\|_*=\sum_{i=1}^n\left|\frac1n-\frac{s_i^2}{(s_i^2+\lambda)^2}\right|\\
&\left\|\frac1n\I_n-\E(\ssv_\lambda\ssvt_\lambda)\right\|_F=\sum_{i=1}^n\left(\frac1n-\frac{s_i^2}{(s_i^2+\lambda)^2}\right)^2\\
&\left\|\frac1n\I_n-\E(\ssv_\lambda\ssvt_\lambda)\right\|_2=\max_{i=1,\dots n}\left|\frac1n-\frac{s_i^2}{(s_i^2+\lambda)^2}\right|=\left|\frac1n-\frac{s_m^2}{(s_m^2+\lambda)^2}\right|,
\end{aligned}
\end{equation}
where $s_m:=\argmax_{s_i}\left|\frac1n-\frac{s_i^2}{(s_i^2+\lambda)^2}\right|$.

The needed derivatives with respect to $\lambda$ are
\begin{equation*}
\begin{aligned}
&\frac{\partial}{\partial \lambda}\left(\frac1n-\frac{s_i^2}{(s_i^2+\lambda)^2}\right)
=\frac{2s_i^2}{(s_i^2+\lambda)^3}\\
&\frac{\partial}{\partial \lambda}\left|\frac1n-\frac{s_i^2}{(s_i^2+\lambda)^2}\right|
=\frac{2s_i^2}{(s_i^2+\lambda)^3}\cdot\sgn\left(\frac1n-\frac{s_i^2}{(s_i^2+\lambda)^2}\right)\\
&\frac{\partial}{\partial \lambda}\left(\frac1n-\frac{s_i^2}{(s_i^2+\lambda)^2}\right)^2
=\frac{4s_i^2}{(s_i^2+\lambda)^3}\cdot\left(\frac1n-\frac{s_i^2}{(s_i^2+\lambda)^2}\right),
\end{aligned}
\end{equation*}
and hence
\begin{equation}
\label{eq:der_hs_norm}
\begin{aligned}
&\frac{\partial}{\partial \lambda}\left\|\frac1n\I_n-\E(\ssv_\lambda\ssvt_\lambda)\right\|_T
=\sum_{i=1}^n\frac{2s_i^2}{(s_i^2+\lambda)^3}\cdot\sgn\left(\sum_{i=1}^n\left(\frac1n-\frac{s_i^2}{(s_i^2+\lambda)^2}\right)\right)\\
&\frac{\partial}{\partial \lambda}\left\|\frac1n\I_n-\E(\ssv_\lambda\ssvt_\lambda)\right\|_{*}
=\sum_{i=1}^n\frac{2s_i^2}{(s_i^2+\lambda)^3}\cdot\sgn\left(\frac1n-\frac{s_i^2}{(s_i^2+\lambda)^2}\right)\\
&\frac{\partial}{\partial \lambda}\left\|\frac1n\I_n-\E(\ssv_\lambda\ssvt_\lambda)\right\|_{F}
=\sum_{i=1}^n\frac{4s_i^2}{(s_i^2+\lambda)^3}\cdot\left(\frac1n-\frac{s_i^2}{(s_i^2+\lambda)^2}\right)\\
&\frac{\partial}{\partial \lambda}\left\|\frac1n\I_n-\E(\ssv_\lambda\ssvt_\lambda)\right\|_{2}
=\frac{2s_m^2}{(s_m^2+\lambda)^3}\cdot\sgn\left(\frac1n-\frac{s_m^2}{(s_m^2+\lambda)^2}\right).
\end{aligned}
\end{equation}

We first consider large $\lambda$'s.
For the trace seminorm and the nuclear and Frobenius norm, we note that $\lambda>\sqrt{n}s_i-s_i^2\implies\left(\frac1n-\frac{s_i^2}{(s_i^2+\lambda)^2}\right)>0\implies\sum_{i=1}^n\left(\frac1n-\frac{s_i^2}{(s_i^2+\lambda)^2}\right)>0$, and hence, if $\max_{i=1,\dots n}\left( \sqrt{n}s_i-s_i^2\right)=:M_1<\lambda<\infty$, each term in the three sums in Equation \ref{eq:der_hs_norm} sum is strictly positive for $s_i>0$, and zero for $s_i=0$. Consequently, for $M_1<\lambda<\infty$,
\begin{equation*}
\begin{aligned}
&\frac{\partial}{\partial \lambda}\left\|\frac1n\I_n-\E(\ssv_\lambda\ssvt_\lambda)\right\|_{T}>0\\
&\frac{\partial}{\partial \lambda}\left\|\frac1n\I_n-\E(\ssv_\lambda\ssvt_\lambda)\right\|_{*}>0\\
&\frac{\partial}{\partial \lambda}\left\|\frac1n\I_n-\E(\ssv_\lambda\ssvt_\lambda)\right\|_{F}>0.\\
\end{aligned}
\end{equation*}
For the spectral norm, to obtain $\frac{\partial}{\partial \lambda}\left\|\frac1n\I_n-\E(\ssv_\lambda\ssvt_\lambda)\right\|_{2}>0$ when $\sqrt{n}s_m-s_m^2=:M_m<\lambda<\infty$, we also need $s_m>0$. If $\Ph\Ph^\top$ is non-singular, all its singular values are non-zero, and this is trivially fulfilled. However, if $\Ph\Ph^\top$ is singular, at least one $s_i$ is zero. We call this singular value $s_0$ (i.e. $s_0=0$) and note that $\left|\frac1n-\frac{s_0^2}{(s_0^2+\lambda)^2}\right|=\frac1n$ regardless of $\lambda$.
Furthermore, if $\lambda\geq \sqrt{\frac n2}s_i-s_i^2$ for all $s_i$, then 
$\left|\frac1n-\frac{s_i^2}{(s_i^2+\lambda)^2}\right|\leq \frac1n$. This means that for for $\lambda\geq \max_{i=1,\dots n}\sqrt{\frac n2}s_i-s_i^2$, $s_m=0$ and $\left\|\frac1n\I_n-\E(\ssv_\lambda\ssvt_\lambda)\right\|_{2}=\frac1n$.
~\\

We now consider $\lambda=0$.
For the trace seminorm and for the nuclear and Frobenius norms, if $s_i=0$, the corresponding terms in the sums in Equation \ref{eq:der_hs_norm} are zero, and thus do not contribute when calculating the derivative. For terms where $s_i>0$, $\lim_{\lambda \to 0}\left(\frac1n-\frac{s_i^2}{(s_i^2+\lambda)^2}\right)=\left(\frac1n-\frac{1}{s_i^2}\right)$.
Since $\|\Ph\Ph^\top\|_2< n$ by assumption, $s_i^2\in(0,n)$ for all $i$, and we can write $s_i^2=n-a_i$ for $0< a_i< n$. Hence,

\begin{equation*}
\left(\frac1n-\frac{1}{s_i^2}\right)
=\left(\frac1n-\frac1{n-a_i}\right)=\left(-\frac{a_i}{n(n-a_i)}\right)<0.
\end{equation*}
Thus, since all terms in the three sums in Equation \ref{eq:der_hs_norm} are either 0, if $s_i=0$, or negative, if $s_i>0$, the sums, and hence the derivatives, are negative: For $\lambda=0$,
\begin{equation*}
\begin{aligned}
&\frac{\partial}{\partial \lambda}\left\|\frac1n\I_n-\E(\ssv_\lambda\ssvt_\lambda)\right\|_{T}<0\\
&\frac{\partial}{\partial \lambda}\left\|\frac1n\I_n-\E(\ssv_\lambda\ssvt_\lambda)\right\|_{*}<0\\
&\frac{\partial}{\partial \lambda}\left\|\frac1n\I_n-\E(\ssv_\lambda\ssvt_\lambda)\right\|_{F}<0.
\end{aligned}
\end{equation*}

For the spectral norm, we again need $s_m>0$. If $\Ph\Ph^\top$ is non-singular, this is trivial, while if $\Ph\Ph^\top$ is singular, there exists, by assumption, at least one $s_i^2\in(0,n/2)$, for which $\frac1{s_i^2}-\frac1n>\frac{2}{n}-\frac1n=\frac1n$. Thus, since the maximum in Equation \ref{eq:hs_norm} is obtained for an $i$ such that $s_i>0$, we have $s_m>0$, and 

\begin{equation*}
\left.\frac{\partial}{\partial \lambda}\left\|\frac1n\I_n-\E(\ssv_\lambda\ssvt_\lambda)\right\|_{2}\right|_{\lambda=0}
=\frac{2}{s_m^4}\cdot\sgn\left(\frac1n-\frac{1}{s_m^2}\right).
\end{equation*}
Again, since $\|\Ph\Ph^\top\|_2< n$, $s_m^2\in(0,n)$ and we can write $s_m^2=n-a_m$ for $0< a_m< n$, and 
\begin{equation*}
\left(\frac1n-\frac{1}{s_m^2}\right)
=\left(\frac1n-\frac1{n-a_m}\right)=\left(-\frac{a_m}{n(n-a_m)}\right)<0.
\end{equation*}

\textbf{Gradient Flow ($t$)}\\
\emph{Part (a)}\\
For part \emph{(a)}, let $a_i:=e^{-ts_i^2}\geq 0$. Then, the singular values of 
$(\I_n-\Ss_t)^\top(\I_n-\Ss_t)$ and $\I_n-\Ss_t$ are $\left\{e^{-ts_i^4}\right\}_{i=1}^n=\left\{a_i^2\right\}_{i=1}^n$, and $\left\{e^{-ts_i^2}\right\}_{i=1}^n=\left\{a_i\right\}_{i=1}^n$ respectively. Since the $a_i\geq 0$ for all $i$, $\I_n-\Ss_t$ is positive semi-definite, which means that its singular and eigenvalues coincide and thus that its trace equals the sum of its singular values. Consequently

\begin{equation}
\label{eq:gcv_mses_gf}
\begin{aligned}
\frac{\left\|(\I_n-\Ss_t)^\top(\I_n-\Ss_t)\right\|}{\left(\Tr(\I_n-\Ss_t)\right)^2}
&=\begin{cases}
\frac{\sum_{i=1}^na_i^2}{\left(\sum_{i=1}^na_i\right)^2}\text{ for the nuclear (and trace semi) norm.}\\
\sqrt{\frac{\sum_{i=1}^na_i^4}{\left(\sum_{i=1}^na_i\right)^4}}\text{ for the Frobenius norm.}\\
\frac{\max_{i=1,\dots n}a_i^2}{\left(\sum_{i=1}^na_i\right)^2}\text{ for the spectral norm.}\\
\end{cases}
\end{aligned}
\end{equation}
For the nuclear and Frobenius norms, we can use that for any vector, $\vv\in \R^n$, for $d>1$,
\begin{equation*}
\|\vv\|_d=\left(\sum_{i=1}^n|v_i|^d\right)^{1/d}\geq n^{1-1/d}\cdot\|\vv\|_1=n^{1-1/d}\cdot\sum_{i=1}^n|v_i|,
\end{equation*}
with equality iff $v_i=1$ for all $i$. Thus the two corresponding expressions in Equation \ref{eq:gcv_mses_gf} are minimized when $a_i=1$ for all $i$, i.e.\ for $t=0$, in which case their values are

\begin{equation*}
\begin{aligned}
\lim_{t\to 0}\left(\frac{\left\|(\I_n-\Ss_t)^\top(\I_n-\Ss_t)\right\|_*}{\left(\Tr(\I_n-\Ss_t)\right)^2}\right)=\frac1n\\
\lim_{t\to 0}\left(\frac{\left\|(\I_n-\Ss_t)^\top(\I_n-\Ss_t)\right\|_F}{\left(\Tr(\I_n-\Ss_t)\right)^2}\right)=\frac1{n\sqrt{n}}.
\end{aligned}
\end{equation*}
For the spectral norm, let $a_m:=\max_{i=1,\dots n}a_i$. Then the corresponding expression in Equation \ref{eq:gcv_mses_gf} becomes $a_m^2/\left(\sum_{i=1}^na_i\right)^2=1/\left(\sum_{i=1}^n\frac{a_i}{a_m}\right)^2$. Here, $\frac{a_i}{a_m}\leq 1$ is maximized when $a_i=a_m$ for all $i$, i.e. when $t=0$ and $a_i=1$ for all $i$.
In this case,
\begin{equation*}
\lim_{t\to 0}\left(\frac{\left\|(\I_n-\Ss_t)^\top(\I_n-\Ss_t)\right\|_2}{\left(\Tr(\I_n-\Ss_t)\right)^2}\right)=\frac1{n^2}.
\end{equation*}

~\\

\emph{Part (b)}\\
For part \emph{(b)},
\begin{equation*}
\begin{aligned}
&n\cdot\left\|\frac1n\I_n-\frac1n\Ss^\top_t\Ss_t\right\|
=n\cdot\left\|\frac1n\I_n-\frac1n\left(\I_n-\exp(-t\Ph\Ph^\top)\right)^2\right\|\\
=&\begin{cases}
\sum_{i=1}^n\left|1-\left(1-e^{-ts_i^2}\right)^2\right|\text{ for the nuclear (and trace semi) norm.}\\
\sum_{i=1}^n\left(1-\left(1-e^{-ts_i^2}\right)^2\right)^2\text{ for the squared Frobenius norm.}\\
\max_{i=1,\dots n}\left|1-\left(1-e^{-ts_i^2}\right)^2\right|\text{ for the spectral norm.}\\
\end{cases}
\end{aligned}
\end{equation*}
If $s_i>0$, $e^{-ts_i^2}\in[0,1]$ for $t\geq 0$, and equals 0 only for $t=\infty$. Hence, $\left|1-\left(1-e^{-ts_i^2}\right)^2\right|$ (and thus $\left(1-\left(1-e^{-ts_i^2}\right)^2\right)^2$) is minimized for $t=\infty$, in which case it is 0.
If $s_i=0$, $\left|1-\left(1-e^{-ts_i^2}\right)^2\right|=1$ regardless of $t$. 
For the nuclear and Frobenius norms, since each term of the sum obtains its minimum for $t=\infty$, so does the sum.
For the spectral norm, the minimum is obtained either for $t=\infty$, if $\argmax_{s_i}\left|1-\left(1-e^{-ts_i^2}\right)^2\right|>0$, or for any value of $t$ if $\argmax_{s_i}\left|1-\left(1-e^{-ts_i^2}\right)^2\right|=0$.

Analogously,
\begin{equation*}
\begin{aligned}
n\cdot\left\|\frac1n\I_n-\frac1n\Ss_t\right\|
=&n\cdot\left\|\frac1n\I_n-\frac1n\left(\I_n-\exp(-t\Ph\Ph^\top)\right)\right\|\\
=&\begin{cases}
\sum_{i=1}^n\left|e^{-ts_i^2}\right|\text{ for the nuclear (and trace semi) norm.}\\
\sum_{i=1}^n\left(e^{-ts_i^2}\right)^2\text{ for the squared Frobenius norm.}\\
\max_{i=1,\dots n}\left|e^{-ts_i^2}\right|\text{ for the spectral norm.}\\
\end{cases}
\end{aligned}
\end{equation*}

where, for $t\geq 0$, $e^{-ts_i^2}\in[0,1]$ equals 0 only for $t=\infty$, if $s_i>0$,
and $e^{-ts_i^2}=1$ regardless of $t$ if $s_i=0$. 
Again, for the nuclear and Frobenius norms, each term of the sums obtains its minimum for $t=\infty$, and so do the sums.
For the spectral norm, the minimum is obtained either for $t=\infty$, if $\argmax_{s_i}e^{-ts_i}>0$, or for any value of $t$ if $\argmax_{s_i}e^{-ts_i}=0$.

~\\

\emph{Part (c)}\\
For part \emph{(c)}, we define $\tau=1/t$, and want to show that the derivative of $\left\|\frac1n\I-\E(\ssv_\tau\ssvt_\tau)\right\|$ with respect to $\tau$ is strictly negative for $\tau=0$, strictly positive for $M_1<\tau<M_2$ and non-negative for $\tau\geq M_2$, in which case the minimum of the norm is obtained for $\tau\in(0,M_1)$, i.e.\ for $t>1/M_1$.

Since
\begin{equation*}
\begin{aligned}
&\frac1n\I_n-\E(\ssv_\tau\ssvt_\tau)=\frac1n\I_n-\left(\I_n-\exp(-\Ph\Ph^\top/\tau)\right)(\Ph\Ph^\top)^{-1}\Ph\underbrace{\E(\phsv\phsvt)}_{=\I_p}\Ph^\top(\Ph\Ph^\top)^{-1}(\I_n-\exp(-\Ph\Ph^\top/\tau))\\
&=\frac1n\I_n-\left(\I_n-\exp(-\Ph\Ph^\top/\tau)\right)^2(\Ph\Ph^\top)^{-1},
\end{aligned}
\end{equation*}

\begin{equation}
\label{eq:hs_norm_gf} 
\begin{aligned}
&\left\|\frac1n\I_n-\E(\ssv_\tau\ssvt_\tau)\right\|_T=\left|\sum_{i=1}^n\left(\frac1n-\left(\frac{1-e^{-s_i^2/\tau}}{s_i}\right)^2\right)\right|\\
&\left\|\frac1n\I_n-\E(\ssv_\tau\ssvt_\tau)\right\|_*=\sum_{i=1}^n\left|\frac1n-\left(\frac{1-e^{-s_i^2/\tau}}{s_i}\right)^2\right|\\
&\left\|\frac1n\I_n-\E(\ssv_\tau\ssvt_\tau)\right\|_F=\sum_{i=1}^n\left(\frac1n-\left(\frac{1-e^{-s_i^2/\tau}}{s_i}\right)^2\right)^2\\
&\left\|\frac1n\I_n-\E(\ssv_\tau\ssvt_\tau)\right\|_2=\max_{i=1,\dots n}\left|\frac1n-\left(\frac{1-e^{-s_i^2/\tau}}{s_i}\right)^2\right|=\left|\frac1n-\left(\frac{1-e^{-s_m^2/\tau}}{s_i}\right)^2\right|,
\end{aligned}
\end{equation}
where $s_m:=\argmax_{s_i}\left|\frac1n-\left(\frac{1-e^{-ts_i^2}}{s_i}\right)^2\right|$.

The needed derivatives with respect to $\tau$ are
\begin{equation*}
\begin{aligned}
&\frac{\partial}{\partial \tau}\left(\frac1n-\left(\frac{1-e^{-s_i^2/\tau}}{s_i}\right)^2\right)
=\frac{2}{\tau^2}\cdot \left(1-e^{-s_i^2/\tau}\right)e^{-s_i^2/\tau}\\
&\frac{\partial}{\partial \tau}\left|\frac1n-\left(\frac{1-e^{-s_i^2/\tau}}{s_i}\right)^2\right|
=\frac{2}{\tau^2}\cdot \left(1-e^{-s_i^2/\tau}\right)e^{-s_i^2/\tau}\cdot\sgn\left(\frac1n-\left(\frac{1-e^{-s_i^2/\tau}}{s_i}\right)^2\right)\\
&\frac{\partial}{\partial \tau}\left(\frac1n-\left(\frac{1-e^{-s_i^2/\tau}}{s_i}\right)^2\right)^2
=\frac{4}{\tau^2}\cdot \left(1-e^{-s_i^2/\tau}\right)e^{-s_i^2/\tau}\cdot\left(\frac1n-\left(\frac{1-e^{-s_i^2/\tau}}{s_i}\right)^2\right)\\
\end{aligned}
\end{equation*}
and hence
\begin{equation}
\label{eq:der_hs_norm_gf}
\begin{aligned}
&\frac{\partial}{\partial \tau}\left\|\frac1n\I_n-\E(\ssv_\tau\ssvt_\tau)\right\|_T
=\sum_{i=1}^n\frac{2}{\tau^2}\cdot \left(1-e^{-s_i^2/\tau}\right)e^{-s_i^2/\tau}\cdot\sgn\left(\sum_{i=1}^n\left(\frac1n-\left(\frac{1-e^{-s_i^2/\tau}}{s_i}\right)^2\right)\right)\\
&\frac{\partial}{\partial \tau}\left\|\frac1n\I_n-\E(\ssv_\tau\ssvt_\tau)\right\|_{*}
=\sum_{i=1}^n\frac{2}{\tau^2}\cdot \left(1-e^{-s_i^2/\tau}\right)e^{-s_i^2/\tau}\cdot\sgn\left(\frac1n-\left(\frac{1-e^{-s_i^2/\tau}}{s_i}\right)^2\right)\\
&\frac{\partial}{\partial \tau}\left\|\frac1n\I_n-\E(\ssv_\tau\ssvt_\tau)\right\|_{F}
=\sum_{i=1}^n \frac{4}{\tau^2}\cdot \left(1-e^{-s_i^2/\tau}\right)e^{-s_i^2/\tau} \cdot\left(\frac1n-\left(\frac{1-e^{-s_i^2/\tau}}{s_i}\right)^2\right)\\
&\frac{\partial}{\partial \tau}\left\|\frac1n\I_n-\E(\ssv_\tau\ssvt_\tau)\right\|_{2}
=\frac{2}{\tau^2}\cdot \left(1-e^{-s_m^2/\tau}\right)e^{-s_m^2/\tau} \cdot\sgn\left(\frac1n-\left(\frac{1-e^{-s_m^2/\tau}}{s_i}\right)^2\right).
\end{aligned}
\end{equation}

We first consider large $\tau$'s.
For the trace seminorm and the nuclear and Frobenius norm, we note that $\tau>\frac{s_i^2}{\log\left(\sqrt{n}/(\sqrt n-s_i)\right)}\implies\left(\frac1n-\left(\frac{1-e^{-s_i^2/\tau}}{s_i}\right)^2\right)>0\implies\sum_{i=1}^n\left(\frac1n-\left(\frac{1-e^{-s_i^2/\tau}}{s_i}\right)^2\right)>0$, and hence, if $\max_{i=1,\dots n}\left(\frac{s_i^2}{\log\left(\sqrt{n}/(\sqrt n-s_i)\right)}\right)=:M_1<\tau<\infty$, each term in the three sums in Equation \ref{eq:der_hs_norm_gf} sum is strictly positive for $s_i>0$, and zero for $s_i=0$. Consequently, for $M_1<\tau<\infty$,
\begin{equation*}
\begin{aligned}
&\frac{\partial}{\partial \tau}\left\|\frac1n\I_n-\E(\ssv_\tau\ssvt_\tau)\right\|_{T}>0\\
&\frac{\partial}{\partial \tau}\left\|\frac1n\I_n-\E(\ssv_\tau\ssvt_\tau)\right\|_{*}>0\\
&\frac{\partial}{\partial \tau}\left\|\frac1n\I_n-\E(\ssv_\tau\ssvt_\tau)\right\|_{F}>0.\\
\end{aligned}
\end{equation*}
For the spectral norm, to obtain $\frac{\partial}{\partial \tau}\left\|\frac1n\I_n-\E(\ssv_\tau\ssvt_\tau)\right\|_{2}>0$ when $s_m^2/\log\left(\sqrt{n}/(\sqrt n-s_i)\right)=:M_m<\tau<\infty$, we also need $s_m>0$. If $\Ph\Ph^\top$ is non-singular, all its singular values are non-zero, and this is trivially fulfilled. However, if $\Ph\Ph^\top$ is singular, at least one $s_i$ is zero. We call this singular value $s_0$ (i.e. $s_0=0$) and note (by Taylor expanding $e^{-s_0^2/\tau}$) that $\left|\frac1n-\left(\frac{1-e^{-s_0^2/\tau}}{s_0}\right)^2\right|=\frac1n$ regardless of $\tau$.
Furthermore, if $\tau\geq s_i^2/\log\left(\sqrt{n}/(\sqrt n-s_i\sqrt 2)\right)$ for all $s_i$, then 
$\left|\frac1n-\left(1-e^{-s_i^2/\tau}\right)^2\right|\leq \frac1n$. This means that for for $\tau\geq \max_{i=1,\dots n}s_i^2/\log\left(\sqrt{n}/(\sqrt n-s_i\sqrt 2)\right)$, $s_m=0$ and $\left\|\frac1n\I_n-\E(\ssv_\tau\ssvt_\tau)\right\|_{2}=\frac1n$.
~\\

We now consider $\tau=0$.
For the trace seminorm and for the nuclear and Frobenius norms, if $s_i=0$, the corresponding terms in the sums in Equation \ref{eq:der_hs_norm_gf} are zero, and thus do not contribute when calculating the derivative. For terms where $s_i>0$, $\lim_{\tau \to 0}\left(\frac1n-\left(\frac{1-e^{-s_i^2/\tau}}{s_i}\right)^2\right)=\left(\frac1n-\frac{1}{s_i^2}\right)$.
Since $\|\Ph\Ph^\top\|_2< n$ by assumption, $s_i^2\in(0,n)$ for all $i$, and we can write $s_i^2=n-a_i$ for $0< a_i< n$. Hence,

\begin{equation*}
\left(\frac1n-\frac{1}{s_i^2}\right)
=\left(\frac1n-\frac1{n-a_i}\right)=\left(-\frac{a_i}{n(n-a_i)}\right)<0.
\end{equation*}
Thus, since all terms in the three sums in Equation \ref{eq:der_hs_norm_gf} are either 0, if $s_i=0$, or negative, if $s_i>0$, the sums, and hence the derivatives, are negative: For $\tau=0$,
\begin{equation*}
\begin{aligned}
&\frac{\partial}{\partial \tau}\left\|\frac1n\I_n-\E(\ssv_\tau\ssvt_\tau)\right\|_{T}<0\\
&\frac{\partial}{\partial \tau}\left\|\frac1n\I_n-\E(\ssv_\tau\ssvt_\tau)\right\|_{*}<0\\
&\frac{\partial}{\partial \tau}\left\|\frac1n\I_n-\E(\ssv_\tau\ssvt_\tau)\right\|_{F}<0.
\end{aligned}
\end{equation*}

For the spectral norm, we again need $s_m>0$. If $\Ph\Ph^\top$ is non-singular, this is trivial, while if $\Ph\Ph^\top$ is singular, there exists, by assumption, at least one $s_i^2\in(0,n/2)$, for which $\frac1{s_i^2}-\frac1n>\frac{2}{n}-\frac1n=\frac1n$. Thus, since the maximum in Equation \ref{eq:hs_norm_gf} is obtained for an $i$ such that $s_i>0$, we have $s_m>0$, and 

\begin{equation*}
\left.\frac{\partial}{\partial \tau}\left\|\frac1n\I_n-\E(\ssv_\tau\ssvt_\tau)\right\|_{2}\right|_{\tau=0}
=\frac{2}{s_m^4}\cdot\sgn\left(\frac1n-\frac{1}{s_m^2}\right).
\end{equation*}
Again, since $\|\Ph\Ph^\top\|_2< n$, $s_m^2\in(0,n)$ and we can write $s_m^2=n-a_m$ for $0< a_m< n$, and 
\begin{equation*}
\left(\frac1n-\frac{1}{s_m^2}\right)
=\left(\frac1n-\frac1{n-a_m}\right)=\left(-\frac{a_m}{n(n-a_m)}\right)<0.
\end{equation*}

\end{proof}

\begin{proof}[Proof of Theorem \ref{thm:snn_k}]~\\
For gradient descent with momentum, $\thetahv$ is updated according to
\begin{equation*}
\begin{aligned}
&\thetahv_{-1}=\thetahv_0,\\
&\thetahv_{k+1}=\thetahv_k+\gamma\cdot\left(\thetahv_k-\thetahv_{k-1}\right)-\eta\cdot\frac{\partial L(\fhv_k,\yv)}{\partial \thetav_k},\ k=0,1,\dots,
\end{aligned}
\end{equation*}
where, according to the chain rule,
\begin{equation*}
\frac{\partial L(\fhv_k,\yv)}{\partial \thetav_k}
=\left(\frac{\partial \fhv_k}{\partial \thetav_k}\right)^\top\frac{\partial L(\fhv_k,\yv)}{\partial \fhv_k}
=\left(\frac{\partial \fhv_k}{\partial \thetav_k}\right)^\top\IFk\cdot(\fhv_k-\yv),
\end{equation*}
where $\IFk=\I_n$ for the squared loss and $\IFk=\bm{\tilde{F}}_k$ for the cross-entropy error (see Equation \ref{eq:F_cross_entr}).
That is, $\thetahv$ is updated according to
\begin{equation*}
\label{eq:theta_upd}
\begin{aligned}
&\thetahv_{-1}=\thetahv_0,\\
&\thetahv_{k+1}=\thetahv_k+\gamma\cdot\left(\thetahv_k-\thetahv_{k-1}\right)+\eta\left(\frac{\partial \fhv_k}{\partial \thetav_k}\right)^\top\IFk\cdot(\yv-\fhv_k),\ k=0,1,\dots.
\end{aligned}
\end{equation*}
The proof consists of two steps: First, we define the function $\fhv'^\star_k$ according to 
\begin{equation*}
\begin{aligned}
&\fhv'^\star_{-1}=\fhv'^\star_0=\fhv^\star(\thetahv_0)\\
&\fhv'^\star_{k+1}=\fhv'^\star_k+\gamma\cdot\left(\fhv'^\star_k-\fhv'^\star_{k-1}\right)+\eta\cdot\Kt^\star_{k+1}\cdot\left(\yv-\fhv'_k\right),\ k=0,1,\dots
\end{aligned}
\end{equation*}
and show that
\begin{equation*}
\begin{aligned}
&\Ss^\star_{-1}=\Ss^\star_0=\nv\\
&\Ss^\star_{k+1}=\Ss^\star_k+\gamma\cdot\left(\Ss^\star_{k}-\Ss^\star_{k-1}\right)+\eta\cdot \Kt^\star_{k+1}\cdot\left(\I_n-\Ss_{k}\right),\ k=0,1,\dots
\end{aligned}
\end{equation*}
implies that 
\begin{equation}
\label{eq:fpH}
\fhv'^\star_k=\Ss^\star_k\left(\yv-\fhv(\thetahv_0)\right)+\fhv^\star(\thetahv_0).
\end{equation}

Then, we show that there exists a $C<\infty$, such that 
\begin{equation}
\label{eq:ffp}
\left\|\fhv^\star(\thetahv_k)-\fhv'^\star_k \right\|_\infty\leq \eta \cdot C.
\end{equation}
To alleviate notation, we will write $\fhv^\star_k:=\fhv^\star(\thetahv_k)$.

We first show Equation \ref{eq:fpH} by induction:

For $k=0$,
\begin{equation*}
\begin{aligned}
&\Ss^\star_0=\nv\\
&\fhv'^\star_0=\fhv_0=\nv=\nv\cdot\left(\yv-\nv\right)+\nv=\Ss^\star_0\cdot\left(\yv-\fhv_0\right)+\fhv^\star_0.
\end{aligned}
\end{equation*}
For $k=1$,
\begin{equation*}
\begin{aligned}
\Ss^\star_1&=
\Ss^\star_0+\gamma\cdot\left(\Ss^\star_0-\Ss^\star_{-1}\right)+\eta\cdot\Kt^\star_1\left(\I_n-\Ss_0\right)
=\nv+\gamma\cdot\left(\nv-\nv\right)+\eta\cdot\Kt^\star_1\left(\I_n-\nv\right)=\eta\cdot\Kt^\star_1\\
\fhv'^\star_1&=\fhv'^\star_0+\gamma\cdot\left(\fhv'^\star_0-\fhv'^\star_{-1}\right)+\eta\cdot\Kt^\star_1\left(\yv-\fhv'_0\right)
=\fhv^\star_0+\gamma\cdot\left(\fhv^\star_0-\fhv^\star_0\right)+\eta\cdot\Kt^\star_1\left(\yv-\fhv_0\right)\\
&=\Ss^\star_1\left(\yv-\fhv_0\right)+\fhv^\star_0.
\end{aligned}
\end{equation*}
For $k+1\geq 2$,

\begin{equation*}
\begin{aligned}
\fhv'^\star_{k-1}=&\Ss^\star_{k-1}\left(\yv-\fhv_0\right)+\fhv^\star_0\\
\fhv'^\star_k=&\Ss^\star_k\left(\yv-\fhv_0\right)+\fhv^\star_0\\
\Ss^\star_{k+1}=&\Ss^\star_k+\gamma\cdot\left(\Ss^\star_k-\Ss^\star_{k-1}\right)+\eta\cdot \Kt^\star_{k+1}\cdot\left(\I_n-\Ss_k\right)\\
\fhv'^\star_{k+1}=&\fhv'^\star_k+\gamma\cdot\left(\fhv'^\star_k-\fhv'^\star_{k-1}\right)+\eta\cdot\Kt^\star_{k+1}\left(\yv-\fhv'_k\right)\\
=&\Ss^\star_k\left(\yv-\fhv_0\right)+\fhv^\star_0+\gamma\cdot\left(\Ss^\star_k\left(\yv-\fhv_0\right)+\fhv^\star_0-\Ss^\star_{k-1}\left(\yv-\fhv_0\right)-\fhv^\star_0\right)\\
&+\eta\cdot\Kt^\star_{k+1}\left(\yv-\Ss^\star_k\left(\yv-\fhv_0\right)-\fhv^\star_0\right)\\
=&\left(\Ss^\star_k+\gamma\cdot\left(\Ss^\star_k-\Ss^\star_{k-1}\right)\right)\left(\yv-\fhv_0\right)+\eta\cdot\Kt^\star_{k+1}\left(\I_n-\Ss_k\right)\left(\yv-\fhv_0\right)+\fhv^\star_0\\
=&\Ss^\star_{k+1}\left(\yv-\fhv_0\right)+\fhv^\star_0.
\end{aligned}
\end{equation*}

The proof of Equation \ref{eq:ffp} is to a large extent based on the mean value inequality,
\begin{equation}
\label{eq:mvi}
\left\|\bm{f}(x+\Delta x)-\bm{f}(x)\right\|_2\leq \max_x\left\|\frac{\partial\bm{f}(x)}{\partial x}\right\|_2\cdot |\Delta x|,
\end{equation}
and the second-order Taylor expansion,
\begin{equation}
\label{eq:tay}
f(\xv+\bm{\Delta}\xv)=f(\xv)+\bm{\Delta}\xv^\top\left(\frac{\partial f(\xv)}{\partial \xv}\right)+r,\  |r|\leq \frac12\|\bm{\Delta}\xv\|_2^2\cdot \max_{\xv}\left\|\frac{\partial^2f(\xv)}{\partial\xv^2}\right\|_2.
\end{equation}

We first show that 
\begin{equation}
\label{eq:fk1}
\begin{aligned}
&\fhv^\star_{k+1} =\fhv^\star_k+\gamma\cdot\left(\fhv^\star_k-\fhv^\star_{k-1}\right)+\eta\cdot\Kt^\star_{k+1}\cdot\left(\yv-\fhv^\star_k\right)+\eta^2\cdot\bm{r}\\
&\text{where } \|\bm{r}\|_\infty\leq M_1
:=\frac12M_{f_{\theta^2}}\left(\gamma M_{\theta_t}^2 +  \left(\gamma M_{\theta_t}+M_y M_{f_\theta}\right)^2\right),
\end{aligned}
\end{equation}
for
\begin{equation*}
\begin{aligned}
&M_{\theta_t}:=\max_t\left\|\frac{\partial\thetahv(t)}{\partial t}\right\|_2,\ 
M_{f_\theta}:=\max_t\left\|\frac{\partial\fhv^\star(t)}{\partial\thetahv}\right\|_2,\ 
M_{f_{\theta^2}}:=\max_t\left\|\frac{\partial^2\fh^\star(t)}{\partial \thetahv^2}\right\|_2,\\ 
&M_y:=\max_t\left\|\yv-\fhv^\star_k\right\|_2.
\end{aligned}
\end{equation*}

To keep track of $\eta$, we switch between reminder terms of the form $\tilde{r}$, which include $\eta$, and $r$, which do not, i.e.\ $\tilde{r}=\eta r$.

By applying Equation \ref{eq:mvi} to $\thetahv(t)$, where $\thetahv_k=\thetahv(k\eta)$, we obtain
\begin{equation*}
\begin{aligned}
\left\|\thetahv_k-\thetahv_{k-1}\right\|_2=\left\|\thetahv\left(k\eta\right)-\thetahv\left((k-1)\eta\right)\right\|_2\leq \max_t\left\|\frac{\partial \thetahv(t)}{\partial t}\right\|_2\cdot |k\eta-(k-1)\eta|=M_{\theta_t}\cdot\eta
\end{aligned}
\end{equation*}

Applying Equation \ref{eq:tay} element-wise to $\fhv^\star_k$, in combination with Equation \ref{eq:mvi}, we obtain
\begin{equation*}
\begin{aligned}
\fh^\star_k-\fh^\star_{k-1}&=
\fh^\star\left(\thetahv_k\right)-\fh^\star\left(\thetahv_{k-1}\right)
=\fh^\star\left(\thetahv_k\right)-\fh^\star\left(\thetahv_k-(\thetahv_k-\thetahv_{k-1})\right)\\
&=\fh^\star\left(\thetahv_k\right)-\left(\fh^\star\left(\thetahv_k\right)-\left(\thetahv_k-\thetahv_{k-1}\right)^\top\left(\frac{\partial\fh^\star_k}{\partial \thetahv}\right)+\tilde{r}_1\right)\\
&=\left(\thetahv_k-\thetahv_{k-1}\right)^\top\left(\frac{\partial\fh^\star_k}{\partial \thetahv_k}\right)-\tilde{r}_1
\iff\left(\thetahv_k-\thetahv_{k-1}\right)^\top\left(\frac{\partial \fh^\star_k}{\partial \thetahv_k}\right)=\fh^\star_k-\fh^\star_{k-1}+\tilde{r}_1,\\
|\tilde{r}_1|&\leq\frac12\left\|\thetahv_k-\thetahv_{k-1}\right\|_2^2\cdot \max_t\left\|\frac{\partial^2 \fh^\star(t)}{\partial \thetahv_k^2}\right\|_2\leq\frac12\eta^2 M_{\theta_t}^2 M_{f_{\theta^2}}\\
\iff& \left(\thetahv_k-\thetahv_{k-1}\right)^\top\left(\frac{\partial\fh^\star_k}{\partial \thetahv_k}\right)=\fh^\star_k-\fh^\star_{k-1}+\eta^2\cdot r_1,\ |r_1|\leq\frac12M_{\theta_t}^2M_{f_{\theta^2}}.
\end{aligned}
\end{equation*}

With $\kvt^\star_k:=\IFk^\top\cdot\frac{\partial\fhv^\star_k}{\partial \thetahv_k}\cdot\frac{\partial\fh^\star_k}{\partial\thetahv_k}$, we obtain

\begin{equation*}
\begin{aligned}
\fh^\star_{k+1}=&
\fh^\star\left(\thetahv_{k+1}\right)
=\fh^\star\left(\thetahv_k+\gamma\cdot\left(\thetahv_k-\thetahv_{k-1}\right)+\eta\cdot\left(\frac{\partial\fhv^\star_k}{\partial \thetahv_k}\right)^\top\IFk\left(\yv-\fhv_k\right)\right)\\
=&\fh^\star\left(\thetahv_k\right)+\gamma\cdot\left(\thetahv_k-\thetahv_{k-1}\right)^\top\left(\frac{\partial\fh^\star_k}{\partial \thetahv_k}\right)+\left(\eta\cdot\left(\frac{\partial \fhv^\star_k}{\partial\thetahv_k}\right)^\top\IFk\left(\yv-\fhv_k\right)\right)^\top\left(\frac{\partial \fh^\star_k}{\partial\thetahv_k}\right) +\tilde{r}_2\\
=&\fh^\star_k+\gamma\cdot\left(\fh^\star_k-\fh^\star_{k-1}\right)+\gamma\cdot \eta^2\cdot r_1+\eta\cdot\left(\yv-\fhv_k\right)^\top\kvt^\star_k+\tilde{r}_2\\
=&\fh^\star_k+\gamma\cdot\left(\fh^\star_k-\fh^\star_{k-1}\right)+\eta\cdot\kvt^\star_k{^\top}\left(\yv-\fhv_k\right)+\underbrace{\tilde{r}}_{=:\gamma\cdot \eta^2\cdot r_1+\tilde{r}_2},\\
|\tilde{r}_2|\leq &\frac12\cdot \left\|\gamma\cdot\left(\thetahv_k-\thetahv_{k-1}\right)+\eta\cdot\left(\frac{\partial \fhv^\star_k}{\partial\thetahv_k}\right)^\top\left(\yv-\fhv_k\right)\right\|^2\cdot M_{f_{\theta^2}}\\
\leq&\frac12\cdot\left(\gamma^2\cdot\left\|\thetahv_k-\thetahv_{k-1}\right\|^2+2\gamma\cdot\left\|\thetahv_k-\thetahv_{k-1}\right\|\cdot\eta \left\|\frac{\partial\fhv^\star_k}{\partial\thetahv_k}\right\| \cdot\left\|\yv-\fhv_k\right\|\right.\\
&\left.+\eta^2 \left\|\frac{\partial \fhv^\star_k}{\partial\thetahv_k}\right\|^2\cdot\left\|\yv-\fhv_k\right\|^2\right)\cdot M_{f_{\theta^2}}\\
\leq&\frac12\left(\gamma^2\eta^2M_{\theta_t}^2+2\gamma\eta^2 M_{\theta_t} M_{f_\theta}M_y+\eta^2 M_{f_\theta}^2M_y^2\right)\cdot M_{f_{\theta^2}}\\
=&\frac12\eta^2M_{f_{\theta^2}}\left(\gamma^2M_{\theta_t}^2+2\gamma M_{\theta_t} M_{f_\theta}M_y+M_{f_\theta}M_y^2\right)
=\frac12\eta^2M_{f_{\theta^2}}\left(\gamma M_{\theta_t}+M_y M_{f_\theta}\right)^2\\
|\tilde{r}|\leq&\eta^2\cdot\gamma\cdot|r_1|+|\tilde{r}_2|
\leq\eta^2\cdot\gamma\cdot\frac12M_{\theta_t}^2 M_{f_{\theta^2}}+\frac12\eta^2M_{f_{\theta^2}}\left(\gamma M_{\theta_t}+M_y M_{f_\theta}\right)^2\\
=&\frac{\eta^2}2\cdot M_{f_{\theta^2}}\left(\gamma M_{\theta_t}^2 +  \left(\gamma M_{\theta_t}+M_y M_{f_\theta}\right)^2\right)\\
\iff&\fh^\star_{k+1}=\fh^\star_k+\gamma\cdot(\fh^\star_k-\fh^\star_{k-1})+\eta\cdot\kvt^\star_k{^\top}\left(\yv-\fhv_k\right)+\eta^2\cdot r,\\
&|r|\leq\frac{\eta^2}2\cdot M_{f_{\theta^2}}\left(\gamma M_{\theta_t}^2 +  \left(\gamma M_{\theta_t}+M_y M_{f_\theta}\right)^2\right)\\
\implies&\fhv^\star_{k+1} =\fhv^\star_k+\gamma\cdot\left(\fhv^\star_k-\fhv^\star_{k-1}\right)+\eta\cdot\Kt^\star_k\left(\yv-\fhv_k\right)+\eta^2\cdot\bm{r}\\
&\|\bm{r}\|_\infty\leq\frac{\eta^2}2\cdot M_{f_{\theta^2}}\left(\gamma M_{\theta_t}^2 +  \left(\gamma M_{\theta_t}+M_y M_{f_\theta}\right)^2\right)=:\eta^2\cdot M_1,
\end{aligned}
\end{equation*}
which proves Equation \ref{eq:fk1}.

To prove Equation \ref{eq:ffp}, we first apply Equation \ref{eq:tay} element-wise to $\fhv^\star_k$, to obtain
\begin{equation*}
\begin{aligned}
&\left(\fh^\star_{k+1}-\fh^\star_k\right)-\left(\fh^\star_k-\fh^\star_{k-1}\right)=
\left(\fh^\star\left((k+1)\eta\right)-\fh^\star\left(k\eta\right)\right)-\left(\fh^\star\left(k\eta\right)-\fh^\star\left((k-1)\eta\right)\right)\\
&=\left(\fh^\star\left(k\eta\right)+\eta\cdot\partial_t\fh^\star\left(k\eta\right)+ \tilde{r}_{f_1}-\fh^\star\left(k\eta\right)\right)
-\left(\fh^\star\left(k\eta\right)-\fh^\star\left(k\eta\right)+\eta\cdot\partial_t\fh^\star\left(k\eta\right)-\tilde{r}_{f_2}\right)\\
&=\tilde{r}_{f_1}+\tilde{r}_{f_2}=:\tilde{r}_f,\ |\tilde{r}_f|\leq 2\cdot \frac12\eta^2\cdot\max_t\left|\frac{\partial^2\fh^\star(t)}{\partial t^2}\right|\\
\iff&\fh^\star_k-\fh^\star_{k-1}=\fh^\star_{k+1}-\fh^\star_k+\eta^2\cdot r_f,\ |r_f|\leq M_{f_t^2}\\
\implies&\fhv^\star_k-\fhv^\star_{k-1}=\fhv^\star_{k+1}-\fhv^\star_k+\eta^2\cdot \bm{r_f},\ \|\bm{r_f}\|_\infty\leq M_{f_t^2},
\end{aligned}
\end{equation*}
where
\begin{equation*}
M_{f_t^2}:=\max\left(\max_t\left|\frac{\partial^2 \fh^\star(t)}{\partial t^2}\right|,\ \max_t\left|\frac{\partial^2 \fh'^\star(t)}{\partial t^2}\right|\right).
\end{equation*}
Thus,
\begin{equation}
\label{eq:ek}
\begin{aligned}
\fhv^\star_{k+1}-\fhv'^\star_{k+1}
=&\fhv^\star_k+\gamma\cdot(\fhv^\star_k-\fhv^\star_{k-1})+\eta\cdot\Kt^\star_k\left(\yv-\fhv_k\right)+\eta^2\cdot\bm{r}\\
&-\fhv'^\star_k-\gamma\cdot(\fhv'^\star_k-\fhv'^\star_{k-1})-\eta\cdot\Kt^\star_k\left(\yv-\fhv'_k\right)\\
=&\fhv^\star_k+\gamma\cdot(\fhv^\star_{k+1}-\fhv^\star_k)+\eta\cdot\Kt^\star_k\left(\yv-\fhv_k\right)\\
&-\fhv'^\star_k-\gamma\cdot(\fhv'^\star_{k+1}-\fhv'^\star_k)-\eta\cdot\Kt^\star_k\left(\yv-\fhv'_k\right)+\eta^2\cdot\underbrace{\left(\bm{r_f}+\bm{r}-\bm{r_{f'}}\right)}_{=:\bm{r_3}}\\
=&\fhv^\star_k-\fhv'^\star_k+\gamma\cdot\left(\left(\fhv^\star_{k+1}-\fhv'^\star_{k+1}\right)-\left(\fhv^\star_k-\fhv'^\star_k\right)\right)-\eta\cdot \Kt^\star_k\left(\fhv^\star_k-\fhv'^\star_k\right) + \eta^2\cdot\bm{r_3}\\
\|\bm{r_3}\|_\infty \leq&\eta^2\cdot M_1+\eta^2\cdot 2M_{f_t^2} =:\eta^2\cdot \tilde{C}.
\end{aligned}
\end{equation}

For $\ehv_k:=\fhv^\star_k-\fhv'^\star_k$, and $\eh_k:=\|\ehv_k\|_\infty$, we obtain, from Equation \ref{eq:ek},
\begin{equation*}
\begin{aligned}
\ehv_{k+1}&=\ehv_k+\gamma\cdot\left(\ehv_{k+1}-\ehv_k\right)-\eta\cdot\Kt^\star_k\cdot\ehv_k + \eta^2\cdot\bm{r_3}\\
(1-\gamma)\eh_{k+1}&\leq(1-\gamma)\eh_k+\eta\cdot M_{f_\theta}^2\cdot \eh_k + \eta^2\tilde{C}\\
\eh_{k+1}&\leq\left(1+\frac\eta{1-\gamma}\cdot M_{f_\theta}^2\right)\eh_k + \frac{\eta^2}{1-\gamma}\tilde{C}.
\end{aligned}
\end{equation*}
Since, by definition, $\eh_0=\left\|\fhv^\star_0-\fhv'^\star_0\right\|_\infty=\left\|\fhv^\star_0-\fhv^\star_0\right\|_\infty=0$, recursively applying this inequality above renders the  geometric series

\begin{equation*}
\begin{aligned}
\eh_k&\leq \sum_{l=0}^k\left(1+\frac\eta{1-\gamma}\cdot M_{f_\theta}^2\right)^l\cdot\frac{\eta^2}{1-\gamma}\tilde{C}
=\frac{\left(1+\frac\eta{1-\gamma}\cdot M_{f_\theta}^2\right)^k-1}{1+\frac\eta{1-\gamma}\cdot M_{f_\theta}^2-1}\cdot\frac{\eta^2}{1-\gamma}\tilde{C}\\
&=\frac{\left(1+\frac\eta{1-\gamma}\cdot M_{f_\theta}^2\right)^k-1}{M_{f_\theta}^2}\cdot\eta \tilde{C}
\leq\frac{e^{\frac1{1-\gamma}\cdot M_{f_\theta}^2\cdot \eta k}}{M_{f_\theta}^2}\cdot\eta \tilde{C}
=:\eta\cdot C,
\end{aligned}
\end{equation*}
for
\begin{equation*}
C:=\frac{e^{\frac1{1-\gamma}\cdot M_{f_\theta}^2\cdot \eta k}}{M_{f_\theta}^2}\cdot\left(M_1+2M_{f_t^2}\right),
\end{equation*}
where in the last inequality we have used that for $x>-1$, $(1+x)^k\leq e^{xk}$.

\end{proof}

\newpage

\begin{proof}[Proof of Proposition \ref{thm:cross_entr}]~\\
Let $\oslash$ denote element-wise division. Then
\begin{equation*}
\begin{aligned}
\frac{\partial \Lt(\fhiv,\yiv)}{\partial\fh_{ij}}
=&-\frac{\partial}{\partial \fh_{ij}}\left(\sum_{j=1}^{c-1}y_{ij}\cdot\log(\fh_{ij})+\left(1-\sum_{j=1}^{c-1}y_{ij}\right)\cdot\log\left(1-\sum_{j=1}^{c-1}\fh_{ij}\right)\right)\\
=&-\left(\frac{y_{ij}}{\fh_{ij}}-\frac{1-\sum_{j=1}^{c-1}y_{ij}}{1-\sum_{j=1}^{c-1}\fh_{ij}}\right)
\implies \frac{\partial L(\fhiv)}{\partial \fhiv}=\frac{1-\sum_{j=1}^{c-1}y_{ij}}{1-\sum_{j=1}^{c-1}\fh_{ij}}\cdot\bm{1}-\yiv\oslash\fhiv\\
=&\frac{1-\sum_{j=1}^{c-1}y_{ij}-\left(1-\sum_{j=1}^{c-1}\fh_{ij}\right)}{1-\sum_{j=1}^{c-1}\fh_{ij}}\cdot\bm{1}+\frac{1-\sum_{j=1}^{c-1}\fh_{ij}}{1-\sum_{j=1}^{c-1}\fh_{ij}}\cdot\bm{1}-\yiv\oslash\fhiv\\
=&\frac{\sum_{j=1}^{c-1}\fh_{ij}-\sum_{j=1}^{c-1}y_{ij}}{1-\sum_{j=1}^{c-1}\fh_{ij}}\cdot\bm{1}+\bm{1}-\yiv\oslash\fhiv \\
&=\frac{\bm{1}^\top(\fhiv-\yiv)}{1-\sum_{j=1}^{c-1}\fh_{ij}}\cdot\bm{1}+\fhiv\oslash\fhiv-\yiv\oslash\fhiv\\
=&\frac{1}{1-\sum_{j=1}^{c-1}\fh_{ij}}\cdot\bm{1}\bm{1}^\top(\fhiv-\yiv)+(\Fhi)^{-1}(\fhiv-\yiv)\\
=&\left((\Fhi)^{-1}+\frac1{1-\sum_{j=1}^{c-1}\fh_{ij}}\cdot\bm{1}\bm{1}^\top\right)\left(\fhiv-\yiv\right).
\end{aligned}
\end{equation*}
\end{proof}

\end{document}